\documentclass[12pt]{article}

\usepackage[T1]{fontenc}
\usepackage[utf8]{inputenc}

\usepackage{amsmath}
\usepackage{amsfonts}
\usepackage{amssymb}
\usepackage{amsthm}

\usepackage{graphicx}
\usepackage{subcaption}
\usepackage{adjustbox}
\usepackage{multirow}
\usepackage{booktabs}
\usepackage{cellspace}

\usepackage{algorithm}
\usepackage{algorithmic}

\usepackage{natbib}
\usepackage{hyperref}

\usepackage{microtype}

\usepackage{pgfplots}
\pgfplotsset{compat=1.18}

\usepackage[most]{tcolorbox}

\usepackage{authblk}

\newtheorem{theorem}{Theorem}
\newtheorem{axiom}{Axiom}

\hypersetup{
	colorlinks=true,
	allcolors=black,
	pdfborder={0 0 0},
}

\begin{document}
	
	\title{Regime-Aware Portfolio Management via Retrieval-Augmented LLM-Guided Expert Switching}
	
\author[1]{Ahmad Asadi}
\author[1]{Reza Safabakhsh}
\affil[1]{Deep Learning Lab, Computer Engineering Department, Amirkabur University of Technology, Hafez Avenue, Tehran, Iran}	
	\maketitle
	
	\begin{abstract}
		Financial markets are inherently non-stationary, making the effectiveness of individual portfolio-management strategies highly dependent on changing market conditions. This work proposes a retrieval-augmented expert-switching framework that dynamically selects portfolio-management experts based on their historical performance under similar market situations. A dual-stream variational autoencoder represents asset-level and market-wide information, while a retrieval-based knowledge base stores historical situations and expert performance. During inference, an instruction-tuned LLM reasons over the retrieved evidence to identify the most appropriate expert rather than directly generating portfolio actions. We further establish a monotonicity property showing that adding a locally superior expert cannot degrade the switching mechanism's performance. Experiments across cryptocurrency, stock, and foreign-exchange markets show that the proposed selector achieves the highest cumulative return and Sharpe ratio among the evaluated selection strategies in all three markets. In the stock market, for example, cumulative return increases from 26\% for the best fixed expert to 34\%, while the Sharpe ratio improves from 0.74 to 0.96. Ablation results confirm the importance of both retrieval and LLM reasoning, while experiments with different expert-pool sizes demonstrate the value of complementary expertise. Overall, the findings support retrieval-grounded expert switching as an effective approach to adaptive portfolio management in non-stationary financial environments.
	\end{abstract}
	
	\noindent\textbf{Keywords:} Knowledge-based systems, Retrieval-augmented generation, Large language models, Expert switching, Portfolio management, Financial markets
	
	\vspace{1em}
	
	\section{Introduction}
Portfolio management in financial markets, particularly within the context of high-frequency trading, presents significant challenges due to the inherent non-stationarity of real-world data, abrupt price fluctuations driven by exogenous shocks and news events, often-irrational trader behavior, and intricate interdependencies among financial instruments \citep{cliff2020methods}. Conventional approaches typically assume a stationary data-generating process \citep{cakmak2006portfolio} and stable factor relationships over time \citep{jacobs2005portfolio}. In contrast, modern deep learning-based frameworks seek to address these complexities by scaling model capacity through a substantial increase in parameterization. However, training such large-scale models, along with ensuring their robustness and safety under extreme or anomalous market conditions, introduces additional practical constraints that hinder deployment. An alternative formalism for capturing such nonlinear dynamics is through regime-switching representations, wherein the data-generating process is characterized by distinct latent states. Nevertheless, this paradigm introduces further methodological challenges, as it requires simultaneous learning of both the regime identities and the corresponding conditional models, thereby compounding the overall training complexity \citep{liu2023regime}.

Deep learning has attracted attention in financial forecasting, with a wide range of architectures; including recurrent neural networks \citep{sharma2022portfolio}, convolutional designs \citep{popa2020convolutional}, temporal attention mechanisms \citep{asadi2025transformer}, and deep reinforcement learning frameworks \citep{taghian2022learning}, demonstrating varying degrees of success in predicting asset returns, volatility, and directional trends \citep{choudhary2025risk}. Among these, reinforcement learning (RL) offers a particularly compelling paradigm, as it naturally formalizes portfolio optimization as a sequential decision-making process. In particular, policy gradient and actor-critic methods have proven effective in navigating the intricate trade-offs between risk and return under evolving market conditions \citep{wei2025deep, jiang2024deep}. End-to-end deep learning systems are prone to overfitting the specific market regularities present in their training sets; however, they fail when confronted with unprecedented shocks or abrupt changes in volatility regimes.

To address the inherent variability of financial environments, mixture-of-experts (MoE) architectures have been proposed as a promising alternative to monolithic models \citep{wei2025deep}. In this framework, a set of specialized expert networks is maintained, each calibrated to a distinct region of the input space, thereby enabling adaptive responses to changing market conditions. In financial applications, MoE models typically employ supervised gating networks \citep{vats2024evolution} or meta-learning strategies \citep{masoudnia2014mixture} to dynamically combine expert outputs. These approaches have been shown to enhance predictive accuracy, particularly under controlled experimental settings.

Nevertheless, a critical limitation emerges when the system encounters market regimes that were not represented during training. Under such conditions, where the test-time distribution deviates substantially from the training distribution, the gating mechanism becomes unreliable, often assigning disproportionate weights to mismatched experts. This misallocation results in significant portfolio drawdowns, undermining the robustness that MoE architectures are designed to achieve. Consequently, the effectiveness of these models remains contingent upon the representativeness of the training data, and their generalization capacity to novel or extreme regimes remains an open and pressing research question.

The growing adoption of Large Language Models (LLMs) in finance has opened new avenues for integrating structured and unstructured data, such as news and reports, into trading systems. Their chain-of-thought reasoning capabilities enable semantically informed decision-making, supporting applications like sentiment analysis, automated reporting, and strategy generation \citep{kong2024large, feng2025deep}. Recent efforts have further combined LLMs with reinforcement learning or retrieval-augmented frameworks to incorporate historical context and domain knowledge at inference time \citep{liu2024revolutionising}.

We propose a situation-based expert switching framework, in which expert selection is framed as a sequential decision problem rather than an inherent property of a single predictive model. At each decision step, the system evaluates the current market state against a repository of historical situations and assesses the historical performance of each expert under analogous conditions. To render this selection process evidence-based and risk-aware, we construct a Statement of Performance (SoP) for each expert during an offline indexing phase. Each SoP encapsulates risk-adjusted profitability across a distribution of historical market contexts. During inference, these statements serve as empirical evidence, enabling the switcher to determine not only the prevailing market regime, but also which expert has demonstrated historical effectiveness within that regime.

We formalize this selection mechanism mathematically and establish a theoretical foundation through a monotonicity property: the introduction of a new expert that outperforms the existing pool within a given sub-domain guarantees non-degradation of overall system performance. This property is of practical significance, as it permits incremental expansion of the expert pool; enabling the integration of improved specialists without retraining a monolithic model from scratch. Importantly, this result distinguishes our approach from conventional strategies that pursue generalization primarily through increases in model scale. The switching mechanism is grounded in retrieval-based reasoning. Market situations are encoded via a dual-stream Variational Autoencoder, which extracts complementary representations from two sources: asset-level technical indicators and market-wide macroeconomic features. The resulting compact embeddings facilitate similarity-based retrieval of historical episodes characterized by analogous price dynamics. The retrieved evidence, combined with the expert SoPs, is supplied to the model switcher, which estimates the most appropriate expert to activate for subsequent portfolio construction.

Collectively, the proposed framework offers an alternative paradigm for adaptive portfolio management; one that emphasizes specialization across experts rather than scaling within a single model. The system begins with a predefined set of experts, incorporates a retrieval mechanism to ground expert selection in historical precedent, and integrates risk-aware performance statements to inform decisions based on realized outcomes. As new experts become available, they can be seamlessly incorporated into the pool, with the theoretical monotonicity result ensuring that such additions do not degrade performance, while the retrieval mechanism ensures their deployment under relevant conditions. Together, these components constitute a coherent and extensible framework for portfolio management in non-stationary financial environments.
	\section{Literature Review}

\subsection{Adaptive Portfolio Allocation}

Mean-variance portfolio optimization relies on assumptions of approximately Gaussian returns and stable asset correlations. Both assumptions become problematic in markets characterized by volatility clustering, structural breaks, and rapidly changing cross-asset dependencies \citep{nguyen2025advanced}. Machine learning methods relax these assumptions by learning nonlinear relationships from market observations \citep{feng2025deep}. Their empirical performance, however, varies substantially across market conditions, suggesting that the effectiveness of a portfolio model depends partly on the state of the environment in which it operates.

A common early strategy combined learned representations with conventional portfolio optimization. Predictive models extracted features from price observations, after which a separate optimizer determined portfolio weights. CNN- and recurrent-network-based predictors improved risk-adjusted performance in several such hybrid systems \citep{nguyen2025advanced}. Correlation-aware CNN-Transformer architectures subsequently incorporated cross-asset dependencies more explicitly and introduced cross-market transfer learning \citep{feng2025deep}. The allocation stage remained separate from representation learning in these models. Consequently, changes in market dynamics were handled primarily through model retraining or allocation heuristics rather than through an adaptive policy.

\subsection{Regime-Aware Reinforcement Learning}

Deep reinforcement learning removes the separate prediction-and-optimization stages by allowing an agent to map market observations directly to portfolio decisions. \citet{jiang2017cryptocurrency} showed that a CNN-based agent trained on raw price tensors could be applied across equity and cryptocurrency markets. Later architectures placed greater emphasis on temporal structure. The encoder-decoder models of \citet{taghian2022learning,taghian2023reinforcement}, for example, separate long-horizon temporal representation from short-horizon decision making. Other studies have incorporated behavioral assumptions, such as loss aversion and overconfidence, into actor-critic policies \citep{charkhestani2026behaviorally}, while \citet{alidousti2025novel} used a double-DQN formulation to reduce overestimation bias. Transformer-based reinforcement learning extends this line of work by modeling longer temporal dependencies than conventional recurrent architectures \citep{ren2025time}.

The main difficulty for a single DRL policy is its dependence on patterns that were useful during training. When the underlying market dynamics change, the same policy can continue to exploit relationships that are no longer present \citep{rezaei2025taxonomy}. Several studies therefore introduce multiple policies or explicitly model market regimes. Choudhary et al. \citep{choudhary2025risk} use multi-agent PPO with differentiated objectives based on log-return, Sharpe ratio \citep{sharpe2005journal}, and drawdown, producing more stable allocations than a single-agent formulation. Regime-switching systems instead associate different DRL agents with detected market conditions and report improvements in Sharpe and Sortino ratios for commodity and futures markets \citep{wang2025risk,sortino1994performance}. \citet{zhang2025regimefolio} follow a related strategy by defining volatility regimes and combining regime-specific ensemble forecasts with adaptive mean-variance allocation.

These studies differ mainly in how regime information reaches the trading policy. Some methods infer the regime implicitly through the learned state representation, whereas others introduce an explicit controller that determines which policy should be active. Explicit switching provides a natural mechanism for handling heterogeneous market conditions, but it also makes the quality of the switching signal critical. A regime classifier that reacts slowly can retain an inappropriate policy after a structural change; one that reacts too quickly can cause unnecessary switching between experts.

\subsection{Mixture-of-Experts and Expert Routing}

Mixture-of-experts (MoE) architectures address policy heterogeneity by maintaining several specialists and learning a routing mechanism that determines which specialist receives each state. In portfolio applications, Transformer-based spatio-temporal representations have been combined with learned gating networks to select among DRL experts, with improvements reported in cumulative and risk-adjusted returns \citep{wei2025deep}. Other systems use hypernetworks to modify expert parameters according to detected regime information, improving regime identification and predictive performance \citep{sun2025adaptive}. \citet{gu2025mixture} likewise use an MoE formulation to account for non-stationarity and stochastic volatility.

The routing mechanism is the key distinction among these models. Learned gates generally infer expert suitability from the current representation, whereas performance-based mechanisms can use recent outcomes as an indirect measure of suitability. Recent performance is attractive because it is inexpensive and directly related to the trading objective, but it is also noisy. A short sequence of favorable or unfavorable returns may reflect transient randomness rather than a persistent change in regime. During abrupt transitions, a router based primarily on recent performance can therefore favor an expert whose historical behavior is poorly matched to the newly emerging conditions.

Historical comparison offers a different basis for routing. Instead of asking which expert has performed best most recently, the router can ask which previously observed market states resemble the current state and examine expert behavior in those states. Such a mechanism shifts the routing problem from short-horizon performance estimation toward retrieval of comparable market episodes. The distinction is particularly relevant when the current regime has not appeared frequently in the recent training window but has recognizable historical counterparts.

\subsection{LLMs for Financial Decision Support}

Large language models have primarily entered financial systems through the processing of information that is difficult to represent directly in numerical market features. \citet{huang2025leveraging} combine ChatGPT-based stock selection with portfolio optimization, while \citet{yin2026complex} investigate chain-of-thought prompting with ChatGPT 4.0 for financial forecasting and portfolio optimization. Other studies incorporate LLM-derived sentiment, macroeconomic information, and textual summaries into DRL state representations, reporting improvements relative to quantitative baselines \citep{liu2024revolutionising,unnikrishnan2024financial}. Formulaic signals and textual descriptions generated by LLMs have similarly been investigated as higher-level features for prediction and interpretation.

A separate line of research assigns the LLM a coordination role. TradExpert uses several specialized LLM experts with different knowledge domains and combines their outputs through a general expert for prediction or ranking \citep{ding2024tradexpert}. \citet{liu2025llm} replace conventional neural routing with an LLM-based selector, using contextual reasoning and broader financial knowledge to distinguish among experts. TradingAgents extends the multi-agent formulation by assigning separate agents to analysis, risk management, and trading before synthesizing their outputs \citep{xiao2025tradingagents}. Similar multi-agent designs have also been explored for cryptocurrency portfolio management.

The empirical evidence places an important constraint on these uses of LLMs. Large-scale benchmarks show that LLM-based trading agents do not consistently outperform buy-and-hold strategies over multi-month evaluation periods \citep{chen2025stockbench}. The result suggests that language-based reasoning alone is insufficient for reliable portfolio control. LLMs appear more suitable for tasks involving information synthesis, contextual interpretation, and coordination than for generating precise portfolio actions at every decision step. For a portfolio system with quantitative experts, this distinction makes the LLM's role narrower but also easier to evaluate: the language model can contribute contextual information or assess relationships among retrieved market episodes without being responsible for the final portfolio weights.

\subsection{Research Gaps and Proposed Contributions}

The literature leaves three related issues unresolved. First, DRL and MoE systems commonly learn expert selection from the current state representation, a learned gate, or recent performance. These signals do not necessarily indicate whether an expert has previously succeeded under market conditions similar to those observed at the current decision point. Second, explicit regime-aware systems depend on regime definitions or classifiers whose errors can directly affect policy selection. Third, LLM-based financial systems have demonstrated value in information processing and coordination, but their use as direct trading agents remains inconsistent.

The proposed method places these three observations in a single routing mechanism. For a current market state, historical observations are retrieved according to their similarity to the present state. Expert performance in the retrieved episodes then provides the basis for selecting the portfolio policy. The retrieved episodes also provide an empirical reference for assessing how well the candidate experts behaved under comparable conditions. The LLM operates on this contextual information instead of producing portfolio weights directly; its role is to encode semantic information and assist with reasoning about regime similarity.

The resulting design treats expert selection as a historical matching problem. The central question is not simply which expert has performed well recently, but which expert has demonstrated suitable behavior in market states resembling the one currently observed. This distinction connects regime-aware reinforcement learning with MoE routing while giving the LLM a bounded decision-support role. It also provides a direct basis for evaluating expert selection through the retrieved historical episodes rather than relying solely on the internal state of a learned gating network.

	\section{Proposed Method}

\subsection{Overview}

The proposed system maintains a pool of specialized portfolio experts and selects among them at inference time using historical market states as reference points. The procedure has an offline and an online component. During offline processing, historical market windows are converted into latent state representations, each expert is evaluated over a fixed forward horizon, and the resulting state--performance pairs are stored in a vector database. During inference, the current market window is embedded in the same latent space, similar historical windows are retrieved, and the corresponding expert outcomes are used to rank the available policies.

The design separates the expensive construction of the historical performance index from the online selection step. Expert evaluation and indexing are performed once for each historical window. Online inference then requires only state encoding, nearest-neighbor retrieval, and expert scoring. The retrieved observations also provide an explicit record of why a particular expert is favored: its selection can be traced to the behavior of that expert under historically similar market conditions.

\subsection{System Architecture and Data Flow}
\label{subsec:architecture}

Figure~\ref{fig:arch} shows the complete data flow. The lower pipeline constructs the historical index, whereas the upper pipeline processes the current market state and selects an expert.

The market state is represented through two feature streams. The first contains asset-level technical information, including returns, momentum, volatility, and volume characteristics for each asset. The second describes market-wide conditions through aggregate trend, volatility, and liquidity measures. Separate encoders transform these streams into latent representations, which are subsequently combined into a single vector $h_t$. Using two streams preserves information about individual assets while retaining conditions that affect the market as a whole.

The offline pipeline associates each historical representation with the realized performance of every expert over a fixed forward horizon. The resulting records contain both numerical performance measures and a Statement of Performance (SoP), which summarizes the corresponding market episode and the observed behavior of each expert. These records form the evidence base used during online routing.

At time $t$, the current state is processed by the same encoders and represented by $h_t$. Approximate nearest-neighbor search identifies historical states close to $h_t$. The performance records attached to these neighbors provide estimates of how the experts have behaved under comparable conditions. The LLM receives these retrieved records together with the current market description and is used for contextual assessment and uncertainty calibration. It does not generate portfolio weights. The selected expert remains responsible for producing the portfolio allocation.

\begin{figure*}[ht!]
	\centering
	\includegraphics[width=\linewidth]{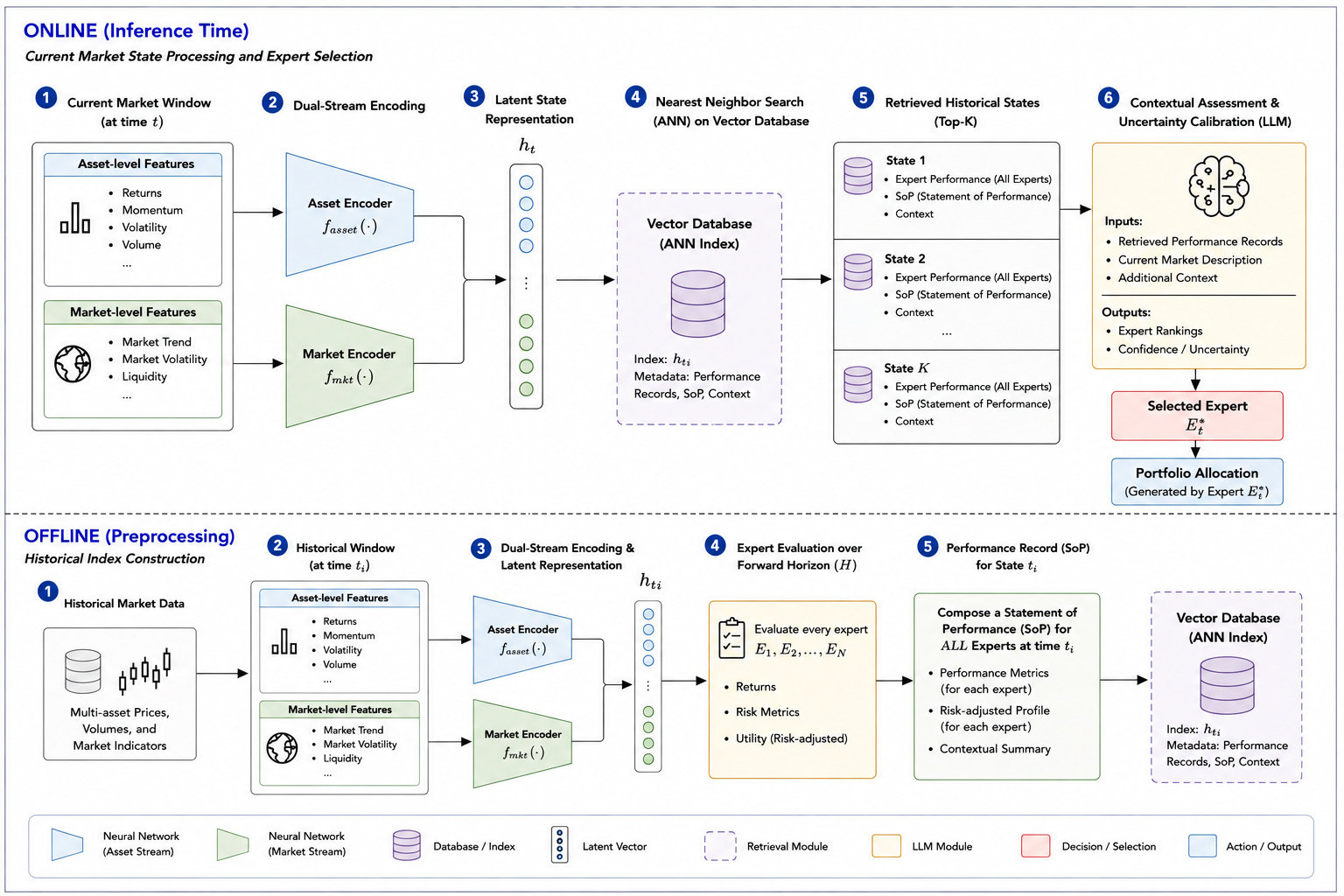}
	\caption{Overall architecture of the proposed retrieval-augmented adaptive portfolio management framework. The lower pipeline constructs the historical index by encoding market states and evaluating expert performance. The upper pipeline retrieves comparable historical states, combines their performance records with contextual information, and selects the expert used for portfolio allocation.}
	\label{fig:arch}
\end{figure*}

\subsection{Theoretical Framework: Expert Switching with Historical Risk Evaluation}
\label{sec:theoretical_framework}

The switching problem can be formulated over a distribution of latent market states. Let $\mathcal{S}$ denote the space of possible market states and $\mathbb{P}$ a probability measure over $\mathcal{S}$. A state may represent a combination of macroeconomic conditions, volatility characteristics, liquidity conditions, and structural market changes. The latent state is not directly observed at decision time; instead, the system constructs an observable representation from recent market data and uses historical observations to identify similar states.

Let the available experts be
$
\mathcal{E}_N={E_1,\ldots,E_N}.
$
Each expert $E_i$ maps an observed market representation to a portfolio weight vector. Its realized risk-adjusted utility in state $s$ is denoted by $U_i(s)$. This quantity is unknown when the portfolio decision is made and is estimated from historical observations.

The value of a switching policy $\pi_N:\mathcal{S}\rightarrow{1,\ldots,N}$ is

\begin{equation}
	V(\pi_N)=
	\int_{\mathcal{S}}
	U_{\pi_N(s)}(s),d\mathbb{P}(s).
\end{equation}

The purpose of adding a new specialist is therefore not simply to increase the number of available policies. The additional expert is useful only when its inclusion creates states in which the switching rule can identify a policy with at least as much utility as the best existing expert.

\subsubsection{Historical Evidence and Switching Rule}

For a current state $s$, let $\mathcal{H}_t(s)$ denote the historical episodes retrieved as similar to that state. The quantitative component of the router derives an expert-specific estimate from the outcomes observed in these episodes. The LLM receives the same evidence and produces a contextual risk/return assessment:

\begin{equation}
	\rho_i(s)=
	\mathcal{M}\left(\mathcal{H}_t(s),E_i\right).
\end{equation}

Here, $\rho_i(s)$ represents the assessment used by the switching procedure; it should not be interpreted as a direct observation of the unobservable quantity $U_i(s)$. The historical performance records provide the numerical basis for the estimate, while the LLM interprets the retrieved context and contributes to its calibration.

The active expert is selected according to the resulting scores. When a new specialist $E_{N+1}$ is introduced, the corresponding policy $\pi_{N+1}$ either retains the incumbent expert or activates the new specialist according to the switching margin specified below.

\subsubsection{Conditions for Non-Decreasing Performance}

The monotonicity result relies on three assumptions concerning estimation error, switching conservatism, and specialist behavior.

\begin{axiom}[Bounded Estimation Error / Empirical Alignment]
	The assessment $\rho_i(s)$ is uniformly close to the true utility. There exists $\epsilon>0$ such that
	\begin{equation}
		\sup_{s\in\mathcal{S}}
		|\rho_i(s)-U_i(s)|
		\leq\epsilon,
		\qquad
		\forall i\in{1,\ldots,N+1}.
	\end{equation}
	The practical implementation approximates this condition by grounding assessments in observed historical returns, drawdowns, and related performance measures rather than allowing unrestricted extrapolation.
\end{axiom}

\begin{axiom}[Conservative Switching Logic]
	At state $s$, the switcher evaluates all experts in the pool
	$\mathcal{E}_{N}={E_1,\ldots,E_{N}}$. Let the incumbent expert be
	$\pi_N(s)$ and let
	\begin{equation}
		e^*(s)=	\arg\max_{i\in{1,\ldots,N}}\rho_i(s)
	\end{equation}
	denote the expert with the highest estimated utility. The switcher activates
	$e^*(s)$ only when its estimated advantage over the incumbent exceeds the
	switching threshold $\tau\geq 2 \epsilon$:
	\begin{equation}
		\pi_{N}(s)=
		\begin{cases}
			e^*(s),
			&
			\rho_{e^*(s)}(s)-\rho_{\pi_N(s)}(s)>\tau,
			\\
			\pi_{N}(s) & \text{otherwise}.
		\end{cases}
	\end{equation}
	The threshold prevents the switcher from replacing the incumbent when the
	estimated advantage of the best available expert is within the expected
	estimation error.
\end{axiom}

\begin{axiom}[Specialist Dominance and Tail Protection]
	The new expert $E_{N+1}$ is intended for a subset $S_R\subset\mathcal{S}$ with $\mathbb{P}(S_R)>0$. Within this region, the specialist has a utility advantage of at least $\delta$, where
	$
	\delta>\tau+2\epsilon.
	$
	Thus,
	\begin{equation}
		U_{N+1}(s)
		\geq
		U_{\pi_N(s)}(s)+\delta,
		\qquad
		s\in S_R.
	\end{equation}
	Outside the specialist region, its utility is assumed unlimited.
\end{axiom}

\subsubsection{Operational Interpretation of the Conditions}

The three conditions translate into concrete constraints on the routing procedure. Empirical alignment limits the range of values supplied to the LLM; conservative switching requires a candidate expert to demonstrate a sufficiently large advantage before replacing the incumbent; and tail protection requires explicit examination of periods in which the specialist is expected to provide its main benefit.

Table~\ref{tab:llm_prompts} summarizes this correspondence. The prompt instructions are not themselves proofs of the mathematical assumptions. They are implementation heuristics intended to make the LLM's behavior more consistent with those assumptions. The prompt was refined using a validation set, as described in Section~\ref{subsec:prompt_calibration}.

\begin{table}[htbp]
	\centering
	\caption{Mapping of the switching assumptions to LLM evaluation constraints.}
	\label{tab:llm_prompts}
	\small
	
	\begin{tabular}{p{2cm}p{2.5cm}p{7cm}}
		\toprule
		\textbf{Condition} & 
		\textbf{Mathematical requirement} & 
		\textbf{Operational constraint} \\
		\midrule
		
		Axiom 1 &
		$\sup_s|\rho_i-U_i|\leq\epsilon$ &
		Restrict quantitative assessments to the empirical range of retrieved return and drawdown observations. Missing regimes are handled using historical baseline statistics.
		\\[0.2cm]
		\midrule
		
		Axiom 2 &
		$\tau\geq2\epsilon$ &
		Require a candidate expert to exceed the incumbent by a margin larger than the estimated uncertainty before recommending a switch.
		\\[0.2cm]
		\midrule
		
		Axiom 3 &
		$\delta>\tau+2\epsilon$ &
		Evaluate specialist performance in extreme conditions.
		\\
		
		\bottomrule
	\end{tabular}
	
\end{table}

\subsubsection{Monotonicity Result}

\begin{theorem}[Monotonic Expert-Pool Performance]
	Under Axioms 1--3, adding a specialist $E_{N+1}$ to the expert pool does not decrease the expected utility:
	$
	V(\pi_{N+1})\geq V(\pi_N).
	$
	Furthermore, the specialist produces a strict utility improvement on its target region $S_R$. If specialists are subsequently added so that their target regions cover the relevant state space, the resulting sequence of policy values is non-decreasing and converges to a limiting value bounded above by the optimal achievable utility.
\end{theorem}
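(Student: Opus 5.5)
The argument is pointwise: we show $U_{\pi_{N+1}(s)}(s)\geq U_{\pi_N(s)}(s)$ for $\mathbb{P}$-almost every $s$, and then integrate against $\mathbb{P}$. The integration step uses monotonicity of the integral, provided the $U_i$ are integrable; we will state that as an implicit standing assumption. We treat $\pi_{N+1}$ as the conservative switching rule of Axiom 2 applied to the enlarged pool $\{E_1,\ldots,E_{N+1}\}$, with incumbent $\pi_N(s)$. The pointwise analysis splits into two cases according to whether $\pi_{N+1}$ keeps the incumbent or switches.

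\emph{Case 1: the incumbent is kept.} Then $\pi_{N+1}(s)=\pi_N(s)$, the utilities coincide, and there is nothing to prove.

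\emph{Case 2: a switch to some $j=e^*(s)$.} By Axiom 2 we have $\rho_j(s)-\rho_{\pi_N(s)}(s)>\tau$. Applying Axiom 1 to both experts gives
\[
U_j(s)-U_{\pi_N(s)}(s)\;\geq\;\rho_j(s)-\rho_{\pi_N(s)}(s)-2\epsilon\;>\;\tau-2\epsilon\;\geq\;0,
\]
so switching never hurts. This inequality is the core of the first claim. It holds regardless of whether $j$ is the new specialist, so the unusual phrase ``outside the specialist region its utility is assumed unlimited'' is never actually needed for non-degradation; the threshold $\tau\geq2\epsilon$ does all the work.

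For the strict improvement on $S_R$, fix $s\in S_R$. Axiom 3 and Axiom 1 together give
\[
\rho_{N+1}(s)-\rho_{\pi_N(s)}(s)\;\geq\;\delta-2\epsilon\;>\;\tau.
\]
Hence $e^*(s)$ has estimated advantage greater than $\tau$, since the maximizer scores at least as high as $\rho_{N+1}(s)$, and a switch occurs. By the Case 2 computation with $j=e^*(s)$,
\[
U_{e^*}(s)\;\geq\;\rho_{e^*}(s)-\epsilon\;\geq\;\rho_{N+1}(s)-\epsilon\;\geq\;U_{N+1}(s)-2\epsilon\;\geq\;U_{\pi_N(s)}(s)+\delta-2\epsilon\;>\;U_{\pi_N(s)}(s)+\tau,
\]
which is a strict gain of more than $\tau$, or of more than $\delta-2\epsilon>0$ if $\tau=0$. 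Integrating over $S_R$, and using $\mathbb{P}(S_R)>0$, yields the sharper bound $V(\pi_{N+1})\geq V(\pi_N)+(\delta-2\epsilon)\,\mathbb{P}(S_R)$.

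For the final claim, iterating the first part gives a non-decreasing sequence $V(\pi_N)\leq V(\pi_{N+1})\leq\cdots$. Each term is bounded above by $V^\star=\int\sup_i U_i\,d\mathbb{P}$, or by any a priori bound on achievable utility, since $U_{\pi(s)}(s)\leq\sup_iU_i(s)$ pointwise. The monotone convergence theorem for real sequences then gives a limit at most $V^\star$.

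I expect the main obstacle to be interpretive rather than technical. We must fix precisely what $\pi_{N+1}$ is relative to $\pi_N$ (incumbent $=\pi_N(s)$), check that Axiom 3 is stated relative to that incumbent, and assume $V^\star<\infty$ so that the limit statement is meaningful. The ``covering'' hypothesis only affects how close the limit is to $V^\star$, not whether the limit exists, so I would not try to prove equality.
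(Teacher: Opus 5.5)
Your proof is correct and is essentially the paper's argument: the same estimate $U_j(s)-U_{\pi_N(s)}(s)>\tau-2\epsilon\geq 0$ from Axioms~1 and~2 gives non-degradation, Axioms~3 and~1 force the switching condition on $S_R$, and a non-decreasing sequence bounded by $V^*$ converges. The one difference is in the switch target: the paper takes $\pi_{N+1}(s)\in\{\pi_N(s),N+1\}$ and so credits the full gain $\delta\,\mathbb{P}(S_R)$, whereas you let the switch land on any maximizer $e^*(s)$ of the enlarged pool, which is the more literal reading of Axiom~2 and under which your weaker bound $(\delta-2\epsilon)\,\mathbb{P}(S_R)$ is what is actually justified.
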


\begin{proof}
	Consider first a state $s\notin S_R$. If the switching condition is not satisfied, the incumbent remains active and
	$
	\pi_{N+1}(s)=\pi_N(s),
	$
	so the utility is unchanged.
	
	If the switch occurs, then
	\[
	\rho_{N+1}(s)-\rho_{\pi_N(s)}(s)>\tau.
	\]
	By Axiom 1,
	\begin{align}
		U_{N+1}(s)&\geq\rho_{N+1}(s)-\epsilon,\\
		U_{\pi_N(s)}(s)&\leq\rho_{\pi_N(s)}(s)+\epsilon.
	\end{align}
	Consequently,
	\begin{equation}
		U_{N+1}(s)-U_{\pi_N(s)}(s) > \tau-2\epsilon.
	\end{equation}
	Since $\tau\geq2\epsilon$, the utility difference is non-negative. A switch outside the specialist's target region therefore cannot reduce utility under the stated error bound.
	
	Now consider $s\in S_R$. Axiom 3 gives
	\[
	U_{N+1}(s)-U_{\pi_N(s)}(s)\geq\delta.
	\]
	Using Axiom 1,
	\begin{equation}
		\rho_{N+1}(s)-\rho_{\pi_N(s)}(s) \geq \delta-2\epsilon.
	\end{equation}
	Because $\delta>\tau+2\epsilon$, it follows that
	\[
	\delta-2\epsilon>\tau.
	\]
	The specialist therefore satisfies the switching condition throughout $S_R$. Its contribution to the expected utility is at least
	\begin{equation}
		\Delta V \geq \int_{S_R}\delta\,d\mathbb{P}(s) = \delta\mathbb{P}(S_R)>0.
	\end{equation}
	
	Combining the two regions gives
	\[
	V(\pi_{N+1})\geq V(\pi_N).
	\]
	For a sequence of specialist additions, the same argument gives
	\[
	V(\pi_1)\leq V(\pi_2)\leq\cdots\leq V(\pi_K).
	\]
	The sequence is bounded above by the best achievable utility
	\[
	V^*= \int_{\mathcal S} \sup_E U_E(s)\,d\mathbb P(s),
	\]
	and therefore has a limit. If the added specialists cover the relevant state space and satisfy the stated dominance conditions, this limiting value approaches $V^*$.
	
\end{proof}

\subsection{Prompt Design and Uncertainty Calibration}
\label{subsec:prompt_calibration}

The practical implementation was designed around the limitations observed during validation. Early prompt versions permitted the LLM to infer returns outside the range represented in the retrieved historical records. These responses were frequently overconfident. The final prompt constrains numerical assessments to empirical observations and requires the model to fall back to historical baseline statistics when the retrieved evidence is sparse.

Five prompt iterations were evaluated on a validation set. The mean absolute estimation error decreased from approximately $18\%$ in the earlier versions to $11\%$ after the empirical-range constraints were introduced. The final prompt implements three specific restrictions:

\begin{itemize}
	\item \textbf{Empirical grounding:} risk and return assessments are derived from the historical MDD, realized information ratios, and other performance observations supplied in the retrieved context.
	\item \textbf{Conservative margining:} a newly introduced expert receives a risk penalty before comparison with the incumbent, and the candidate is not preferred unless its estimated advantage exceeds the specified noise margin.
	\item \textbf{Tail-event analysis:} the evaluator examines extreme drawdowns, liquidity squeezes, and regime transitions separately from ordinary observations. A specialist's advantage must be localized to its intended regime without creating a severe deterioration elsewhere.
\end{itemize}

The resulting system prompt is shown in Prompt Specification~\ref{spec:risk_prompt}~. Its output is restricted to a structured JSON representation so that the LLM assessment can be incorporated into the numerical routing procedure without requiring free-form text parsing.

\begin{tcolorbox}[
	enhanced,
	breakable,
	colback=gray!3,
	colframe=black!60,
	boxrule=0.5pt,
	arc=1pt,
	left=6pt,
	right=6pt,
	top=6pt,
	bottom=6pt,
	title={System prompt used by the LLM risk evaluator. The instructions constrain numerical assessments to retrieved empirical evidence and require explicit treatment of switching margins and tail conditions.},
	fonttitle=\bfseries
	]
	\small
	\textbf{ROLE.}
	You are a financial risk analyzer operating within a dynamic
	Mixture-of-Experts (MoE) portfolio system.
	
	\medskip
	\textbf{EMPIRICAL ALIGNMENT.}
	\begin{itemize}
		\item Base risk-adjusted assessments on the observed historical
		Maximum Drawdown (MDD) and realized Information Ratios provided
		in the context.
		\item Do not extrapolate returns beyond the empirical observations.
		\item If evidence for a regime is sparse, use the historical
		baseline mean.
	\end{itemize}
	
	\medskip
	\textbf{CONSERVATIVE MARGINING.}
	\begin{itemize}
		\item When evaluating a new candidate expert $E^{(N+1)}$, apply
		the specified risk penalty buffer $\tau$.
		\item Do not prefer the candidate unless its estimated advantage
		over the incumbent exceeds the estimated noise level by the
		required margin.
	\end{itemize}
	
	\medskip
	\textbf{TAIL-EVENT SEPARATION.}
	\begin{itemize}
		\item Examine extreme drawdowns, liquidity squeezes, and regime shifts.
		\item If a candidate is a specialist, identify the regime in which
		its advantage is observed.
		\item Verify that performance outside this regime does not fall below
		the weakest baseline expert.
	\end{itemize}
	
	\medskip
	\textbf{OUTPUT FORMAT.}
	The evaluator must return a structured record:
	\begin{verbatim}
		{
			"expert_id": "E_N+1",
			"calculated_rho": <float>,
			"confidence_bound_epsilon": <float>,
			"regime_separation_margin_tau": <float>,
			"tail_optimal_flag": <boolean>
		}
	\end{verbatim}
	
\end{tcolorbox}
\label{spec:risk_prompt}

\subsection{Market State Decomposition and Feature Modeling}

At each time step $t$, the market state is represented by two rolling-window feature tensors of length $L$. The first describes asset-specific technical behavior:
$
S_t^{\mathrm{tech}}\in \mathbb{R}^{L\times A\times F_1},
$
where $A$ is the number of assets and $F_1$ is the number of asset-level features. The feature set includes log-returns, RSI, MACD, ATR, realized volatility, and volume changes.

The second representation describes market-wide conditions:
$
S_t^{\mathrm{mkt}}\in \mathbb{R}^{L\times F_2}.
$
It contains aggregate trend indicators, volatility-regime measures, and liquidity variables.

The two representations retain different types of information. Asset-level features describe relative behavior among individual instruments, whereas market-wide features describe conditions shared across the portfolio. Keeping the streams separate before encoding avoids forcing both types of information into a single input representation. The resulting latent representation is used for historical state matching, where preserving both sources of information is important for identifying comparable episodes \citep{g2024hedge,asadi2025transformer}.

All features are standardized using rolling z-score normalization. The rolling procedure limits the effect of scale changes and reduces the influence of heteroskedasticity on the state representation.

\subsection{Dual-Stream Transformer-VAE Embedding}
\label{subsec:embedding}

Each feature stream is encoded with a pretrained Transformer-VAE from \citet{asadi2025transformer}. For
$x\in{\mathrm{tech},\mathrm{mkt}}$, the encoder maps $S_t^x$ to the Gaussian posterior

\begin{equation}
	q_{\phi_x}(z_x\mid S_t^x) = \mathcal{N} \left(\mu_x(S_t^x), \Sigma_x(S_t^x)\right),
\end{equation}

where the mean and diagonal covariance are generated by the temporal Transformer encoder. A latent representation is sampled using

\begin{equation}
	z_x = \mu_x+\epsilon\odot\sigma_x, \qquad \epsilon\sim\mathcal{N}(0,I).
\end{equation}

During pretraining, the sampled representation is passed through the decoder to reconstruct the input. After pretraining, the decoder is removed and the encoder is retained for market-state representation.

The two latent vectors are concatenated:

\begin{equation}
	h_t = [z_t^{\mathrm{tech}}\mid z_t^{\mathrm{mkt}}] \in\mathbb{R}^{D}.
\end{equation}

The same representation function is used in both phases of the system. Historical states are indexed using $h_\tau$, and the current state is represented by $h_t$ before nearest-neighbor retrieval. Using the same encoder in both cases ensures that the distance used during retrieval is defined in a common latent space.

\subsection{Offline Indexing}

Algorithm~\ref{alg:indexing} describes the construction of the historical index. For each historical time step $\tau$, the system constructs the two market-state representations and computes $h_\tau$. Every expert is then evaluated from that state over the forward horizon $H$. The resulting record contains cumulative return, Sharpe ratio, and maximum drawdown. A Statement of Performance (SoP) is generated from the same episode and records the market context, observed expert behavior, and relevant uncertainty factors.

An index entry is therefore represented as

\begin{equation}
	\mathcal{D}_\tau = \left(h_\tau, {R_\tau^{e}}, \mathrm{SoP}_\tau^{e} \right), \qquad e\in{1,\ldots,E}.
\end{equation}

\begin{algorithm}[t]
	\caption{Offline Regime Indexing and Expert Performance Grounding}
	\label{alg:indexing}
	\begin{algorithmic}[1]
		\REQUIRE Historical market data, expert set $\mathcal{E}={E^{(e)}}*{e=1}^{E}$, horizon $H$
		\FOR{each historical time step $\tau$}
		\STATE Construct $S*\tau^{\mathrm{tech}}$, $S_\tau^{\mathrm{mkt}}$
		\STATE Compute embedding $h_\tau$ using the dual-stream T-VAE
		\FOR{each expert $E^{(e)}$}
		\STATE Execute expert policy from $S_\tau$
		\STATE Evaluate realized performance $R_\tau^{(e)}$ over horizon $H$
		\ENDFOR
		\STATE Generate Statement of Performance $\mathrm{SoP}*\tau$
		\STATE Store $(h*\tau,{R_\tau^{(e)}},{\mathrm{SoP}_\tau^{(e)}})$
		\ENDFOR
	\end{algorithmic}
\end{algorithm}

\subsection{Online Retrieval and Portfolio Decision}

Algorithm~\ref{alg:inference} summarizes the online procedure. The current market window is first transformed into $S_t^{\mathrm{tech}}$ and $S_t^{\mathrm{mkt}}$ and encoded as $h_t$. The system then retrieves the $K$ historical states closest to $h_t$:

\begin{equation}
	\mathcal{N}_t = \mathrm{KNN} \left(h_t, {h_\tau}, K\right).
\end{equation}

The performance of each expert is estimated from the retrieved episodes. Let $\mathrm{sim}(h_t,h_\tau)$ denote the similarity between the current and retrieved states. The normalized retrieval weight is

\begin{equation}
	w_\tau = \frac{\mathrm{sim}(h_t,h_\tau)}{\sum_{j\in\mathcal{N}_t} \mathrm{sim}(h_t,h_j)},
\end{equation}

and the similarity-weighted performance estimate for expert $e$ is

\begin{equation}
	\hat{R}_t^{(e)} = \sum_{\tau\in\mathcal{N}*t} w*\tau R_\tau^{(e)}.
\end{equation}

The retrieved SoPs, performance summaries, and current market description are also supplied to the LLM. The model decides on selecting the best expert model among the existing ones based on the current market situation and the risk-return behavior profile of the experts embedded into retrieved SoPs.

\begin{algorithm}[t]
	\caption{Online Retrieval and Portfolio Decision-Making}
	\label{alg:inference}
	\begin{algorithmic}[1]
		\REQUIRE Current market data, vector database, expert set ${E^{(e)}}$
		\STATE Construct $S_t^{\mathrm{tech}}$, $S_t^{\mathrm{mkt}}$
		\STATE Compute embedding $h_t$ using the dual-stream T-VAE
		\STATE Retrieve $K$ nearest historical regimes $\mathcal{N}_t$
		\FOR{each expert $E^{(e)}$}
		\STATE Compute similarity-weighted performance $\hat{R}_t^{(e)}$
		\STATE Compute weighted uncertainty $\sigma_t^{(e)}$
		\STATE Query the LLM using retrieved SoPs and market context to get $e^*$
		\ENDFOR
		\STATE Generate portfolio weights $w_t$ using $E^{(e^*)}$
		\RETURN $w_t$
	\end{algorithmic}
\end{algorithm}

	\section{Experimental Results}
\label{sec:experiments}

We evaluate the proposed retrieval-augmented expert-switching framework independently on three financial market classes: cryptocurrency, stocks, and foreign exchange. The effect of important components in the proposed framework are investigated and the performance of the proposed model is compared with the baseline portfolio management methods.

Each experiment is conducted once for each market using the complete 30-symbol panel of that market. Consequently, the reported values are point estimates for a single experimental run rather than averages over multiple random seeds. This design makes the market-level comparison explicit and avoids conflating differences between market classes with differences between repeated runs. 

\subsection{Experimental Setup}
\label{subsec:setup}

\paragraph{Dataset}
\label{sec:dataset}

We construct a multi-market daily OHLCV dataset covering three market classes: cryptocurrencies, stocks, and foreign-exchange pairs. For each market, top 30 assets regarding their liquidity are selected along with the data for the market indices. Since, the cryptocurrencies market has no standard market index, we computed two indices to illustrate overall market situation. 

The cryptocurrency panel contains the assets listed in Table~\ref{tbl:data}. The stock panel and foreign-exchange panel are summarized in Tables~\ref{tbl:data2} and~\ref{tbl:data3}, respectively. Individual availability windows differ because the instruments were listed or became available at different dates. The resulting panels therefore contain unequal numbers of historical observations per symbol.

All experiments use a rolling input window of $L=22$ trading days and a holding horizon of $H=5$ days. The same input and decision horizons are used across all three markets so that the switching mechanism is evaluated under a common experimental protocol rather than being separately tuned to each market.

\begin{table}[htbp]
	\centering
	\caption{Crypto market data summary.}
	\scriptsize
	\resizebox{\linewidth}{!}{%
			\begin{tabular}{llll}
			\toprule
			Market & Symbol & Range & Samples\\
			\midrule
			Crypto & VET\_USD & 2018-08-03-2026-05-31 & 2859\\
			Crypto & BCH\_USD & 2017-11-09-2026-05-31 & 3126\\
			Crypto & ETC\_USD & 2017-11-09-2026-05-31 & 3126\\
			Crypto & UNI\_USD & 2019-10-21-2025-04-17 & 2006\\
			Crypto & SHIB\_USD & 2020-08-01-2026-05-31 & 2089\\
			Crypto & LTC\_USD & 2014-09-17-2026-05-31 & 4275\\
			Crypto & ICP\_USD & 2021-05-10-2026-05-31 & 1848\\
			Crypto & ARB\_USD & 2017-11-09-2026-05-31 & 3108\\
			Crypto & ADA\_USD & 2017-11-09-2026-05-31 & 3126\\
			Crypto & SOL\_USD & 2020-04-10-2026-05-31 & 2243\\
			Crypto & FIL\_USD & 2017-12-13-2026-05-31 & 3092\\
			Crypto & BTC\_USD & 2014-09-17-2026-05-31 & 4275\\
			Crypto & TON\_USD & 2020-08-29-2026-05-31 & 2102\\
			Crypto & EGLD\_USD & 2020-09-04-2026-05-31 & 2096\\
			Crypto & MATIC\_USD & 2019-04-28-2025-03-24 & 2158\\
			Crypto & DOT\_USD & 2020-08-20-2026-05-31 & 2111\\
			Crypto & XLM\_USD & 2017-11-09-2026-05-31 & 3126\\
			Crypto & ETH\_USD & 2017-11-09-2026-05-31 & 3126\\
			Crypto & RNDR\_USD & 2020-06-11-2024-07-21 & 1502\\
			Crypto & MKR\_USD & 2017-11-20-2026-05-31 & 3115\\
			Crypto & HBAR\_USD & 2019-09-17-2026-05-31 & 2449\\
			Crypto & AVAX\_USD & 2020-07-13-2026-05-31 & 2080\\
			Crypto & ATOM\_USD & 2019-03-14-2026-05-31 & 2636\\
			Crypto & DOGE\_USD & 2017-11-09-2026-05-31 & 3126\\
			Crypto & LINK\_USD & 2017-11-09-2026-05-31 & 3126\\
			Crypto & TRX\_USD & 2017-11-09-2026-05-31 & 3126\\
			Crypto & NEAR\_USD & 2020-10-14-2026-05-31 & 2056\\
			Crypto & APT\_USD & 2021-11-20-2025-06-24 & 1311\\
			Crypto & XRP\_USD & 2017-11-09-2026-05-31 & 3126\\
			Crypto & BNB\_USD & 2017-11-09-2026-05-31 & 3126\\
			\bottomrule
	\end{tabular}}
	\label{tbl:data}
\end{table}

\begin{table}[htbp]
	\centering
	\caption{Stock market data summary.}
	\scriptsize
	\resizebox{\linewidth}{!}{%
		\begin{tabular}{llll}
			\toprule
			Market & Symbol & Range & Samples\\
			\midrule
			Stock & TMO & 2000-01-03-2026-05-29 & 6641\\
			Stock & MCD & 2000-01-03-2026-05-29 & 6641\\
			Stock & NVDA & 2000-01-03-2026-05-29 & 6641\\
			Stock & DIS & 2000-01-03-2026-05-29 & 6641\\
			Stock & TSLA & 2010-06-29-2026-05-29 & 4004\\
			Stock & CVX & 2000-01-03-2026-05-29 & 6641\\
			Stock & XOM & 2000-01-03-2026-05-29 & 6641\\
			Stock & ACN & 2001-07-19-2026-05-29 & 6252\\
			Stock & V & 2008-03-19-2026-05-29 & 4578\\
			Stock & WMT & 2000-01-03-2026-05-29 & 6641\\
			Stock & KO & 2000-01-03-2026-05-29 & 6641\\
			Stock & LIN & 2000-01-03-2026-05-29 & 6641\\
			Stock & HD & 2000-01-03-2026-05-29 & 6641\\
			Stock & GOOGL & 2004-08-19-2026-05-29 & 5479\\
			Stock & CRM & 2004-06-23-2026-05-29 & 5519\\
			Stock & UNH & 2000-01-03-2026-05-29 & 6641\\
			Stock & BAC & 2000-01-03-2026-05-29 & 6641\\
			Stock & COST & 2000-01-03-2026-05-29 & 6641\\
			Stock & MA & 2006-05-25-2026-05-29 & 5034\\
			Stock & PG & 2000-01-03-2026-05-29 & 6641\\
			Stock & AAPL & 2000-01-03-2026-05-29 & 6641\\
			Stock & JPM & 2000-01-03-2026-05-29 & 6641\\
			Stock & JNJ & 2000-01-03-2026-05-29 & 6641\\
			Stock & MSFT & 2000-01-03-2026-05-29 & 6641\\
			Stock & MRK & 2000-01-03-2026-05-29 & 6641\\
			Stock & META & 2012-05-18-2026-05-29 & 3527\\
			Stock & AMZN & 2000-01-03-2026-05-29 & 6641\\
			Stock & ABBV & 2013-01-02-2026-05-29 & 3372\\
			Stock & BRK-B & 2000-01-03-2026-05-29 & 6641\\
			Stock & PEP & 2000-01-03-2026-05-29 & 6641\\
			\bottomrule
	\end{tabular}}
	\label{tbl:data2}
\end{table}

\begin{table}[htbp]
	\centering
	\caption{Foreign-exchange market data summary.}
	\scriptsize
	\resizebox{\linewidth}{!}{%
		\begin{tabular}{llll}
			\toprule
			Market & Symbol & Range & Samples\\
			\midrule
			Forex & USDCAD & 2003-09-17-2026-05-29 & 5904\\
			Forex & EURAUD & 2003-12-01-2026-05-29 & 5853\\
			Forex & EURCHF & 2003-01-23-2026-05-29 & 6059\\
			Forex & GBPCHF & 2003-10-01-2026-05-29 & 5894\\
			Forex & CHFJPY & 2003-12-01-2026-05-29 & 5854\\
			Forex & NZDCHF & 2003-12-01-2026-05-29 & 5825\\
			Forex & AUDCAD & 2003-12-01-2026-05-29 & 5856\\
			Forex & NZDCAD & 2003-12-01-2026-05-29 & 5852\\
			Forex & GBPUSD & 2003-12-01-2026-05-29 & 5848\\
			Forex & AUDUSD & 2006-05-16-2026-05-29 & 5212\\
			Forex & GBPAUD & 2003-06-03-2026-05-29 & 5982\\
			Forex & USDJPY & 2000-01-03-2026-05-29 & 6856\\
			Forex & GBPJPY & 2003-12-01-2026-05-29 & 5852\\
			Forex & NZDJPY & 2003-12-01-2026-05-29 & 5852\\
			Forex & AUDNZD & 2003-12-01-2026-05-29 & 5855\\
			Forex & EURGBP & 2000-01-03-2026-05-29 & 6876\\
			Forex & USDCHF & 2003-09-17-2026-05-29 & 5902\\
			Forex & EURCAD & 2003-12-01-2026-05-29 & 5839\\
			Forex & AUDJPY & 2003-12-01-2026-05-29 & 5855\\
			Forex & GBPCAD & 2003-12-01-2026-05-29 & 5853\\
			Forex & USDTRY & 2005-01-03-2026-05-29 & 5565\\
			Forex & AUDCHF & 2003-12-01-2026-05-29 & 5827\\
			Forex & EURTRY & 2005-01-03-2026-05-29 & 5567\\
			Forex & EURSEK & 2000-01-03-2026-05-29 & 6853\\
			Forex & GBPNZD & 2003-06-03-2026-05-29 & 5981\\
			Forex & EURUSD & 2003-12-01-2026-05-29 & 5836\\
			Forex & NZDUSD & 2003-12-01-2026-05-29 & 5837\\
			Forex & EURJPY & 2003-01-23-2026-05-29 & 6061\\
			Forex & EURNZD & 2003-12-01-2026-05-29 & 5853\\
			Forex & CADJPY & 2004-08-20-2026-05-29 & 5665\\
			\bottomrule
	\end{tabular}}
	\label{tbl:data3}
\end{table}

\paragraph{Market-level visualizations}

Figure~\ref{fig:ohlc_crypto} illustrates the OHLC price histories of all 30 cryptocurrency symbols. The corresponding visualizations for stocks and foreign exchange are shown in Figures~\ref{fig:ohlc_stock} and~\ref{fig:ohlc_forex}. These plots are included to make the cross-market heterogeneity explicit and to document the different price scales, trend structures, and volatility regimes encountered by the feature extractor and retrieval mechanism.

\begin{figure*}[htbp]
	\centering
	\scriptsize
	\begin{subfigure}[b]{0.155\textwidth}\includegraphics[width=\linewidth]{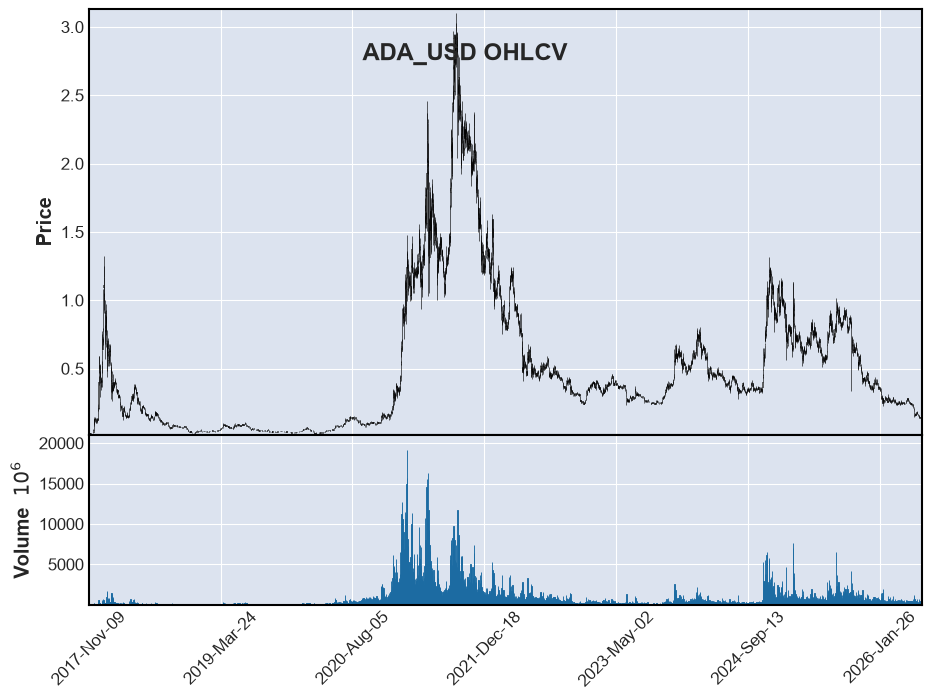}\caption{ADA}\end{subfigure}\hfill
	\begin{subfigure}[b]{0.155\textwidth}\includegraphics[width=\linewidth]{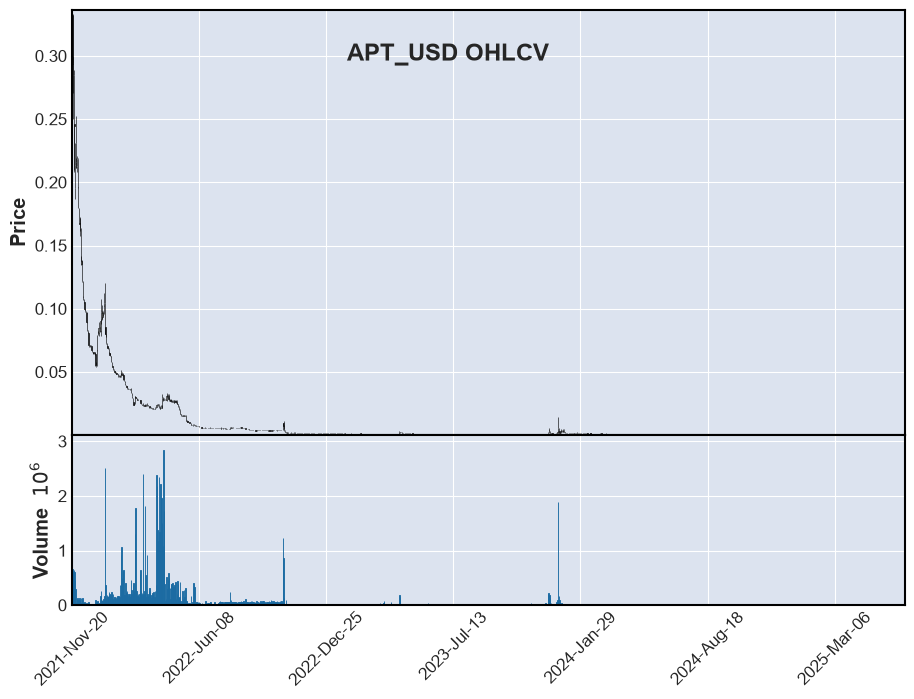}\caption{APT}\end{subfigure}\hfill
	\begin{subfigure}[b]{0.155\textwidth}\includegraphics[width=\linewidth]{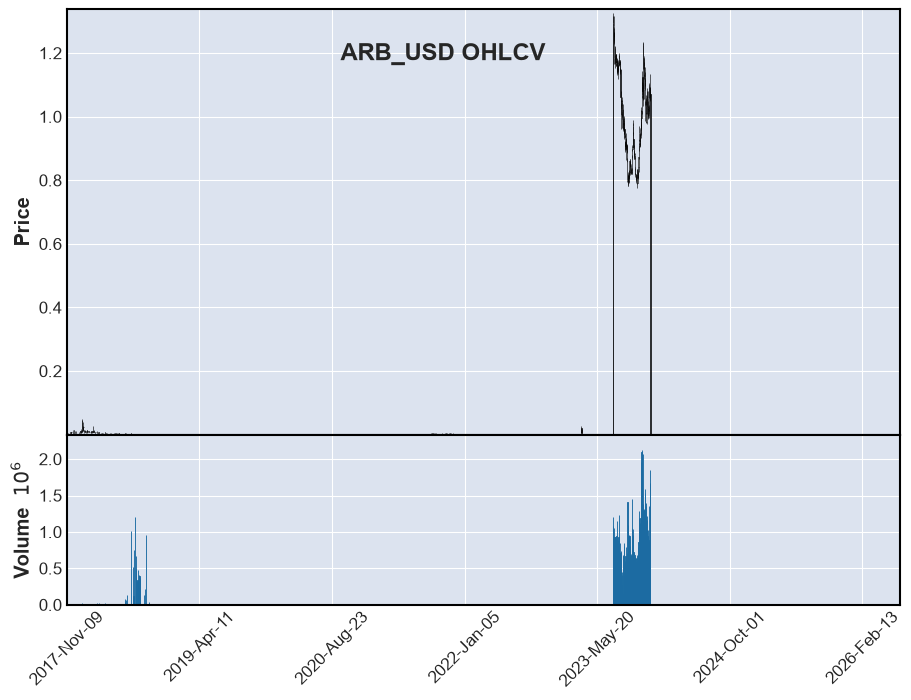}\caption{ARB}\end{subfigure}\hfill
	\begin{subfigure}[b]{0.155\textwidth}\includegraphics[width=\linewidth]{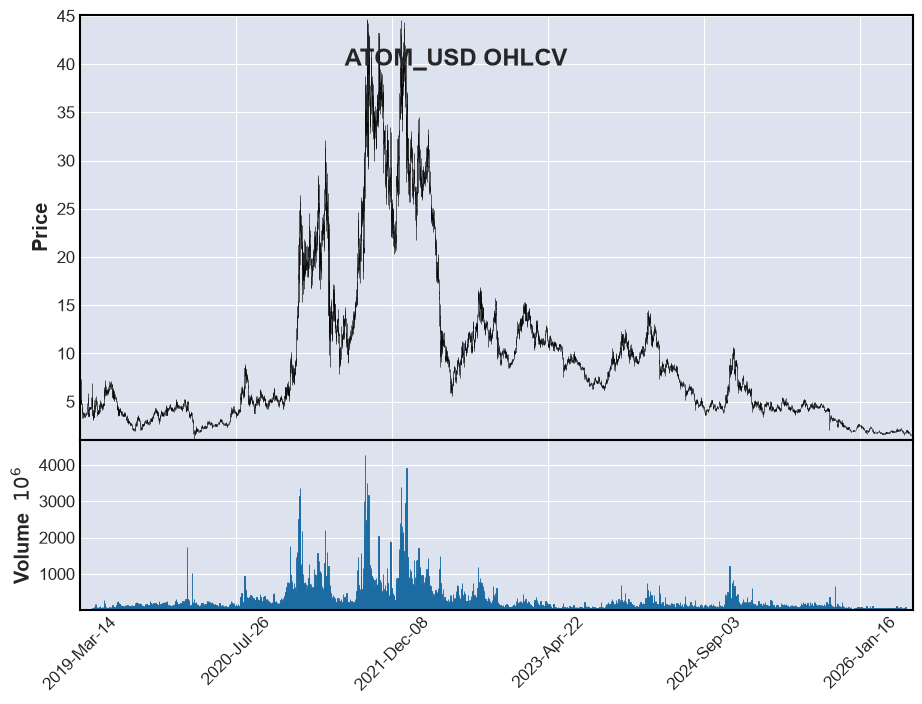}\caption{ATOM}\end{subfigure}\hfill
	\begin{subfigure}[b]{0.155\textwidth}\includegraphics[width=\linewidth]{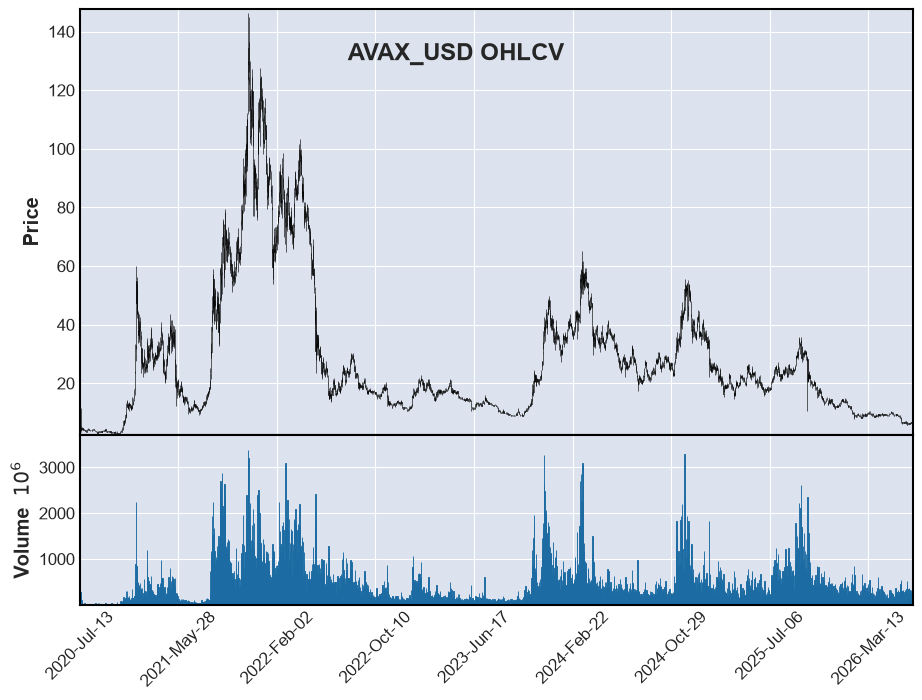}\caption{AVAX}\end{subfigure}\hfill
	\begin{subfigure}[b]{0.155\textwidth}\includegraphics[width=\linewidth]{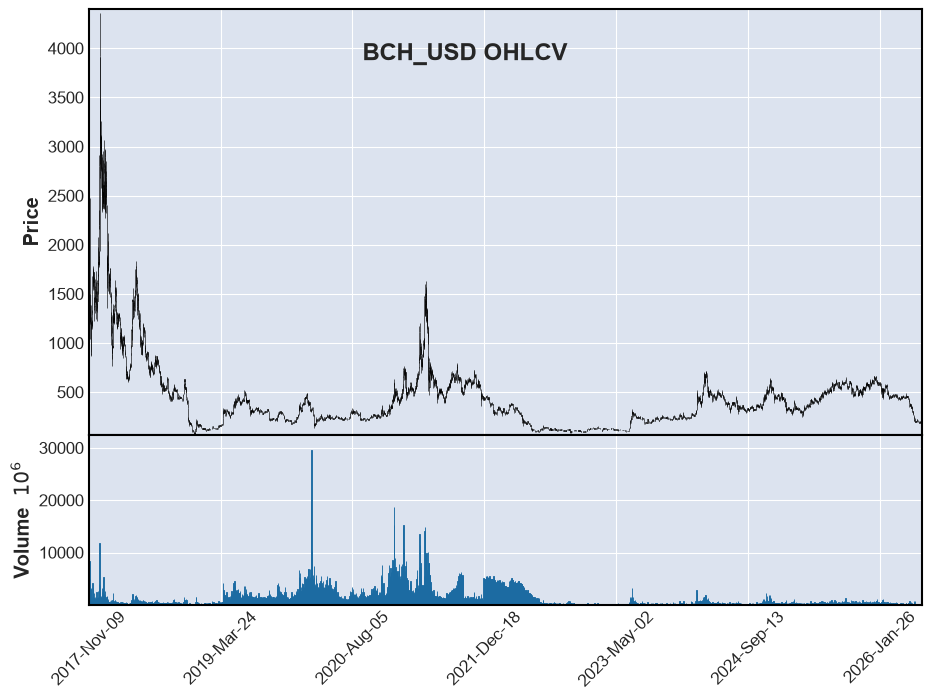}\caption{BCH}\end{subfigure}
	
	\begin{subfigure}[b]{0.155\textwidth}\includegraphics[width=\linewidth]{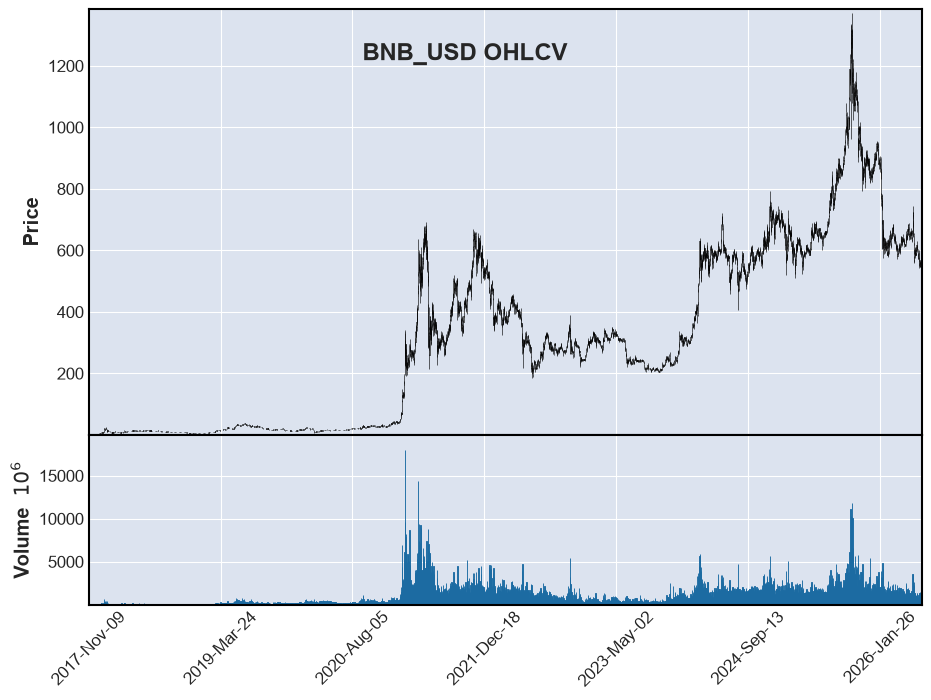}\caption{BNB}\end{subfigure}\hfill
	\begin{subfigure}[b]{0.155\textwidth}\includegraphics[width=\linewidth]{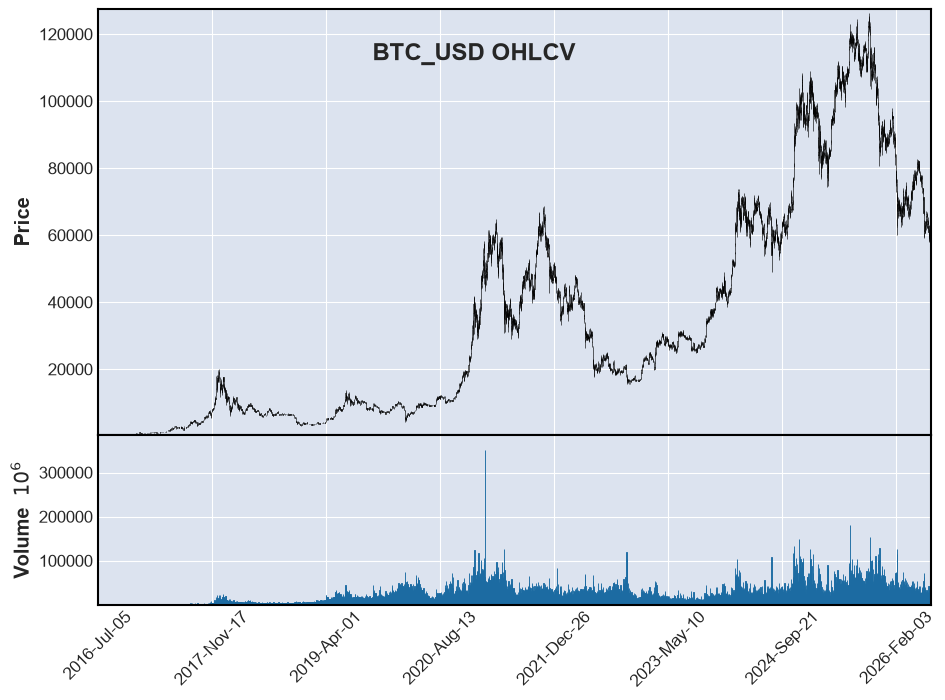}\caption{BTC}\end{subfigure}\hfill
	\begin{subfigure}[b]{0.155\textwidth}\includegraphics[width=\linewidth]{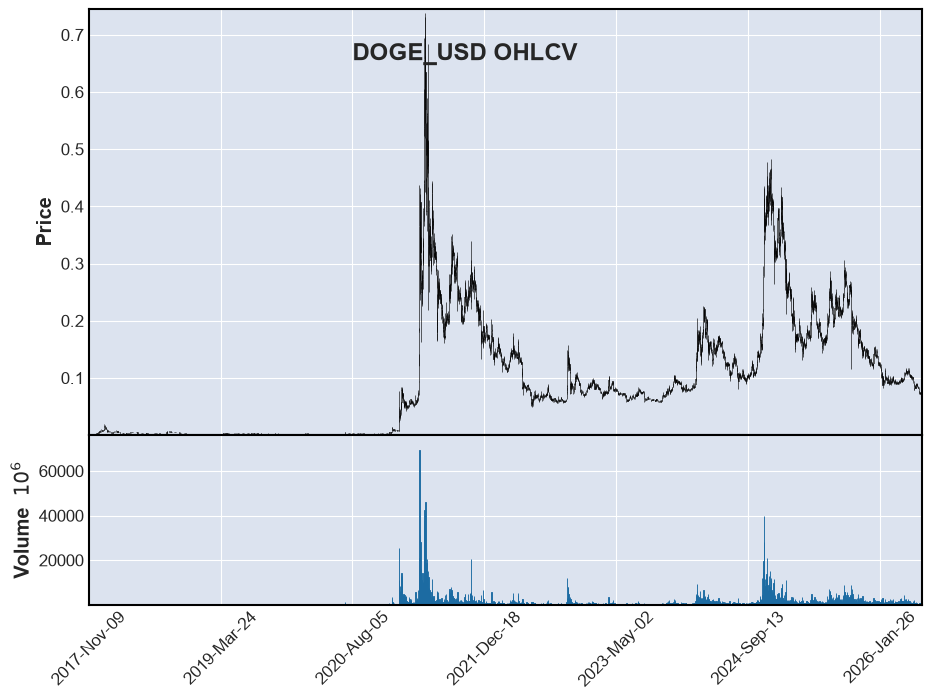}\caption{DOGE}\end{subfigure}\hfill
	\begin{subfigure}[b]{0.155\textwidth}\includegraphics[width=\linewidth]{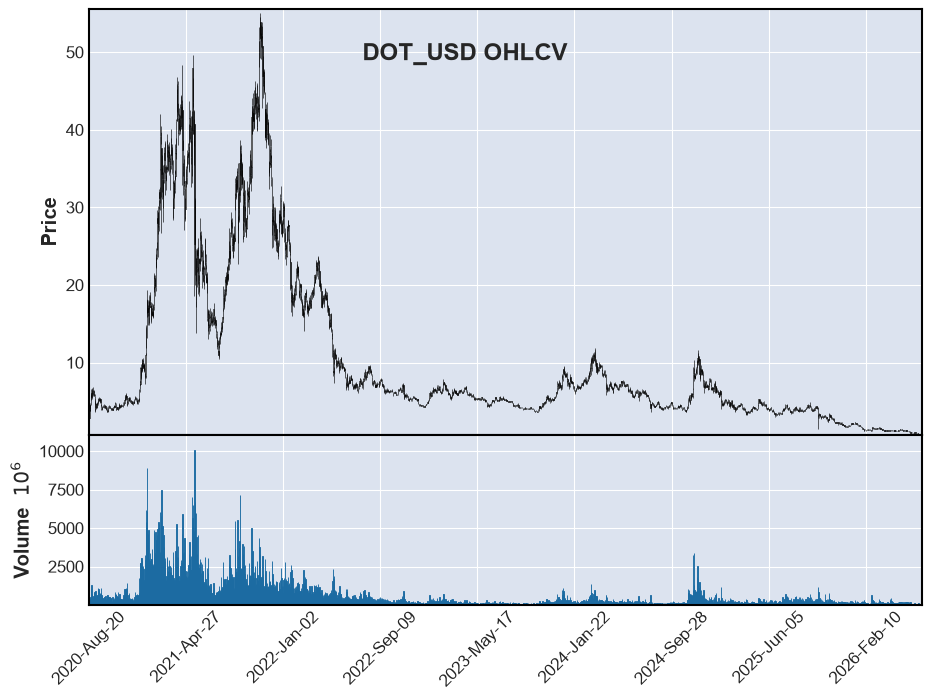}\caption{DOT}\end{subfigure}\hfill
	\begin{subfigure}[b]{0.155\textwidth}\includegraphics[width=\linewidth]{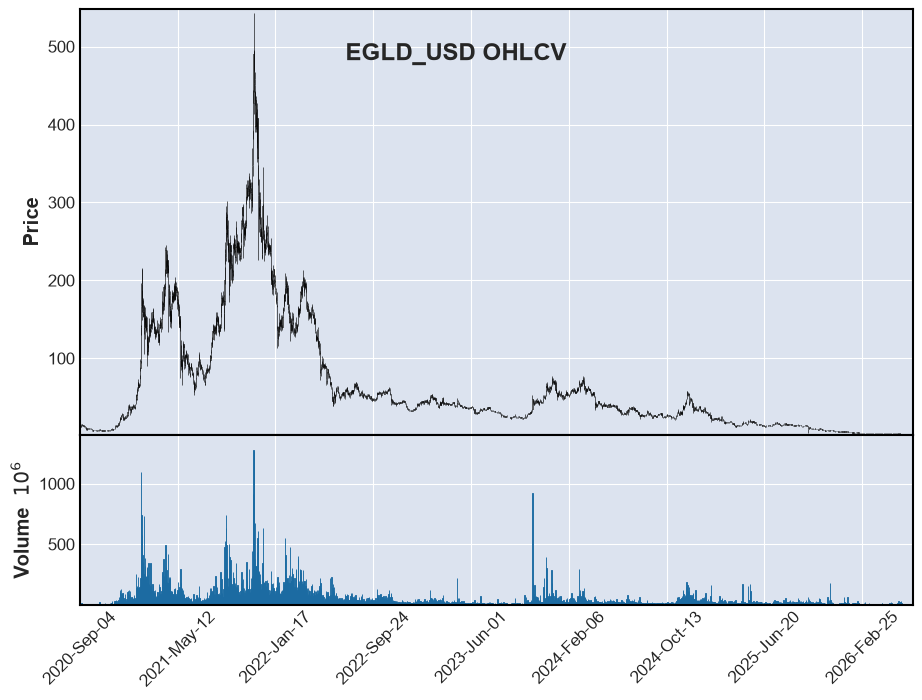}\caption{EGLD}\end{subfigure}\hfill
	\begin{subfigure}[b]{0.155\textwidth}\includegraphics[width=\linewidth]{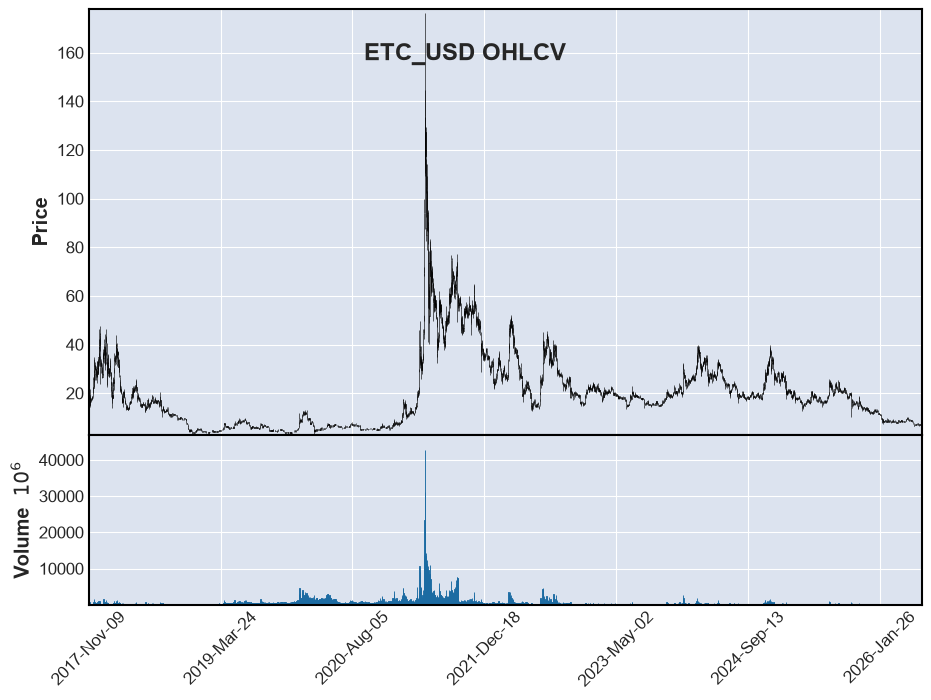}\caption{ETC}\end{subfigure}
	
	\begin{subfigure}[b]{0.155\textwidth}\includegraphics[width=\linewidth]{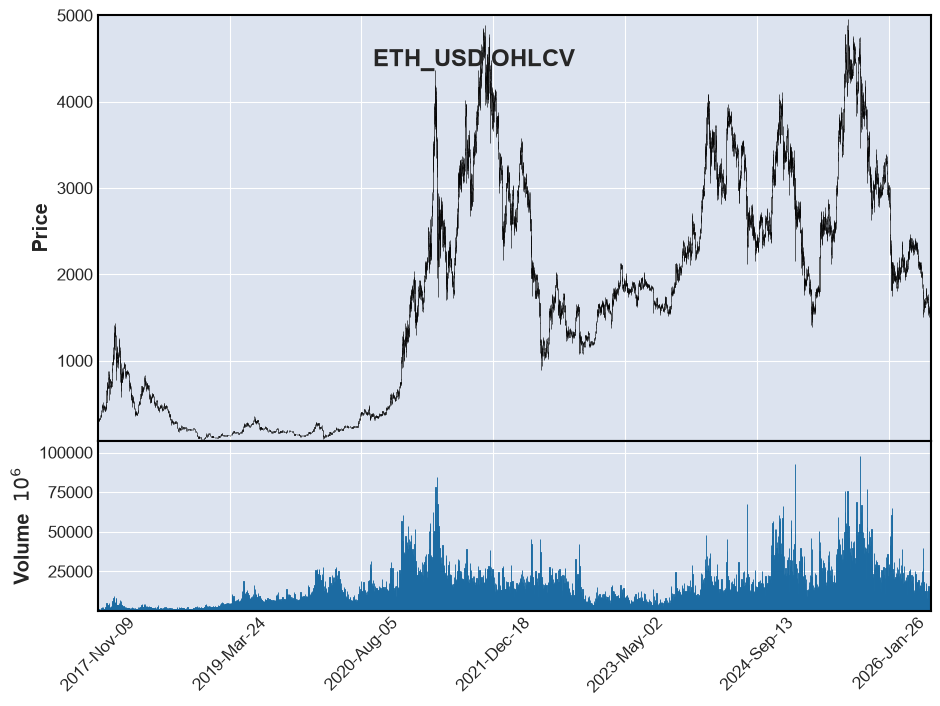}\caption{ETH}\end{subfigure}\hfill
	\begin{subfigure}[b]{0.155\textwidth}\includegraphics[width=\linewidth]{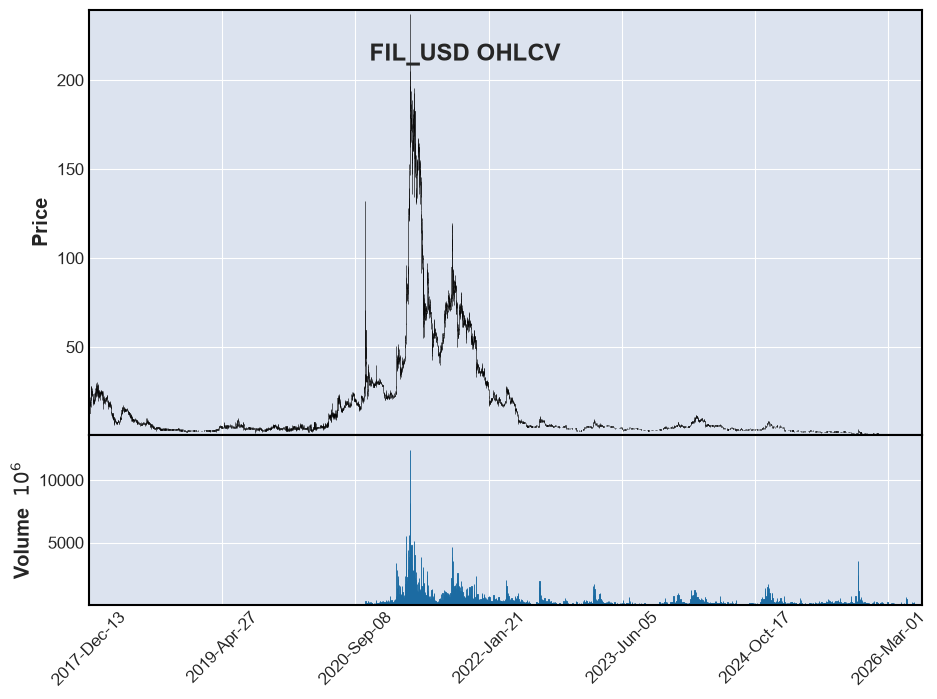}\caption{FIL}\end{subfigure}\hfill
	\begin{subfigure}[b]{0.155\textwidth}\includegraphics[width=\linewidth]{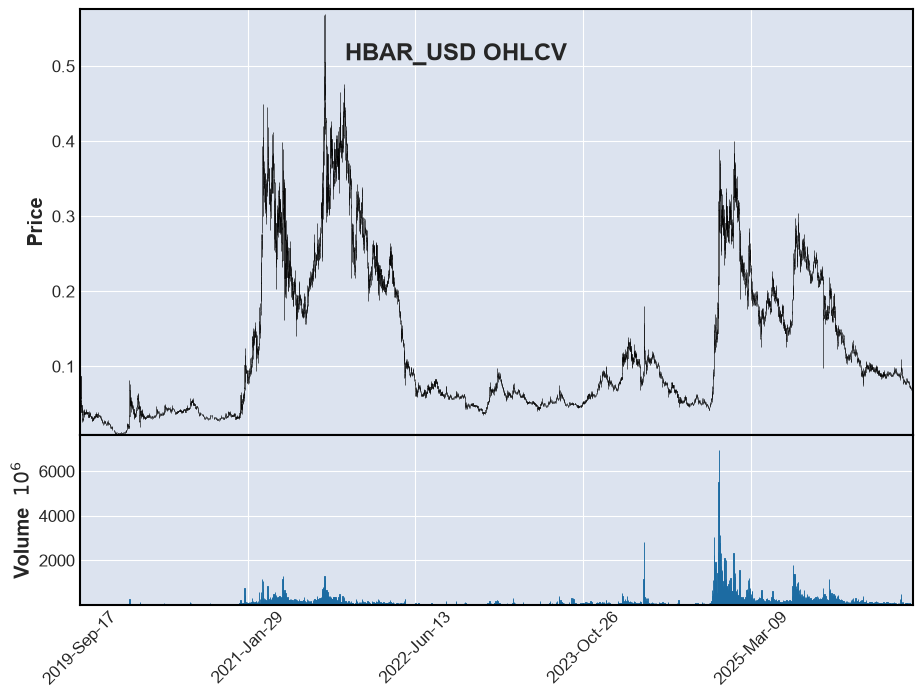}\caption{HBAR}\end{subfigure}\hfill
	\begin{subfigure}[b]{0.155\textwidth}\includegraphics[width=\linewidth]{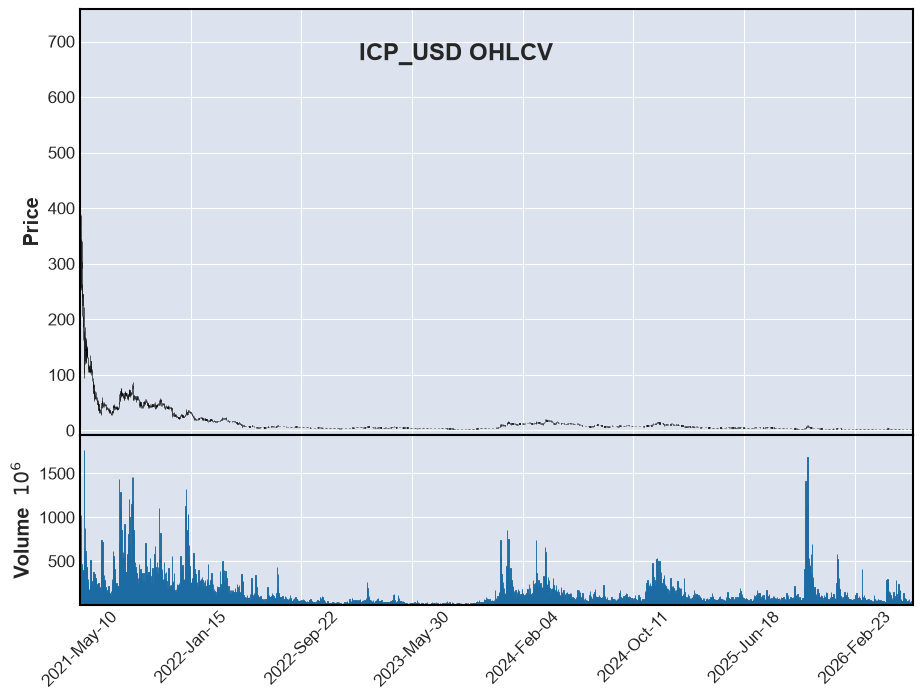}\caption{ICP}\end{subfigure}\hfill
	\begin{subfigure}[b]{0.155\textwidth}\includegraphics[width=\linewidth]{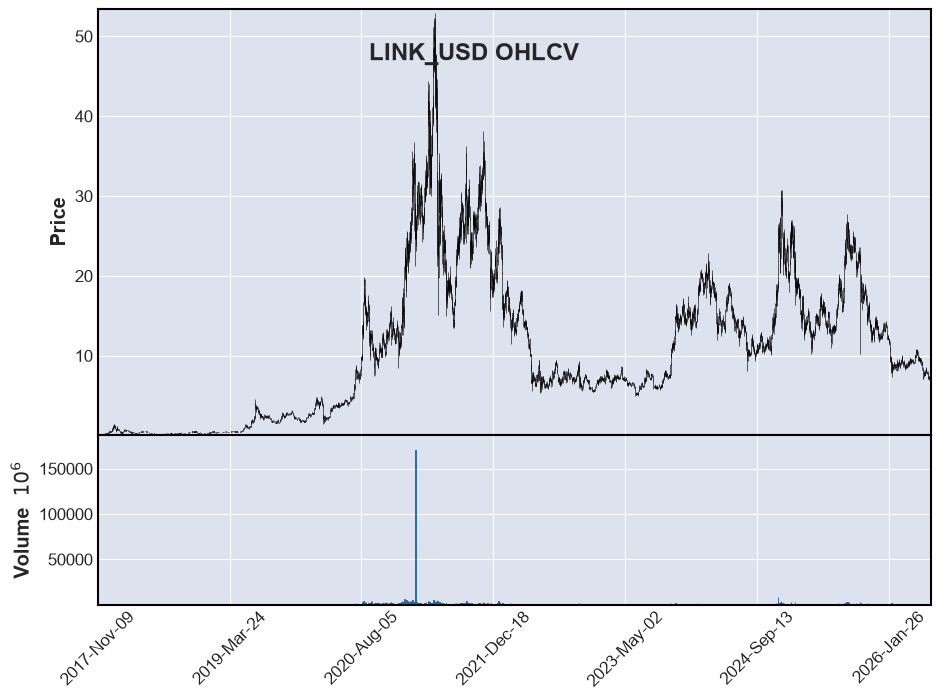}\caption{LINK}\end{subfigure}\hfill
	\begin{subfigure}[b]{0.155\textwidth}\includegraphics[width=\linewidth]{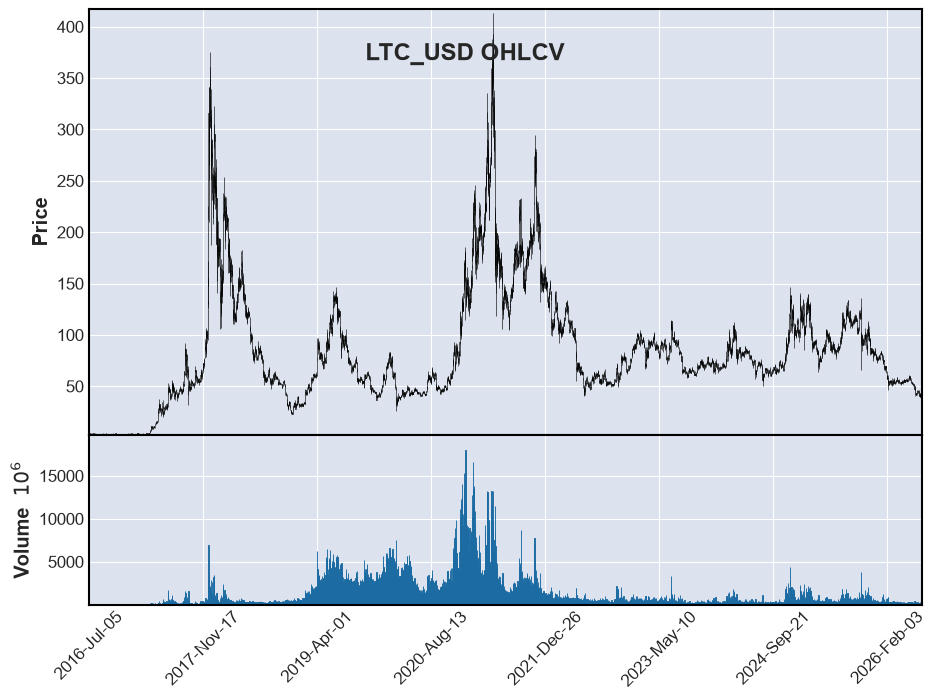}\caption{LTC}\end{subfigure}
	
	\begin{subfigure}[b]{0.155\textwidth}\includegraphics[width=\linewidth]{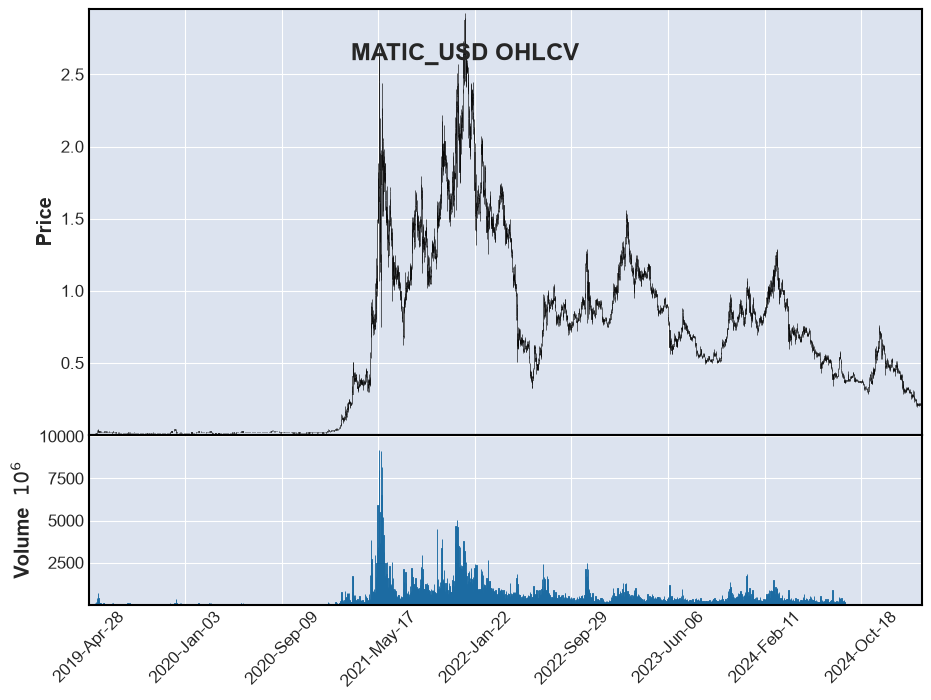}\caption{MATIC}\end{subfigure}\hfill
	\begin{subfigure}[b]{0.155\textwidth}\includegraphics[width=\linewidth]{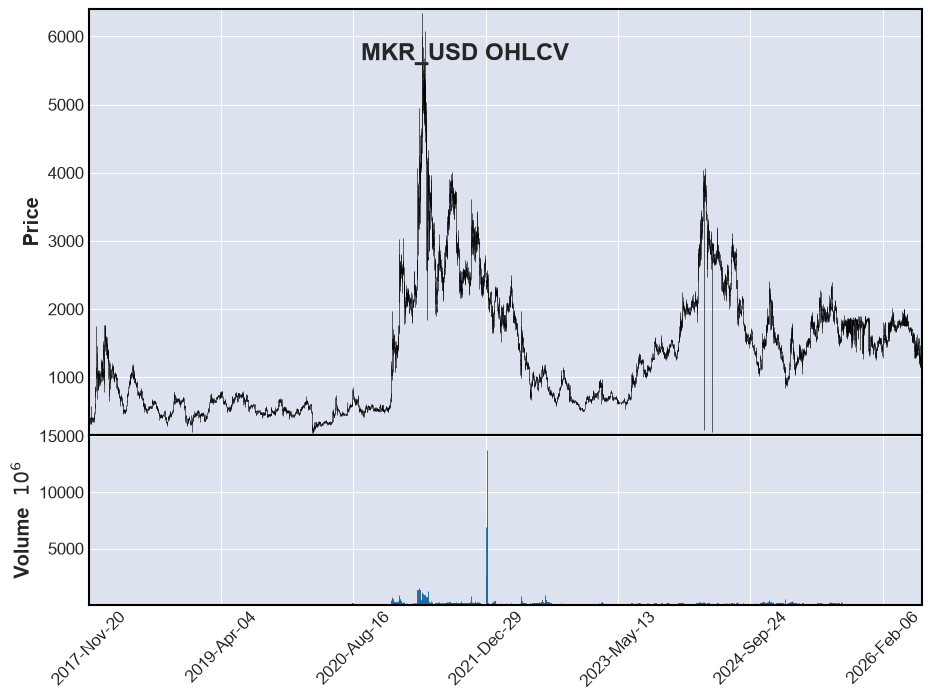}\caption{MKR}\end{subfigure}\hfill
	\begin{subfigure}[b]{0.155\textwidth}\includegraphics[width=\linewidth]{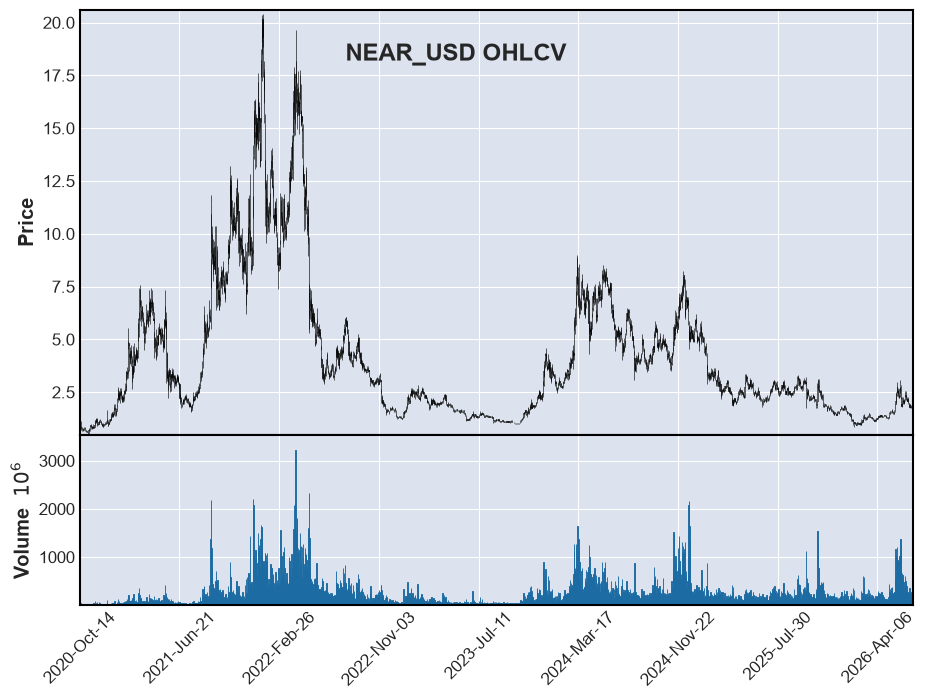}\caption{NEAR}\end{subfigure}\hfill
	\begin{subfigure}[b]{0.155\textwidth}\includegraphics[width=\linewidth]{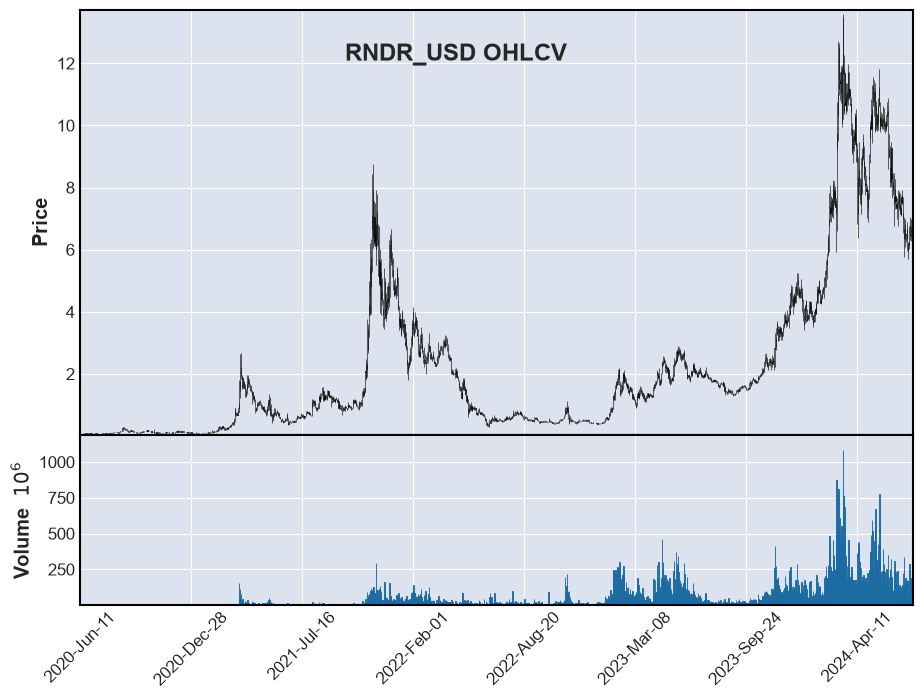}\caption{RNDR}\end{subfigure}\hfill
	\begin{subfigure}[b]{0.155\textwidth}\includegraphics[width=\linewidth]{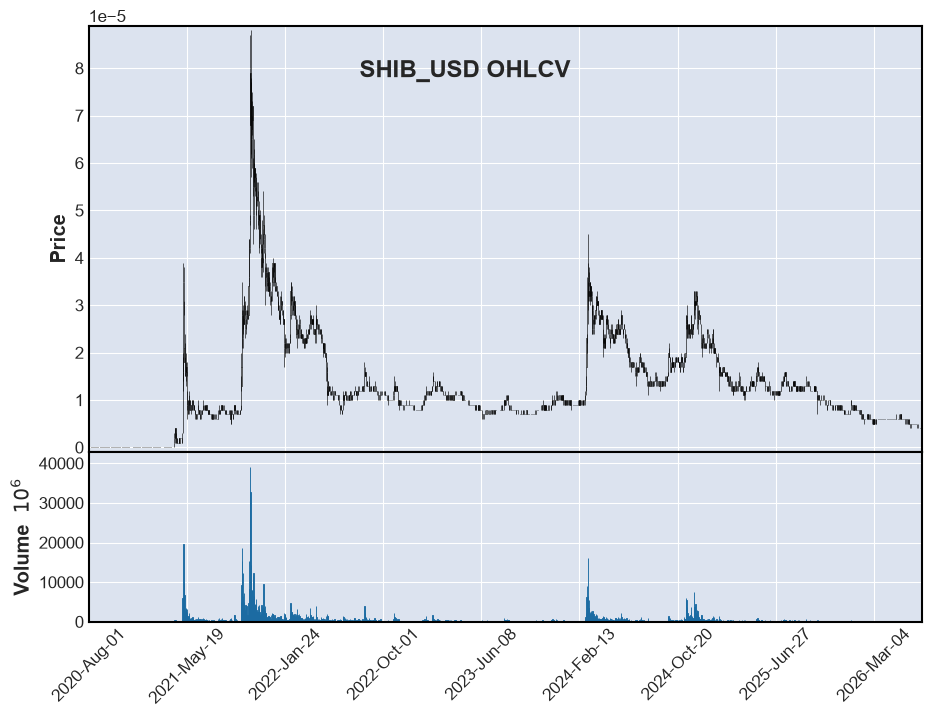}\caption{SHIB}\end{subfigure}\hfill
	\begin{subfigure}[b]{0.155\textwidth}\includegraphics[width=\linewidth]{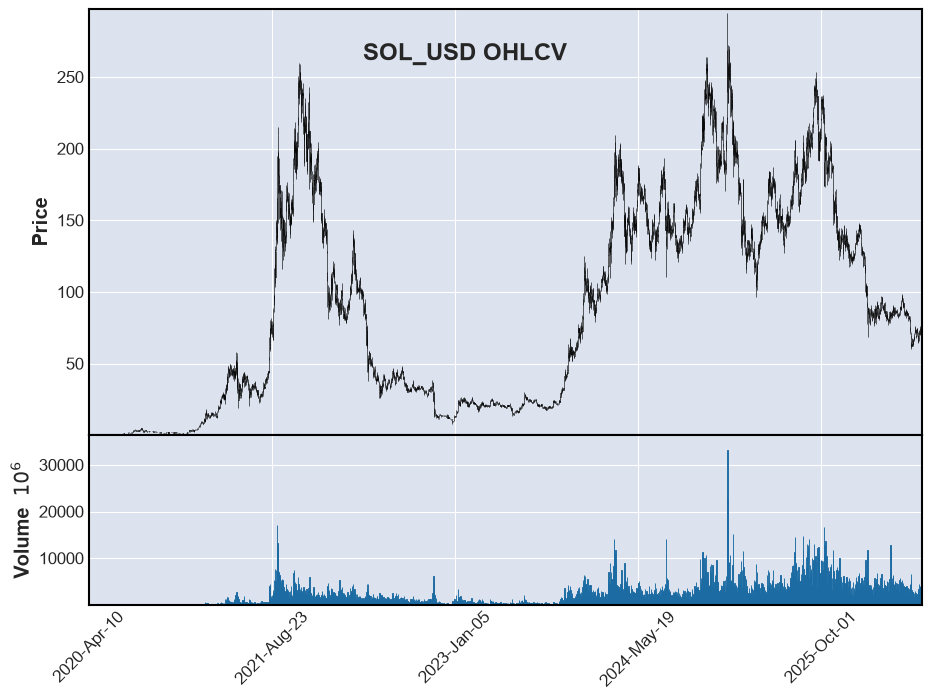}\caption{SOL}\end{subfigure}
	
	\begin{subfigure}[b]{0.155\textwidth}\includegraphics[width=\linewidth]{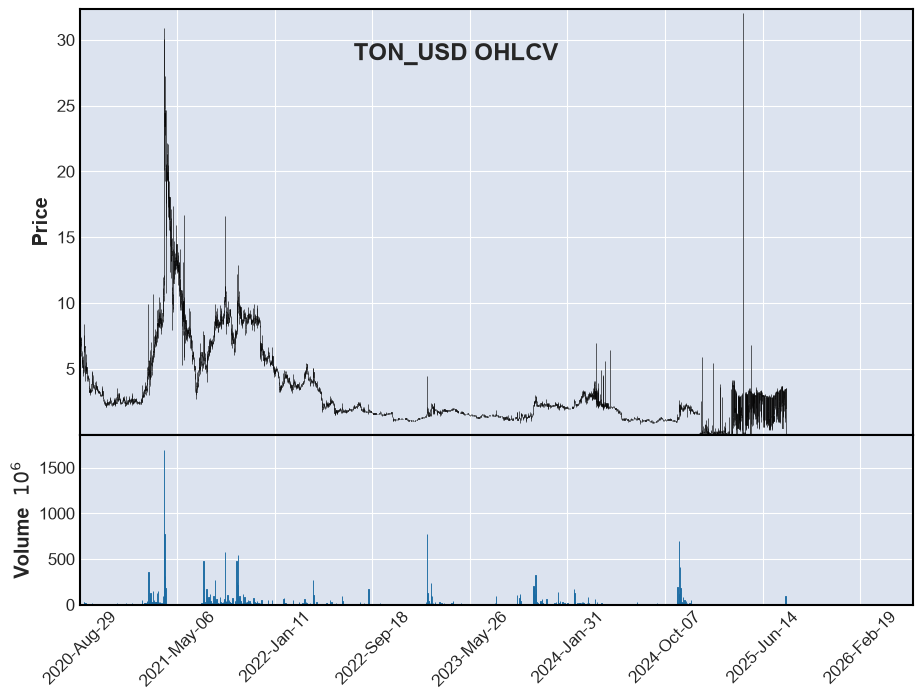}\caption{TON}\end{subfigure}\hfill
	\begin{subfigure}[b]{0.155\textwidth}\includegraphics[width=\linewidth]{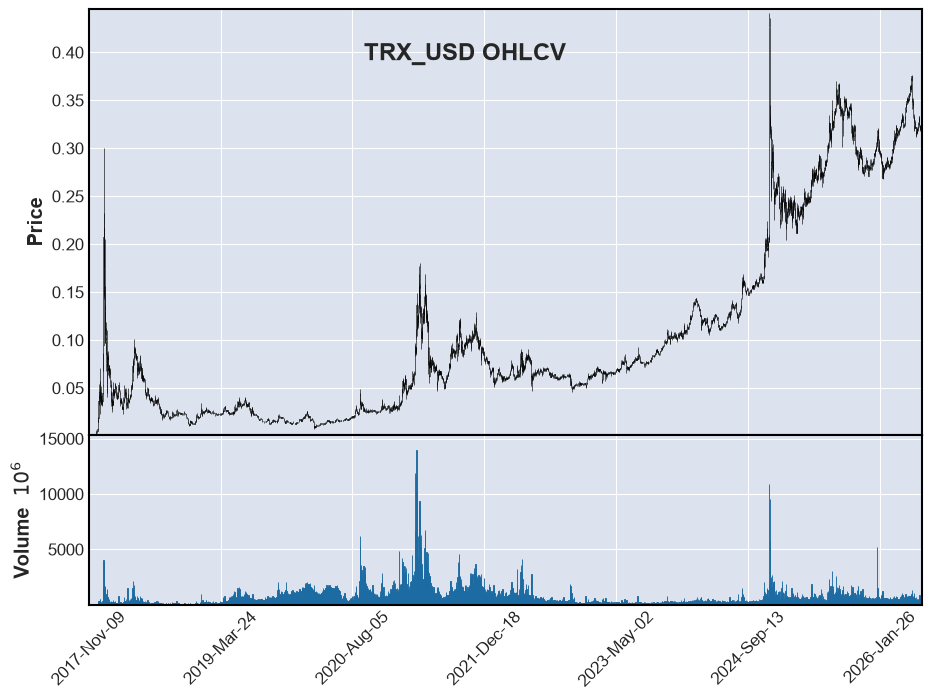}\caption{TRX}\end{subfigure}\hfill
	\begin{subfigure}[b]{0.155\textwidth}\includegraphics[width=\linewidth]{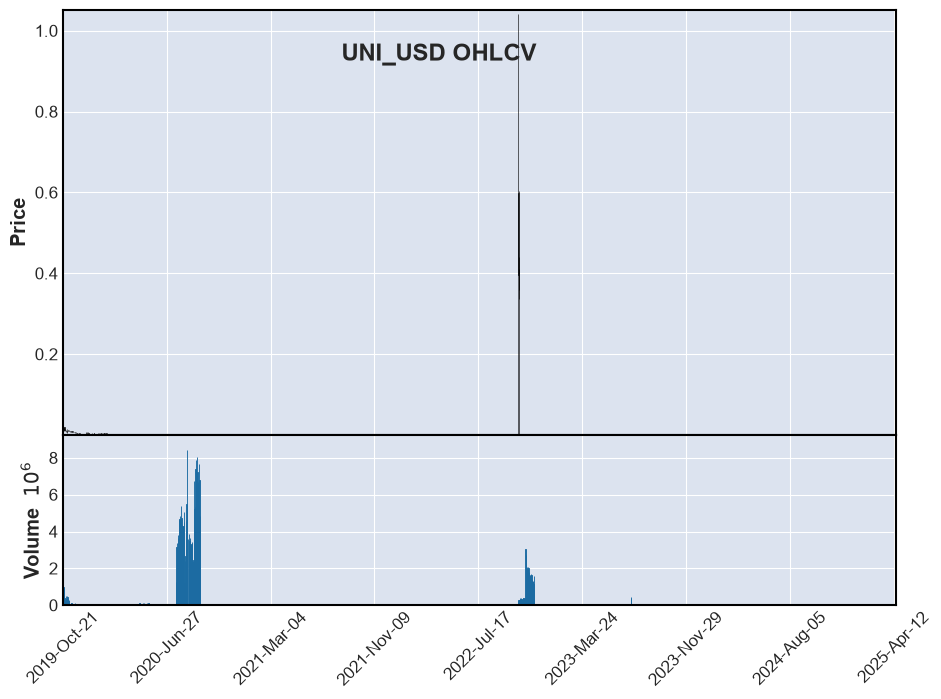}\caption{UNI}\end{subfigure}\hfill
	\begin{subfigure}[b]{0.155\textwidth}\includegraphics[width=\linewidth]{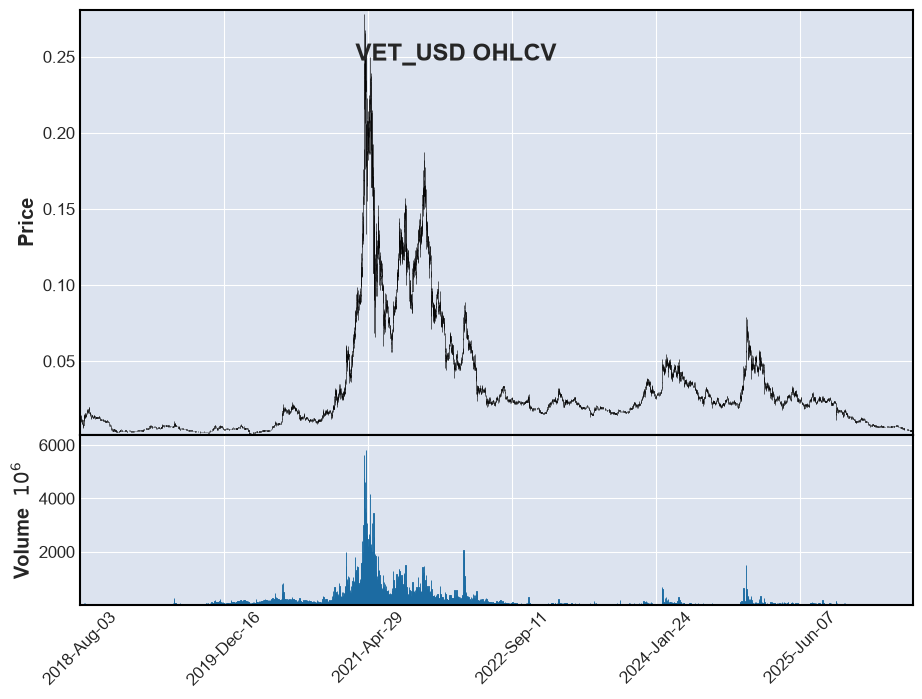}\caption{VET}\end{subfigure}\hfill
	\begin{subfigure}[b]{0.155\textwidth}\includegraphics[width=\linewidth]{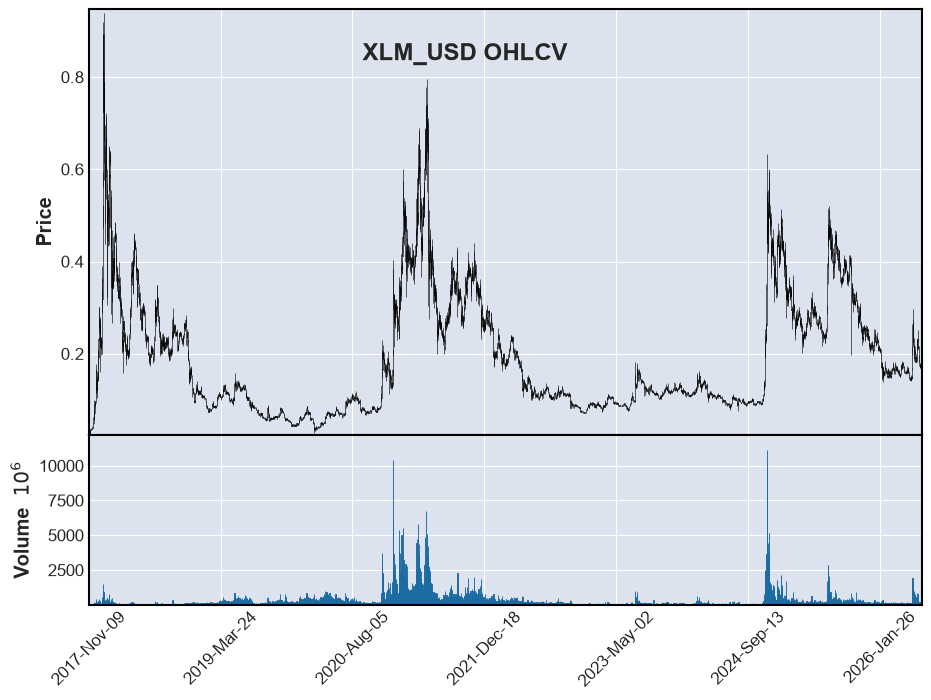}\caption{XLM}\end{subfigure}\hfill
	\begin{subfigure}[b]{0.155\textwidth}\includegraphics[width=\linewidth]{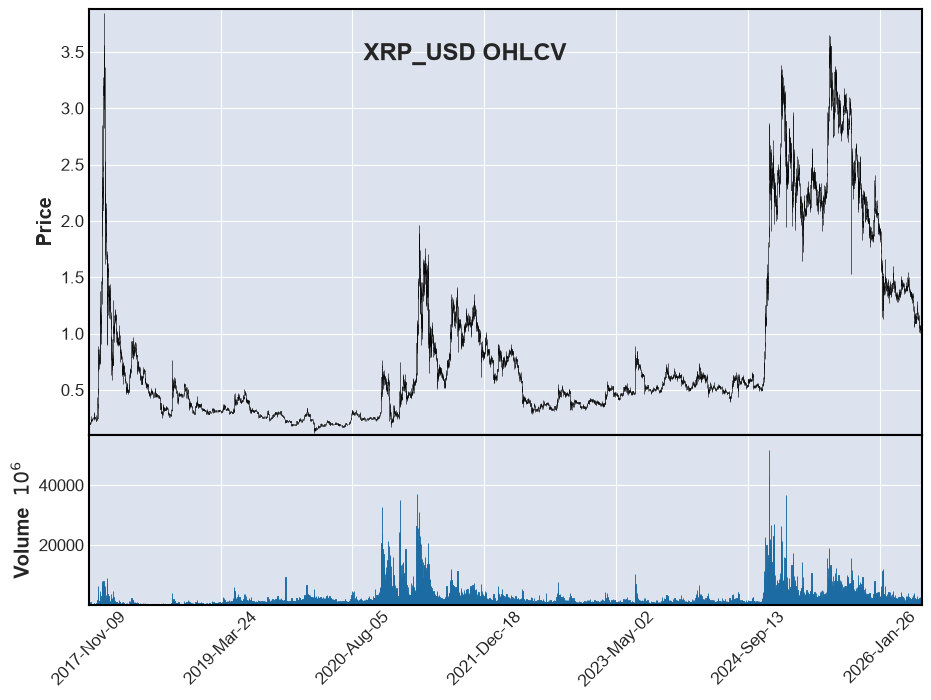}\caption{XRP}\end{subfigure}
	
	\caption{Daily OHLC price histories for the 30 cryptocurrency symbols used in the experiments.}
	\label{fig:ohlc_crypto}
\end{figure*}

\begin{figure*}[htbp]
	\centering
	\scriptsize
	\begin{subfigure}[b]{0.155\textwidth}\includegraphics[width=\linewidth]{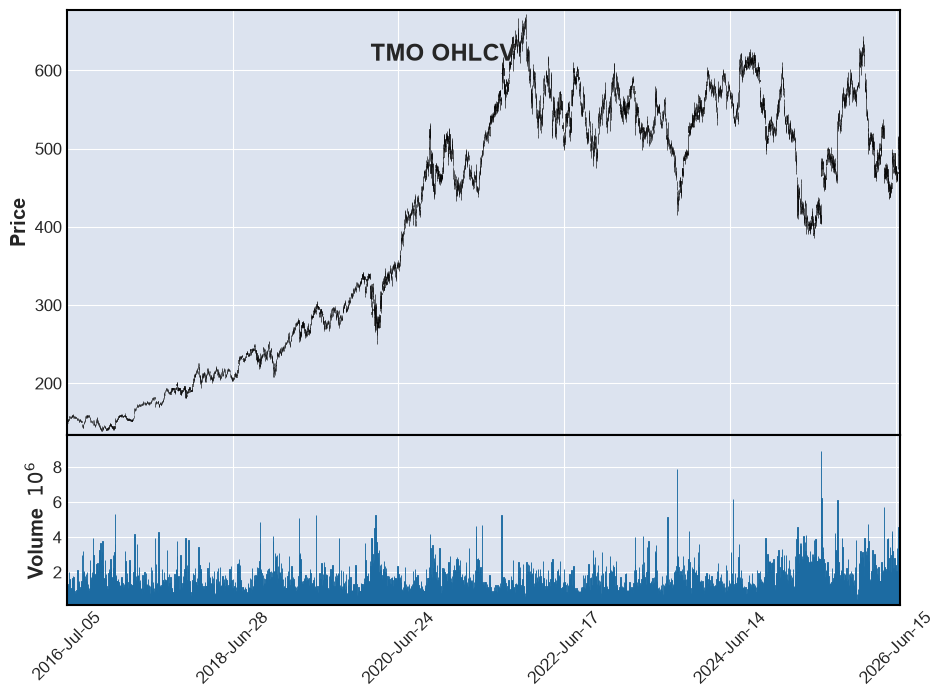}\caption{TMO}\end{subfigure}\hfill
	\begin{subfigure}[b]{0.155\textwidth}\includegraphics[width=\linewidth]{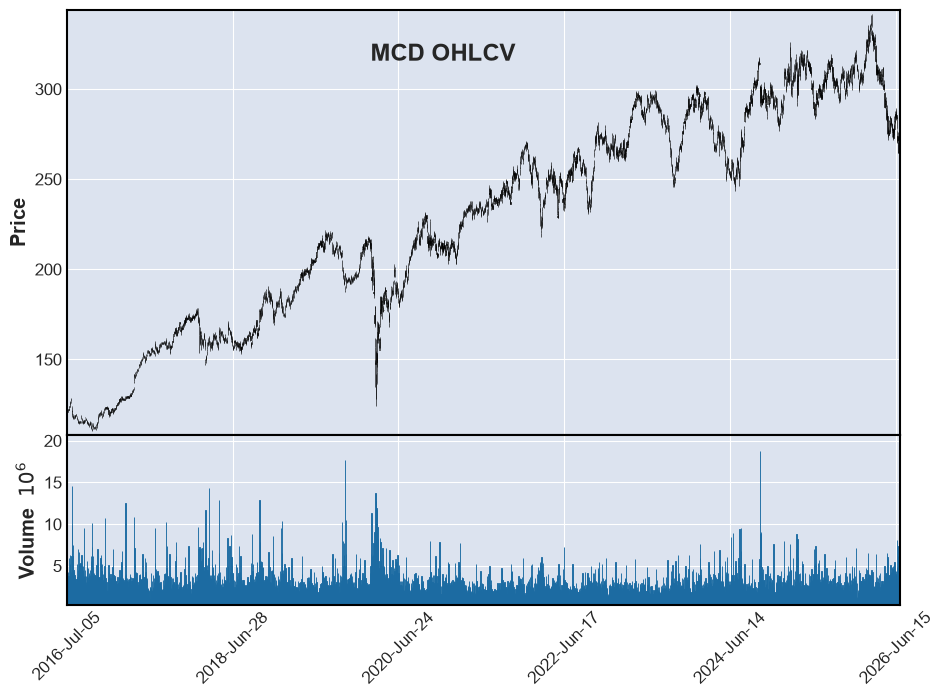}\caption{MCD}\end{subfigure}\hfill
	\begin{subfigure}[b]{0.155\textwidth}\includegraphics[width=\linewidth]{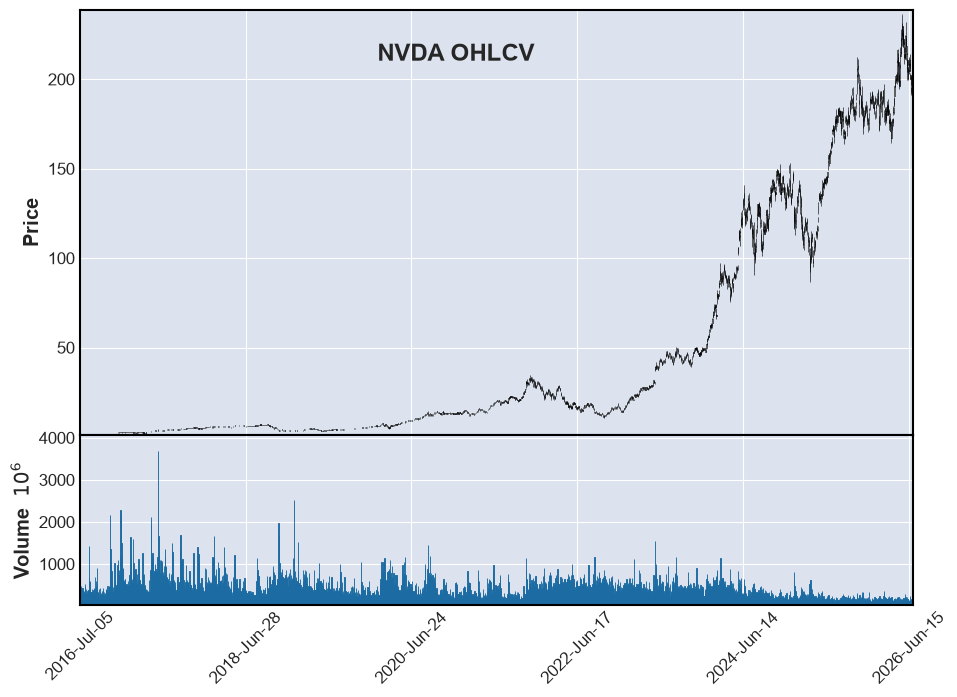}\caption{NVDA}\end{subfigure}\hfill
	\begin{subfigure}[b]{0.155\textwidth}\includegraphics[width=\linewidth]{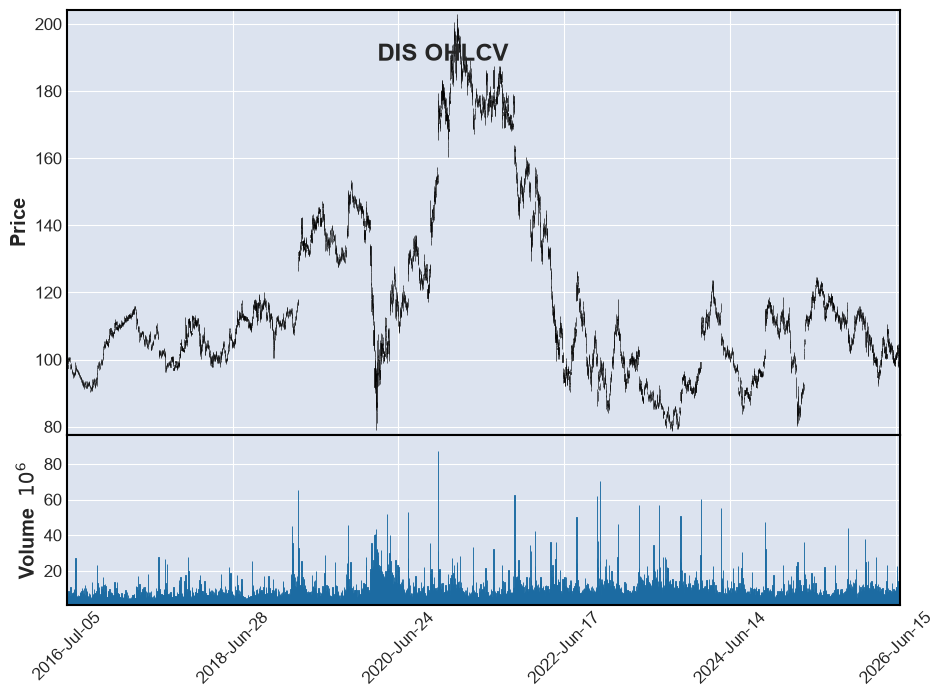}\caption{DIS}\end{subfigure}\hfill
	\begin{subfigure}[b]{0.155\textwidth}\includegraphics[width=\linewidth]{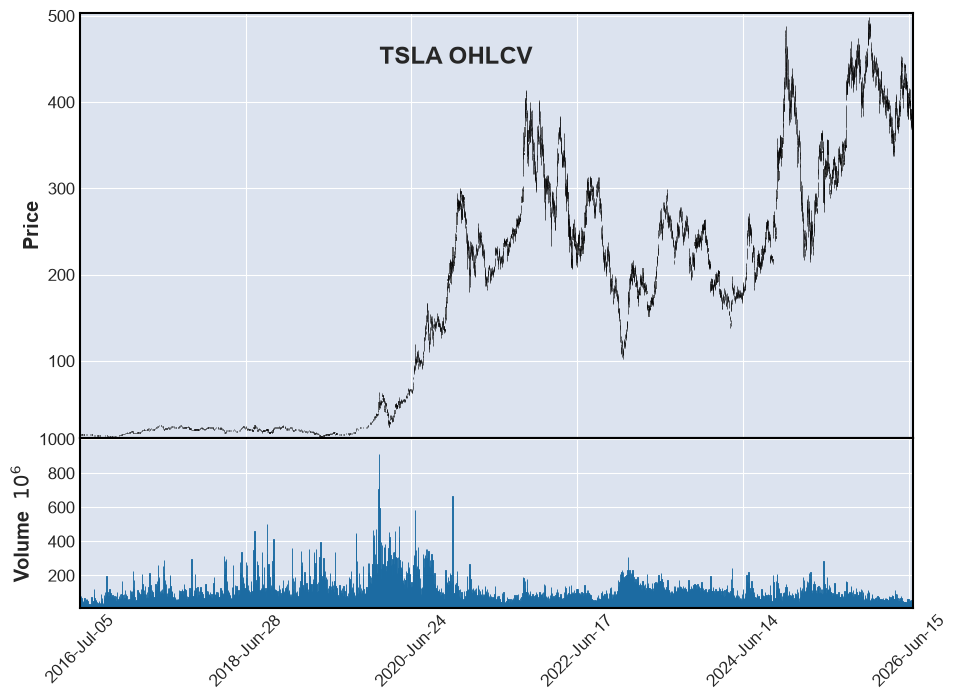}\caption{TSLA}\end{subfigure}\hfill
	\begin{subfigure}[b]{0.155\textwidth}\includegraphics[width=\linewidth]{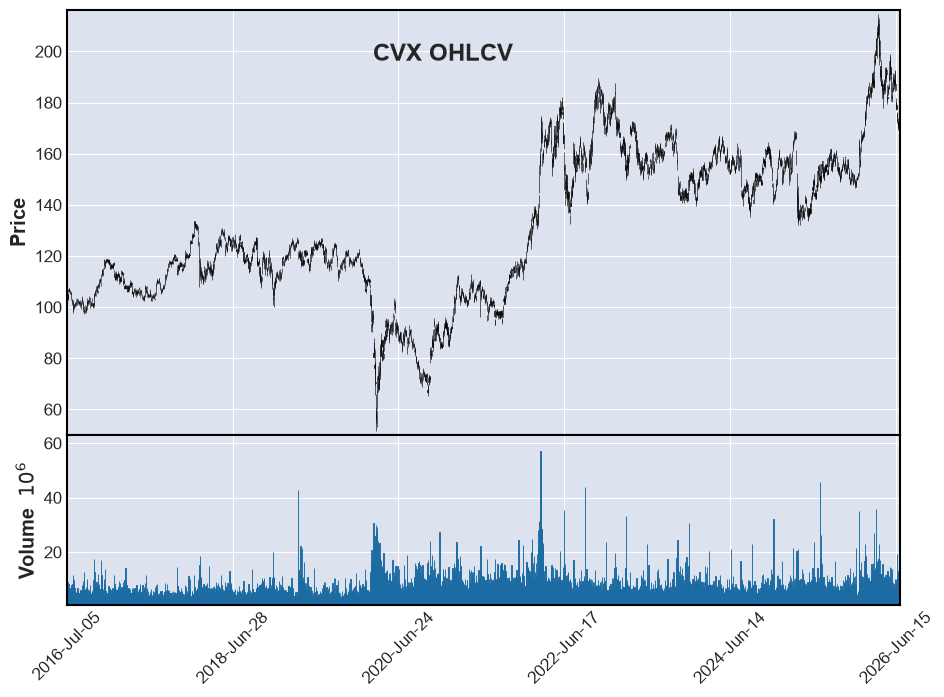}\caption{CVX}\end{subfigure}
	
	\begin{subfigure}[b]{0.155\textwidth}\includegraphics[width=\linewidth]{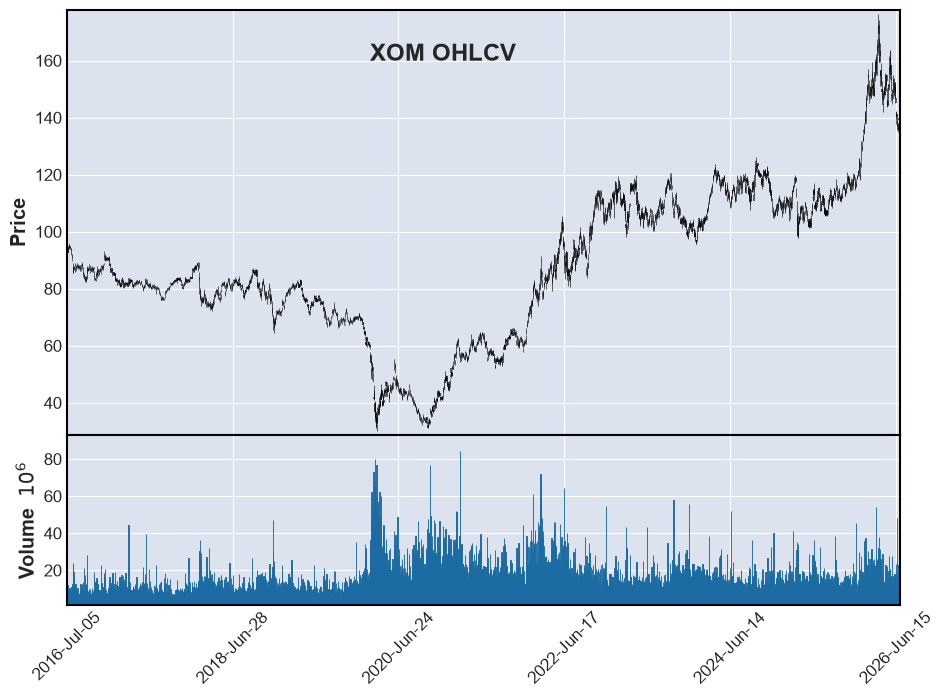}\caption{XOM}\end{subfigure}\hfill
	\begin{subfigure}[b]{0.155\textwidth}\includegraphics[width=\linewidth]{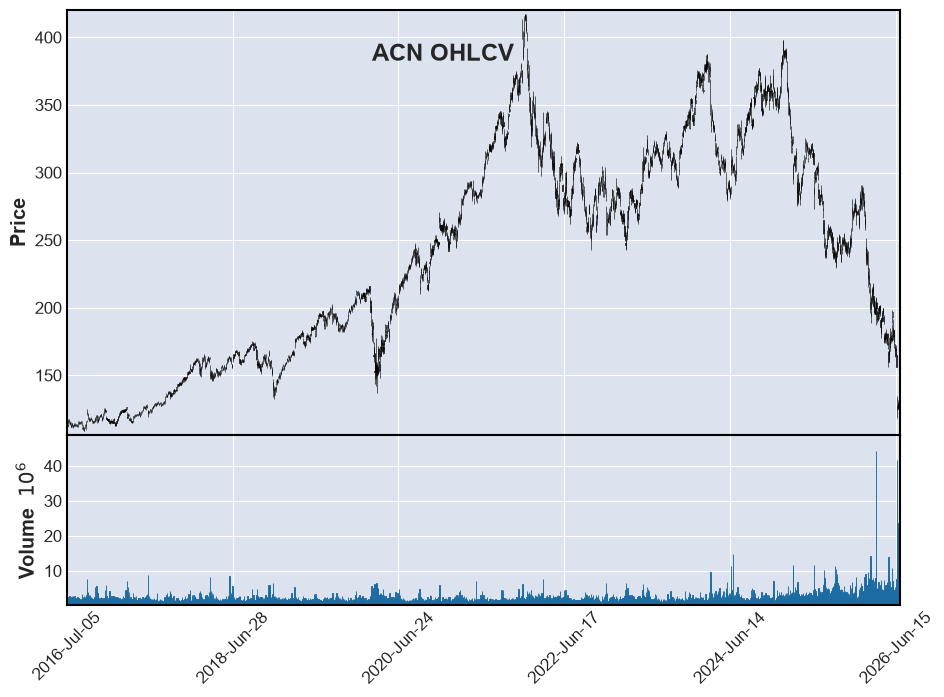}\caption{ACN}\end{subfigure}\hfill
	\begin{subfigure}[b]{0.155\textwidth}\includegraphics[width=\linewidth]{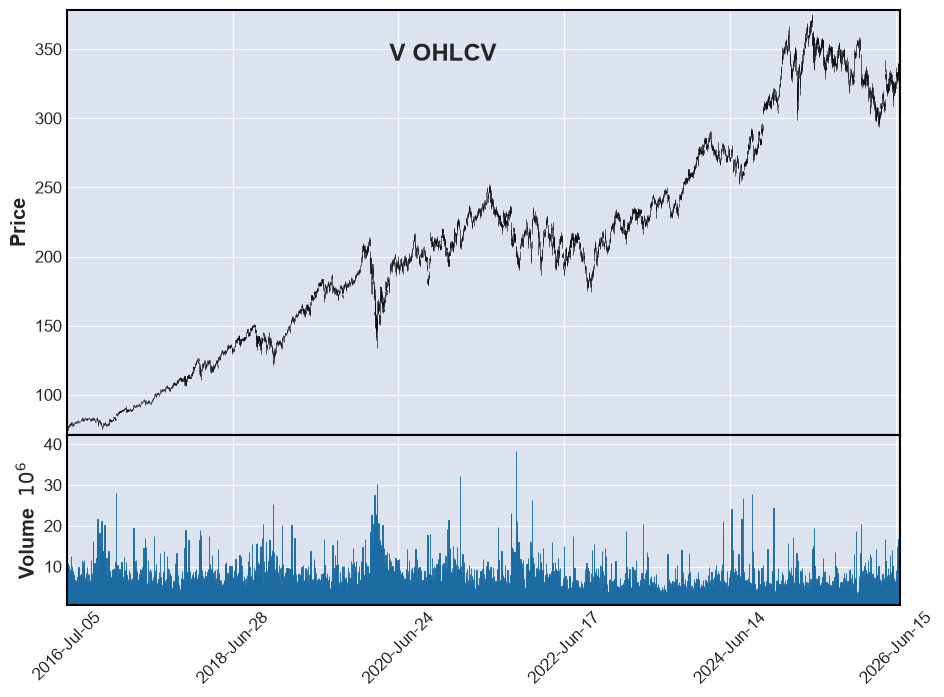}\caption{V}\end{subfigure}\hfill
	\begin{subfigure}[b]{0.155\textwidth}\includegraphics[width=\linewidth]{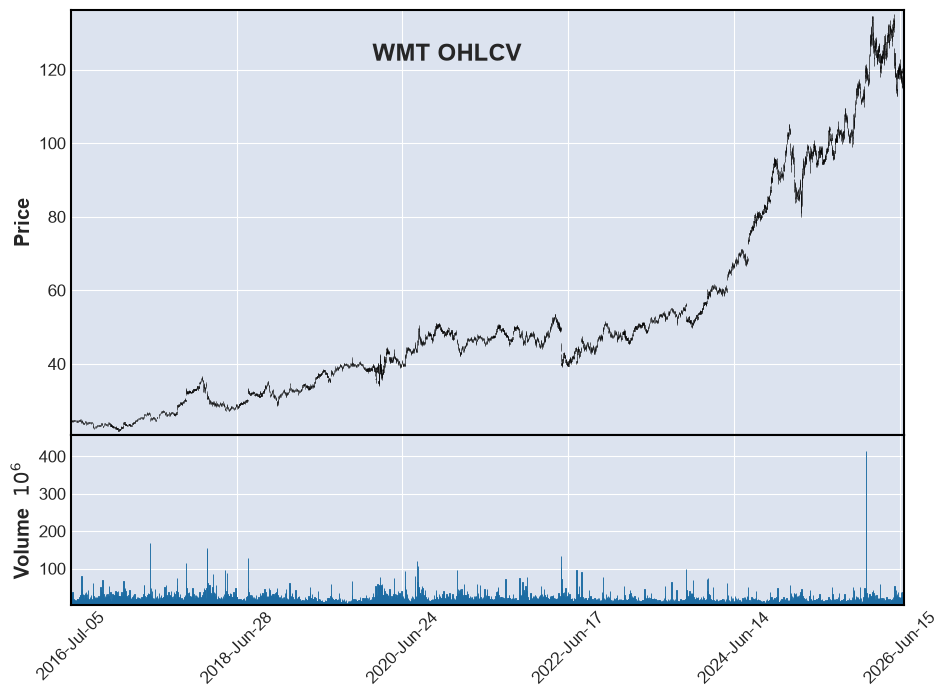}\caption{WMT}\end{subfigure}\hfill
	\begin{subfigure}[b]{0.155\textwidth}\includegraphics[width=\linewidth]{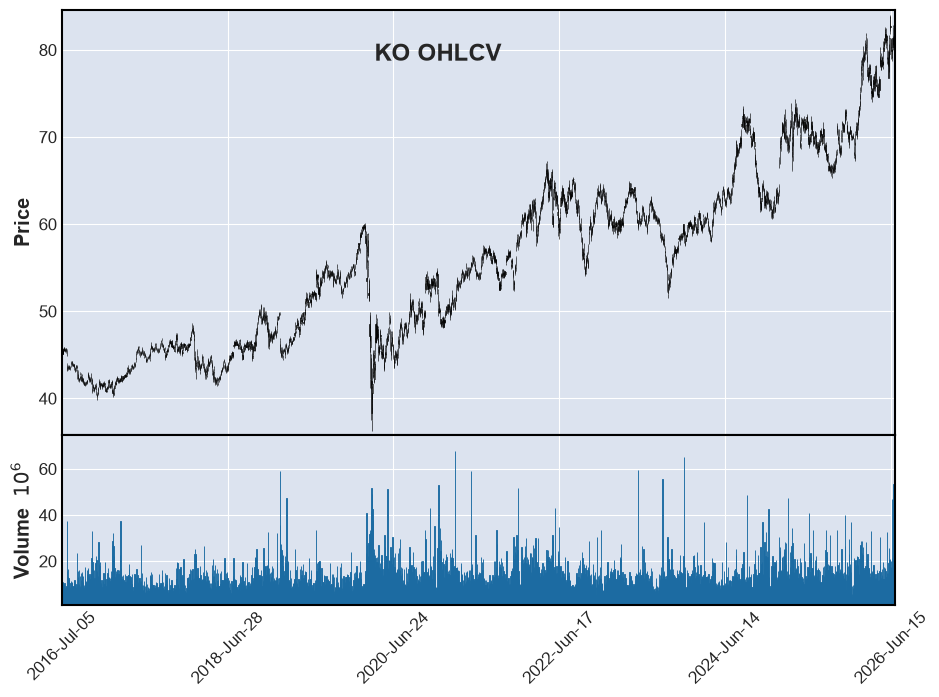}\caption{KO}\end{subfigure}\hfill
	\begin{subfigure}[b]{0.155\textwidth}\includegraphics[width=\linewidth]{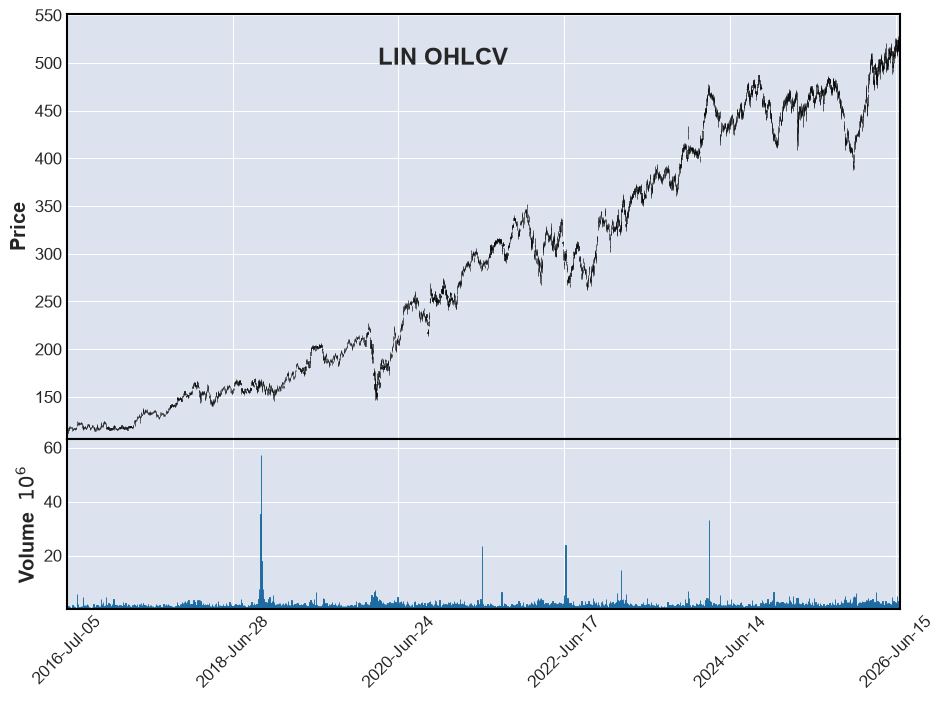}\caption{LIN}\end{subfigure}
	
	\begin{subfigure}[b]{0.155\textwidth}\includegraphics[width=\linewidth]{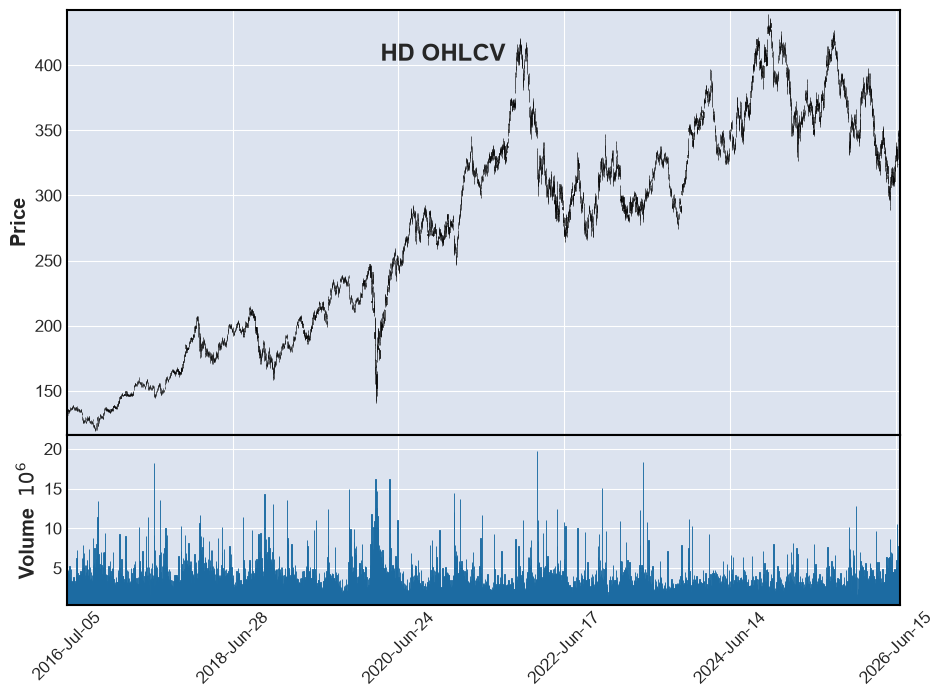}\caption{HD}\end{subfigure}\hfill
	\begin{subfigure}[b]{0.155\textwidth}\includegraphics[width=\linewidth]{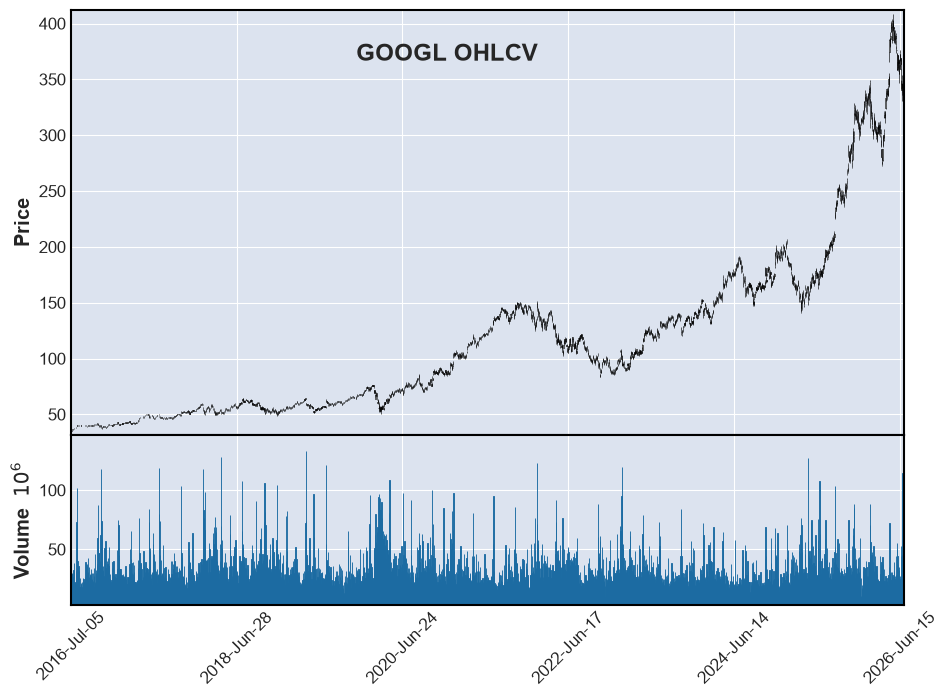}\caption{GOOGL}\end{subfigure}\hfill
	\begin{subfigure}[b]{0.155\textwidth}\includegraphics[width=\linewidth]{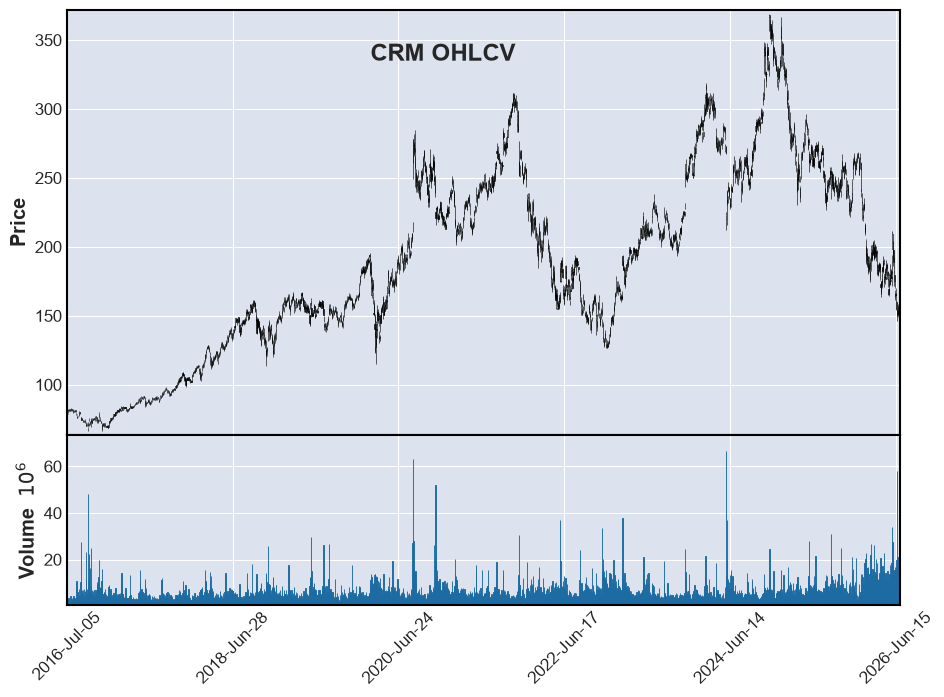}\caption{CRM}\end{subfigure}\hfill
	\begin{subfigure}[b]{0.155\textwidth}\includegraphics[width=\linewidth]{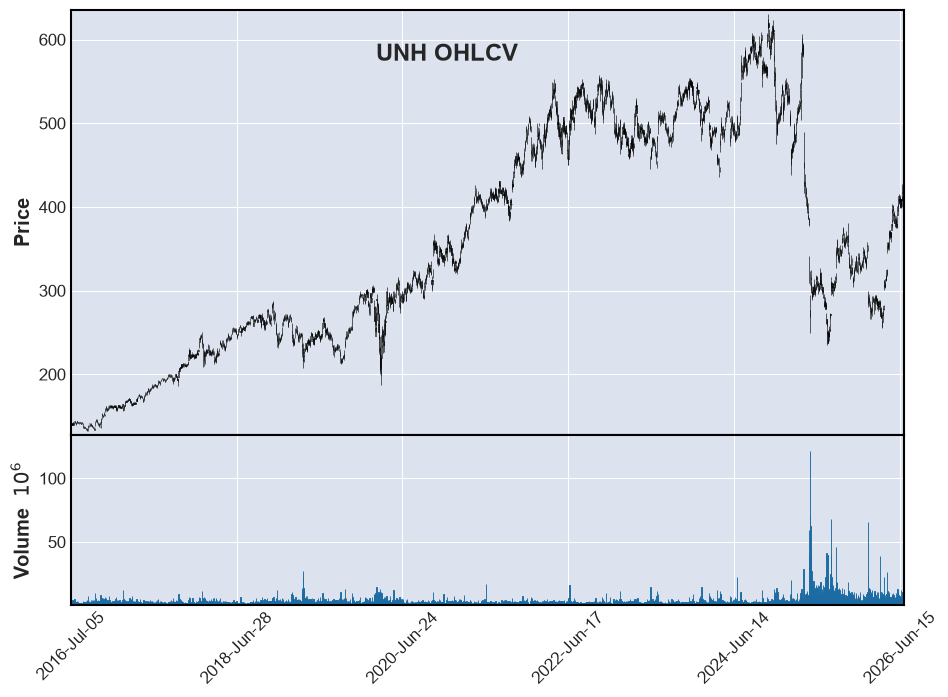}\caption{UNH}\end{subfigure}\hfill
	\begin{subfigure}[b]{0.155\textwidth}\includegraphics[width=\linewidth]{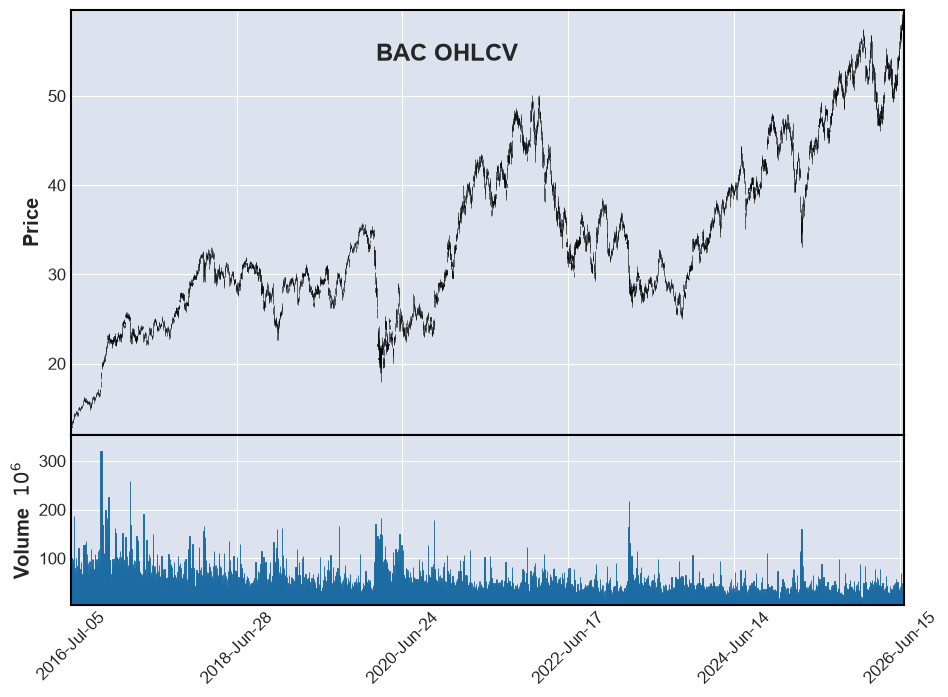}\caption{BAC}\end{subfigure}\hfill
	\begin{subfigure}[b]{0.155\textwidth}\includegraphics[width=\linewidth]{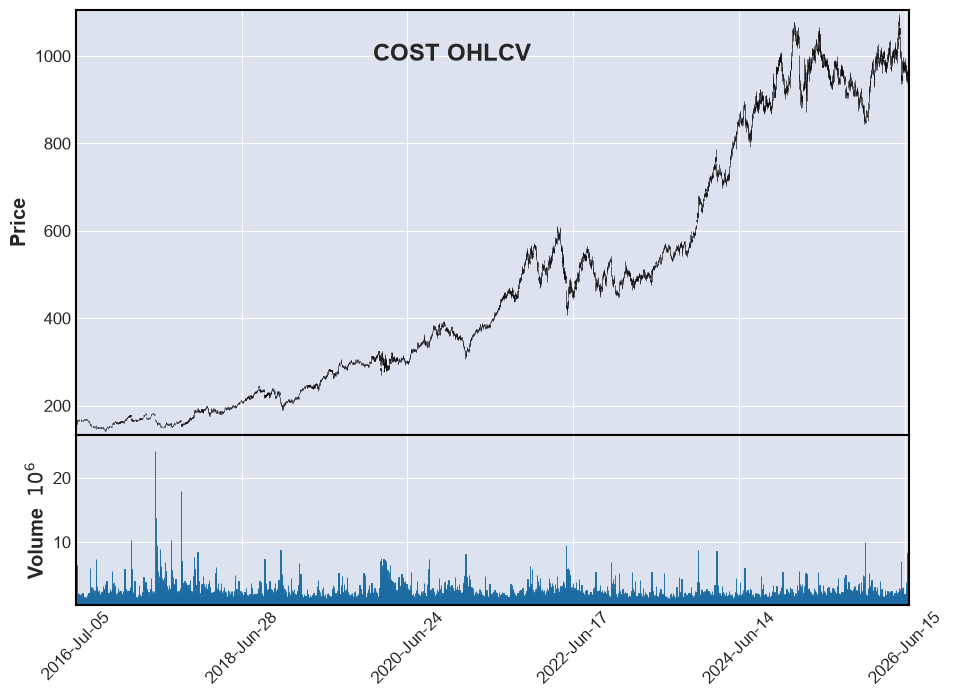}\caption{COST}\end{subfigure}
	
	\begin{subfigure}[b]{0.155\textwidth}\includegraphics[width=\linewidth]{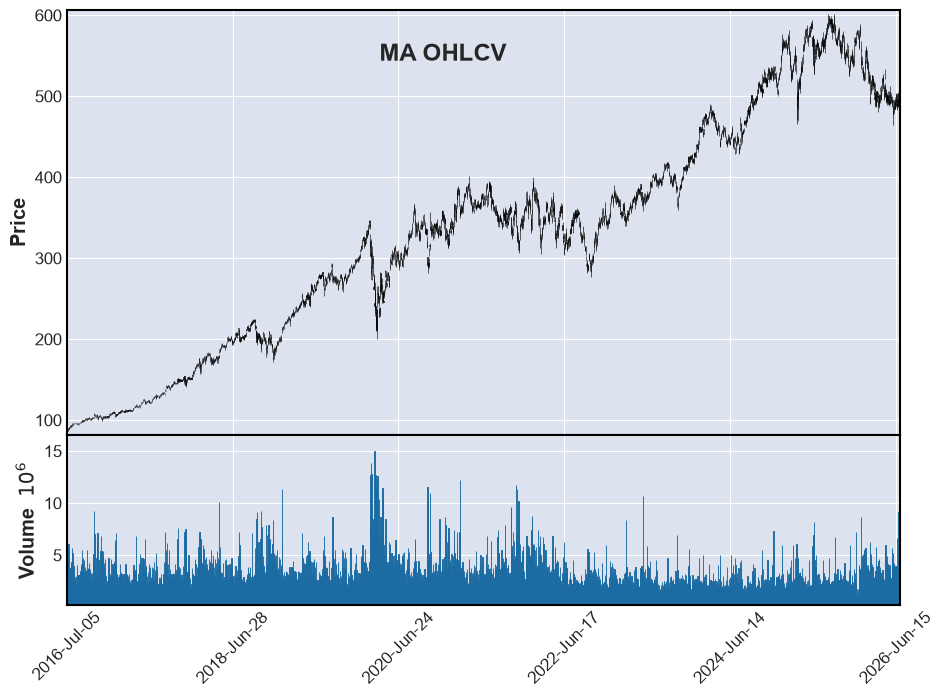}\caption{MA}\end{subfigure}\hfill
	\begin{subfigure}[b]{0.155\textwidth}\includegraphics[width=\linewidth]{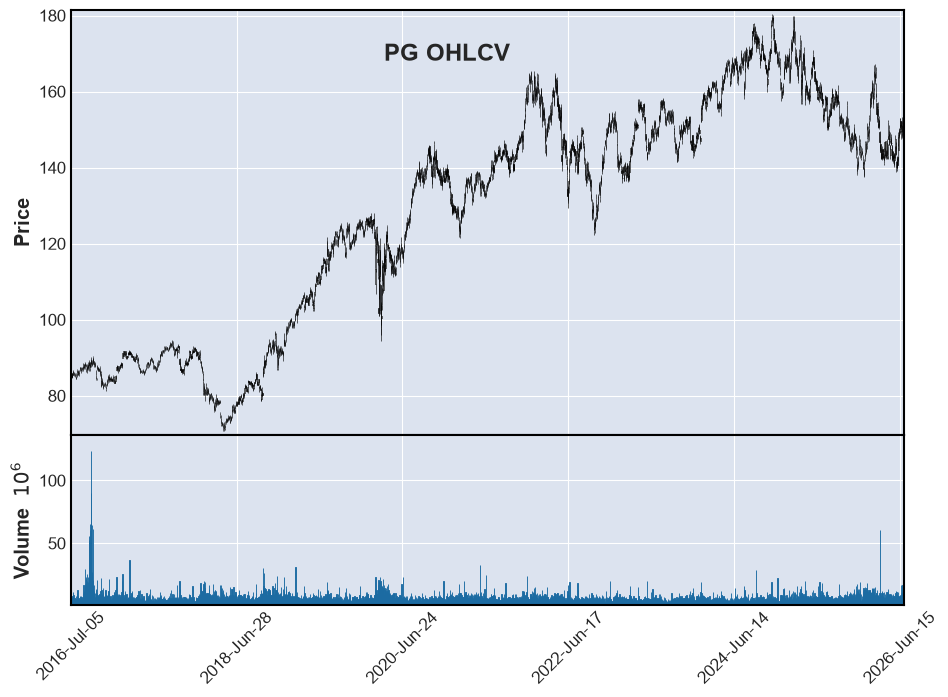}\caption{PG}\end{subfigure}\hfill
	\begin{subfigure}[b]{0.155\textwidth}\includegraphics[width=\linewidth]{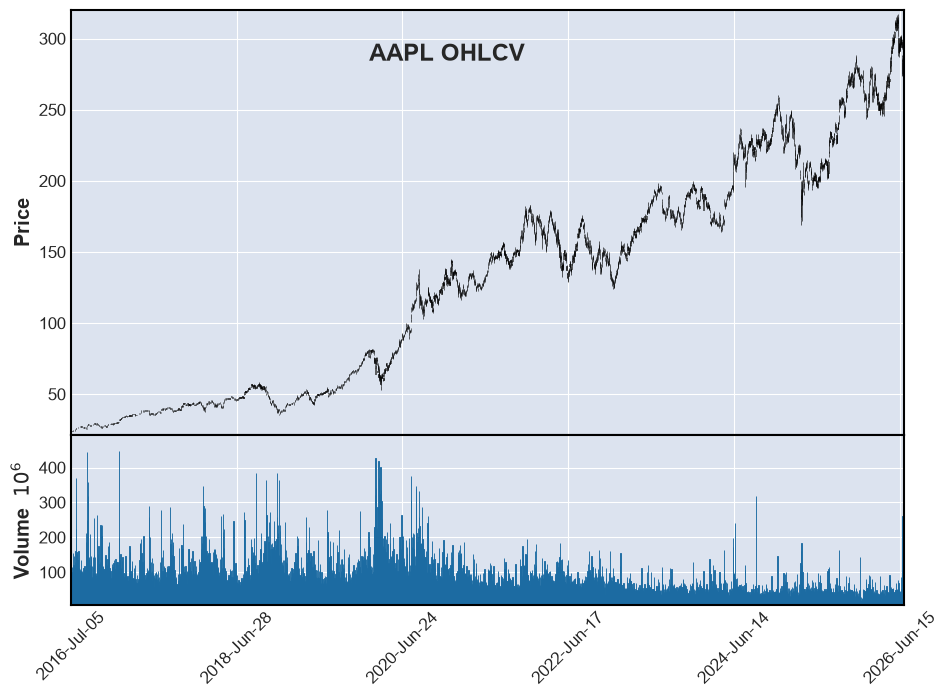}\caption{AAPL}\end{subfigure}\hfill
	\begin{subfigure}[b]{0.155\textwidth}\includegraphics[width=\linewidth]{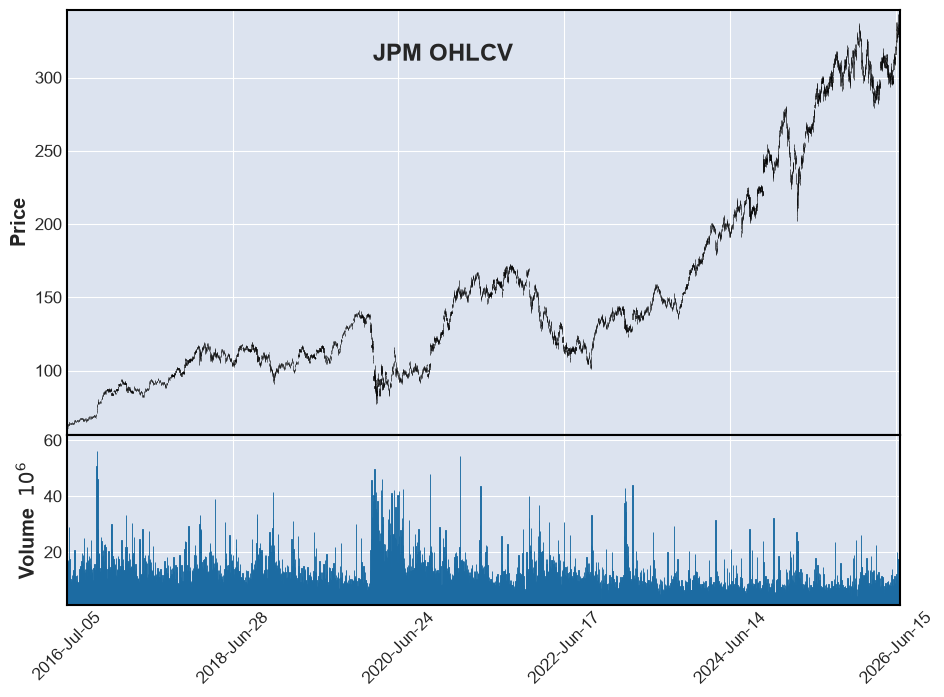}\caption{JPM}\end{subfigure}\hfill
	\begin{subfigure}[b]{0.155\textwidth}\includegraphics[width=\linewidth]{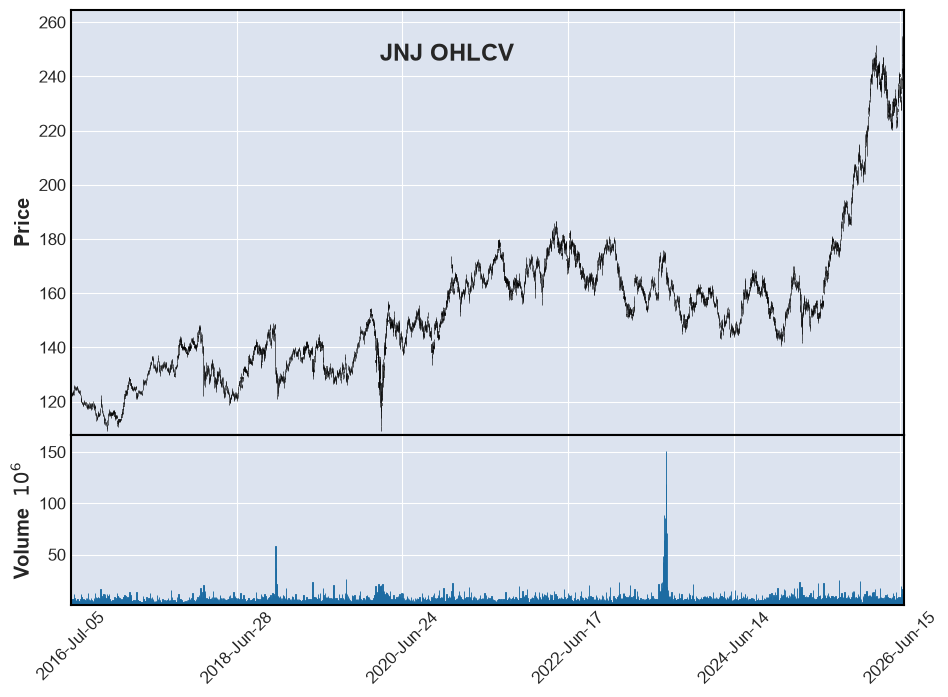}\caption{JNJ}\end{subfigure}\hfill
	\begin{subfigure}[b]{0.155\textwidth}\includegraphics[width=\linewidth]{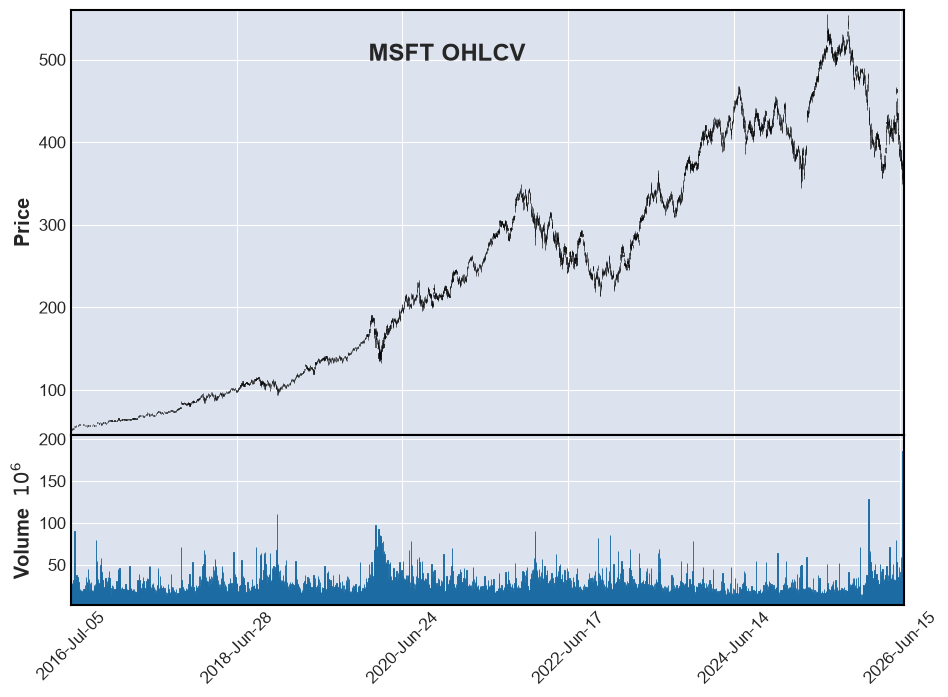}\caption{MSFT}\end{subfigure}
	
	\begin{subfigure}[b]{0.155\textwidth}\includegraphics[width=\linewidth]{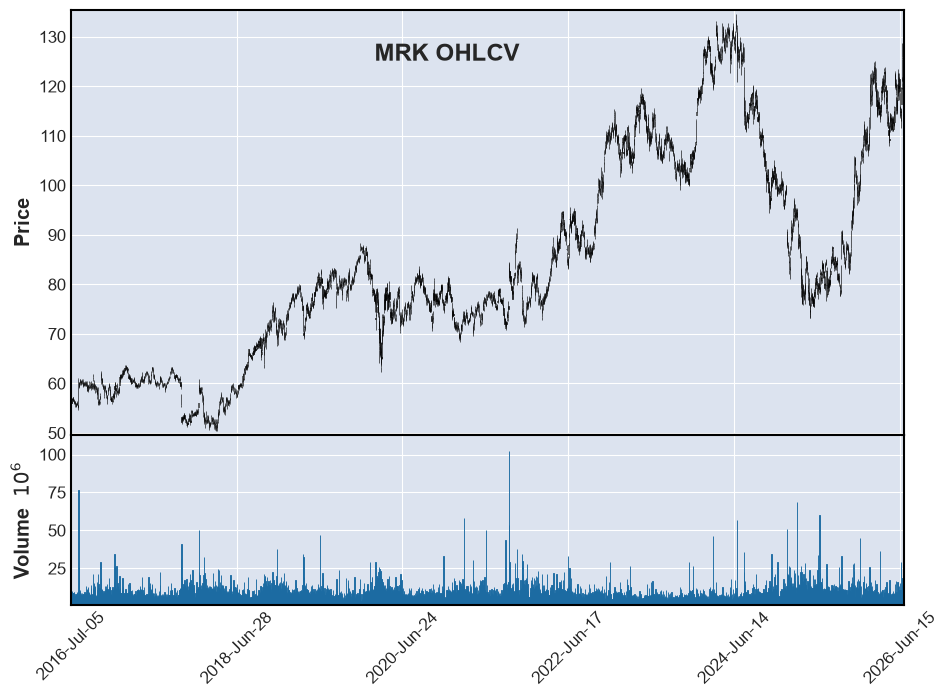}\caption{MRK}\end{subfigure}\hfill
	\begin{subfigure}[b]{0.155\textwidth}\includegraphics[width=\linewidth]{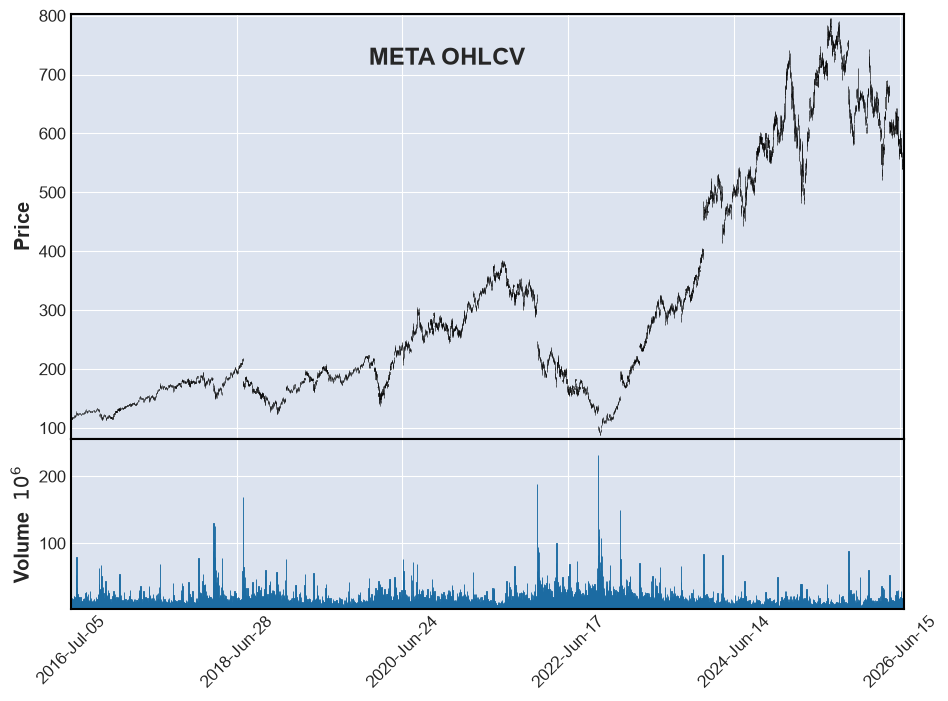}\caption{META}\end{subfigure}\hfill
	\begin{subfigure}[b]{0.155\textwidth}\includegraphics[width=\linewidth]{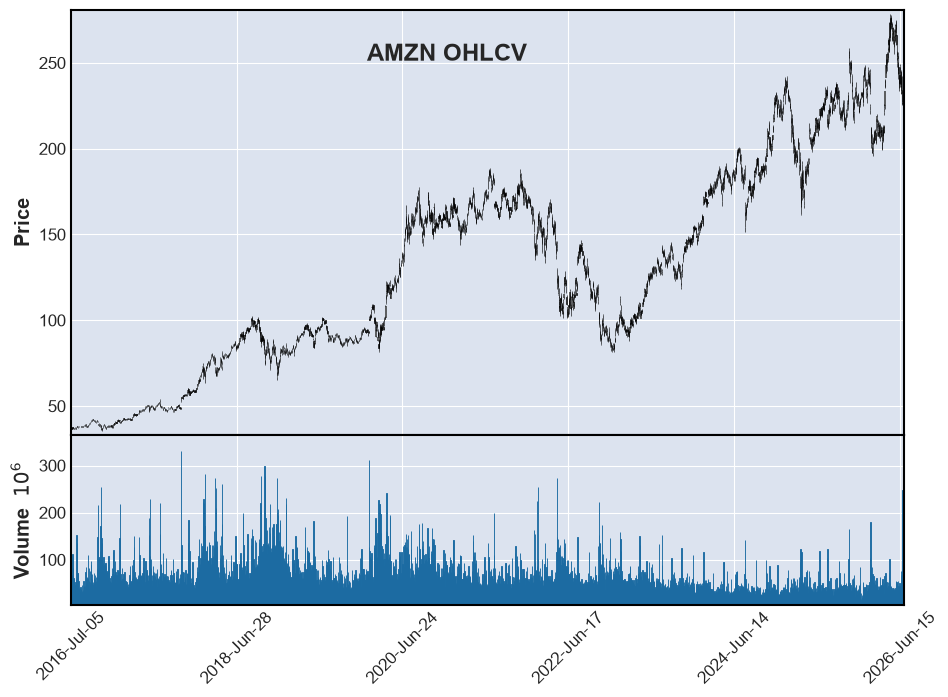}\caption{AMZN}\end{subfigure}\hfill
	\begin{subfigure}[b]{0.155\textwidth}\includegraphics[width=\linewidth]{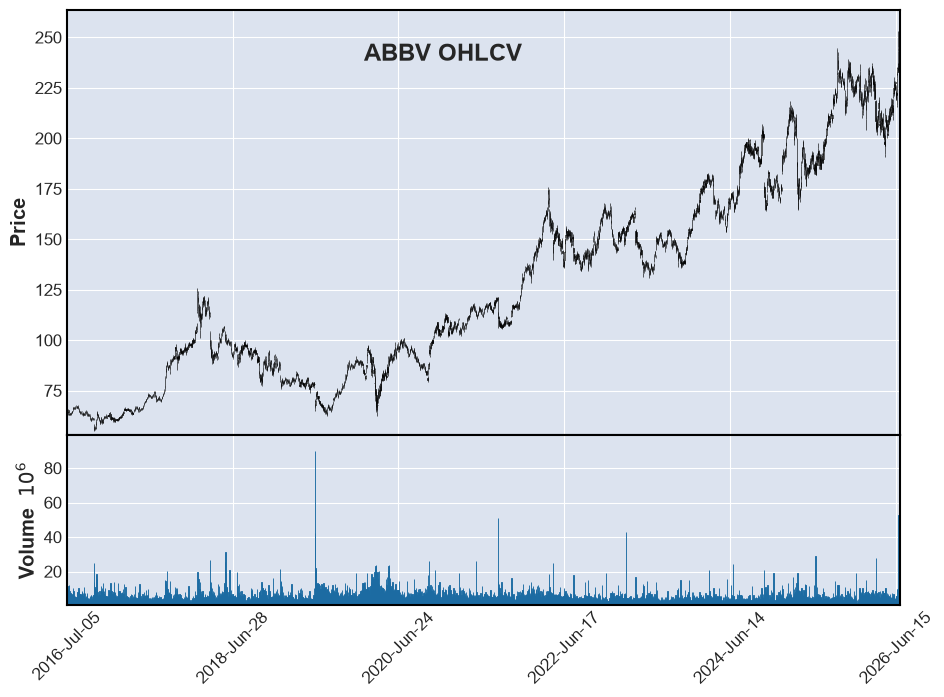}\caption{ABBV}\end{subfigure}\hfill
	\begin{subfigure}[b]{0.155\textwidth}\includegraphics[width=\linewidth]{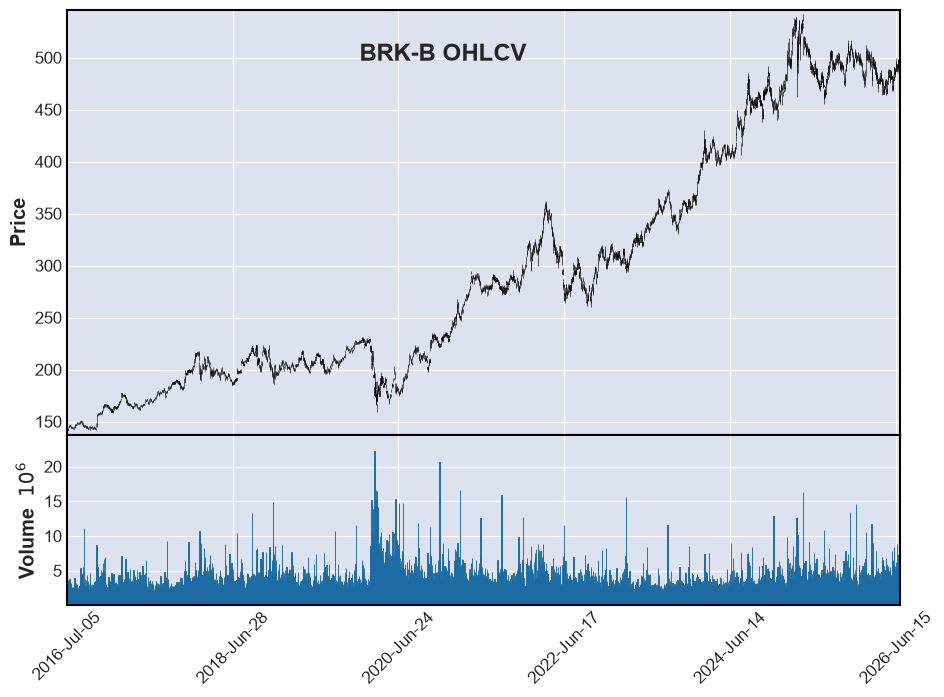}\caption{BRK-B}\end{subfigure}\hfill
	\begin{subfigure}[b]{0.155\textwidth}\includegraphics[width=\linewidth]{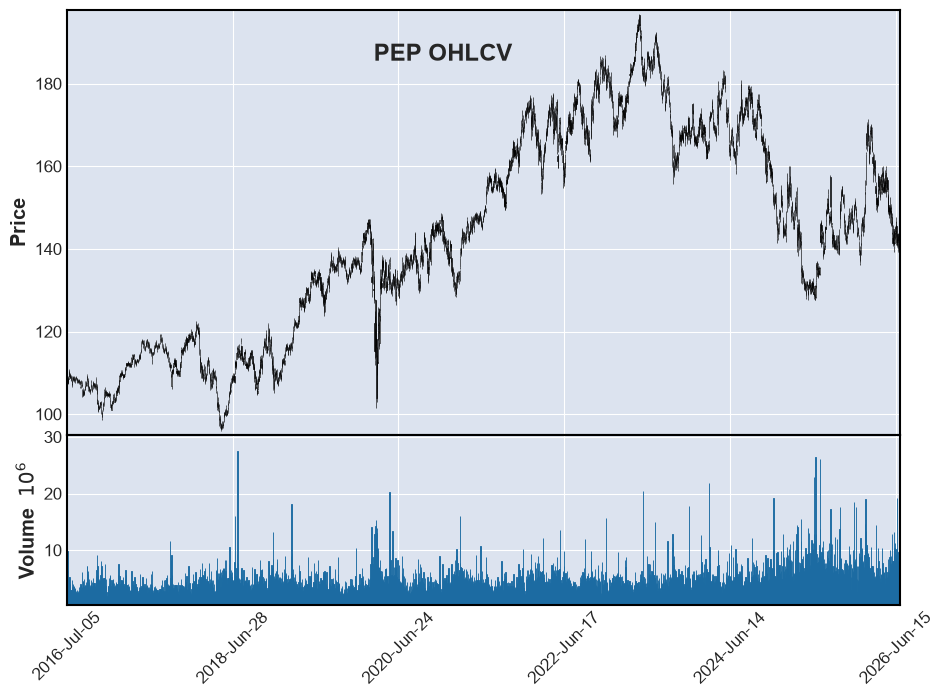}\caption{PEP}\end{subfigure}
	\caption{Daily OHLC price histories for the 30 stock symbols used in the experiments.}
	\label{fig:ohlc_stock}
\end{figure*}

\begin{figure*}[htbp]
	\centering
	\scriptsize
	\begin{subfigure}[b]{0.155\textwidth}\includegraphics[width=\linewidth]{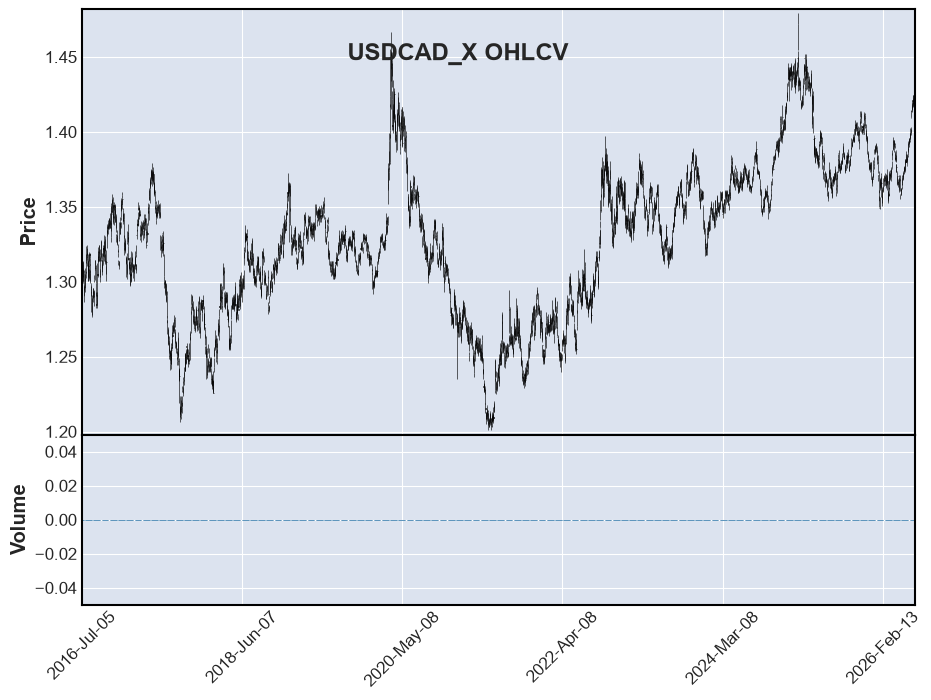}\caption{USDCAD}\end{subfigure}\hfill
	\begin{subfigure}[b]{0.155\textwidth}\includegraphics[width=\linewidth]{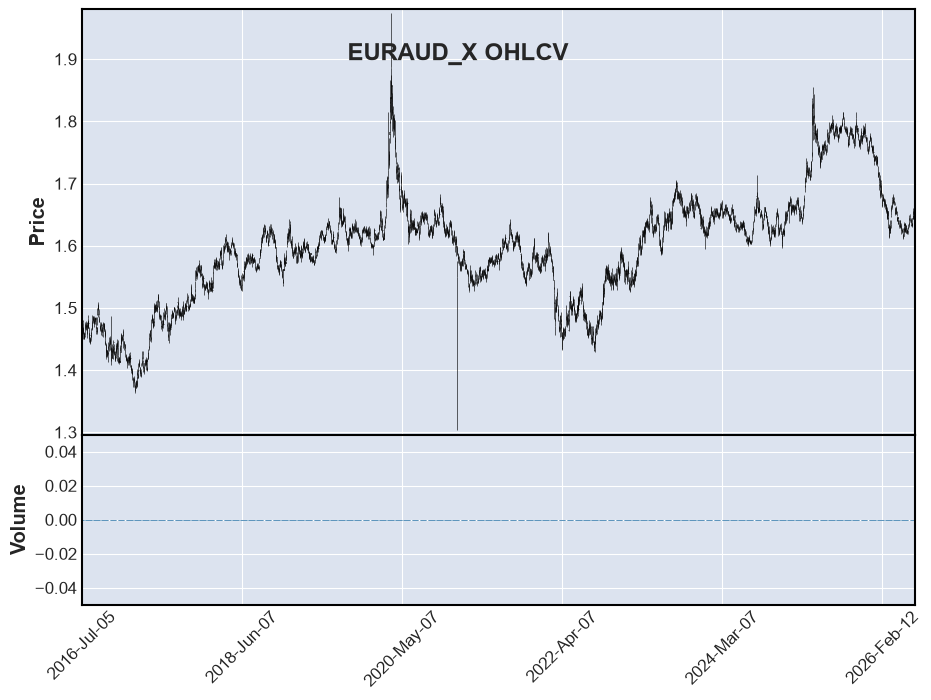}\caption{EURAUD}\end{subfigure}\hfill
	\begin{subfigure}[b]{0.155\textwidth}\includegraphics[width=\linewidth]{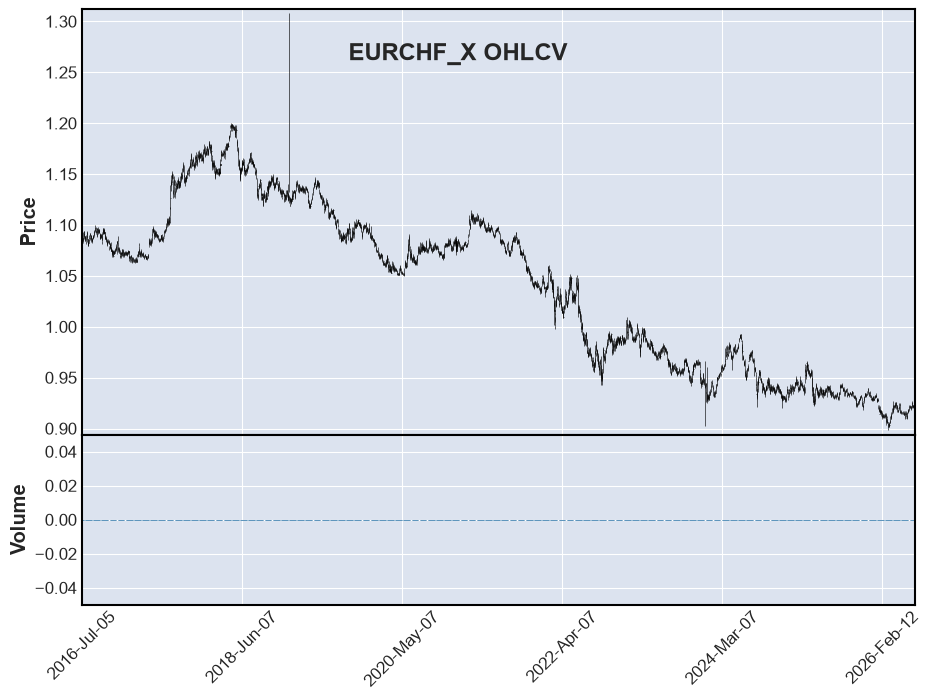}\caption{EURCHF}\end{subfigure}\hfill
	\begin{subfigure}[b]{0.155\textwidth}\includegraphics[width=\linewidth]{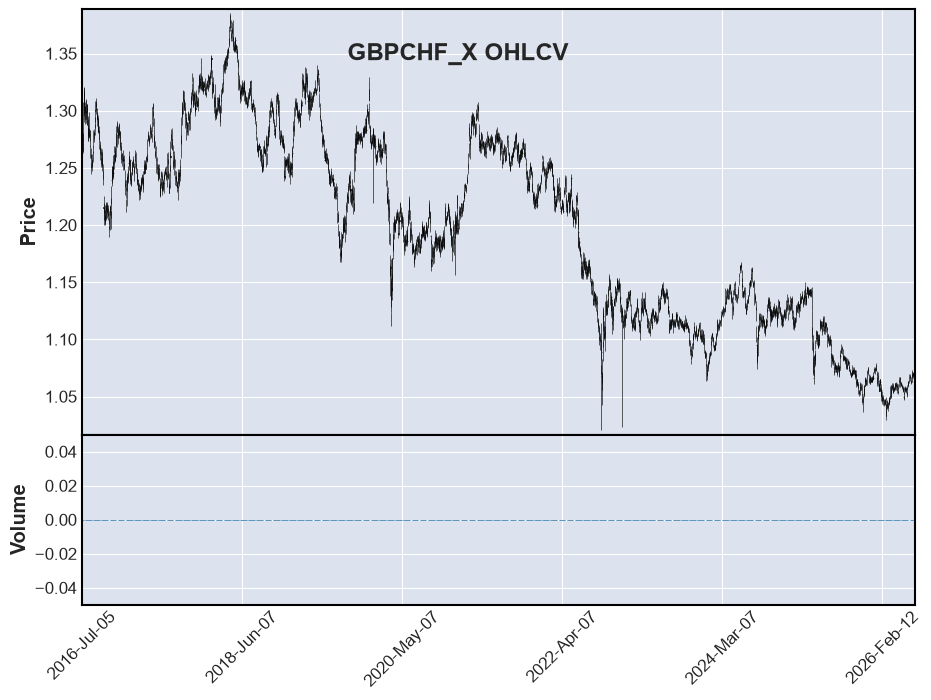}\caption{GBPCHF}\end{subfigure}\hfill
	\begin{subfigure}[b]{0.155\textwidth}\includegraphics[width=\linewidth]{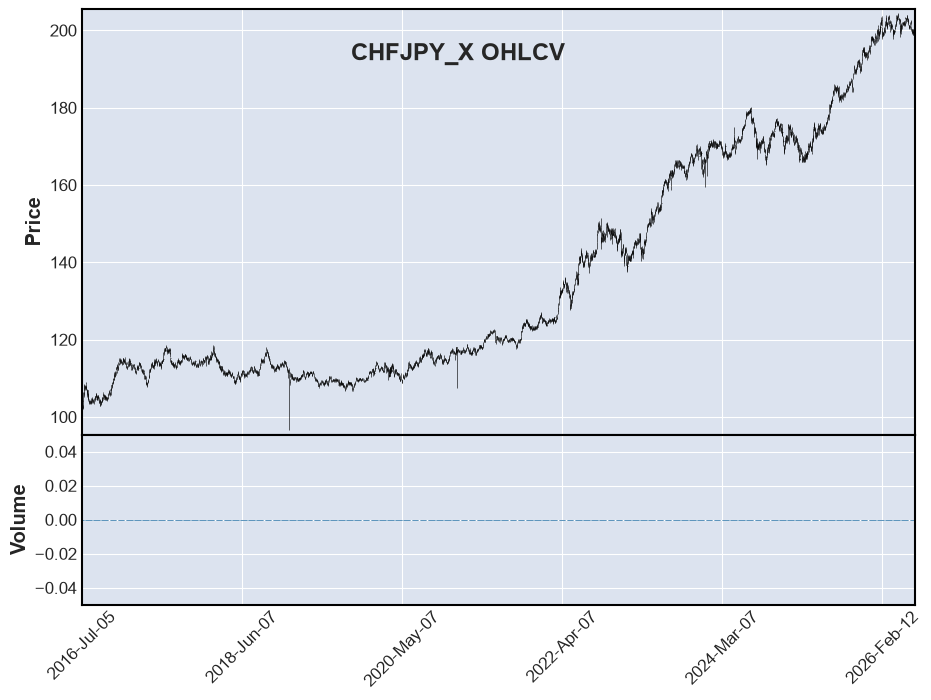}\caption{CHFJPY}\end{subfigure}\hfill
	\begin{subfigure}[b]{0.155\textwidth}\includegraphics[width=\linewidth]{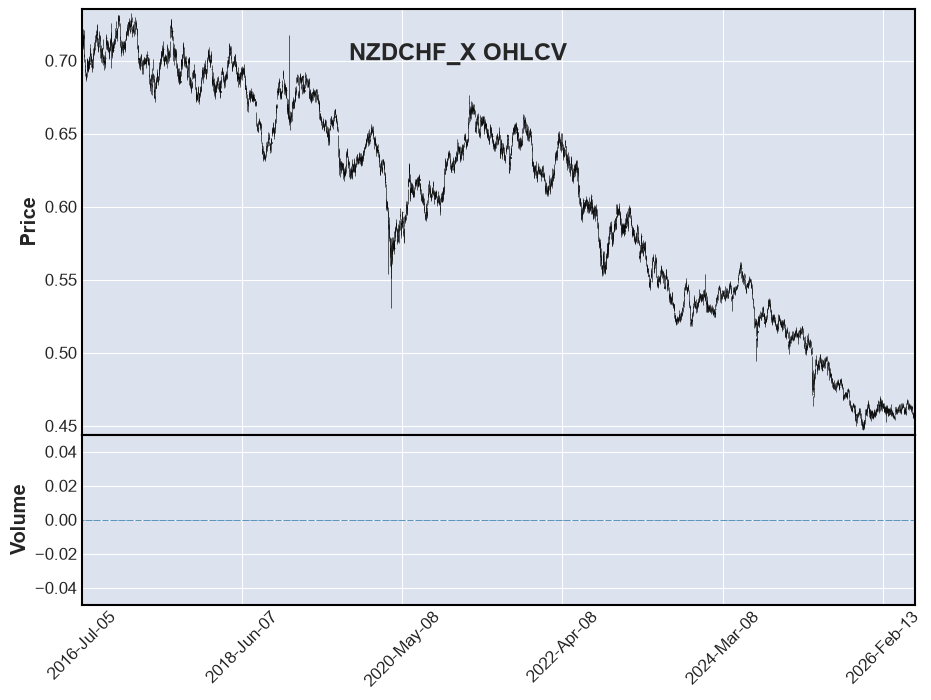}\caption{NZDCHF}\end{subfigure}
	
	\begin{subfigure}[b]{0.155\textwidth}\includegraphics[width=\linewidth]{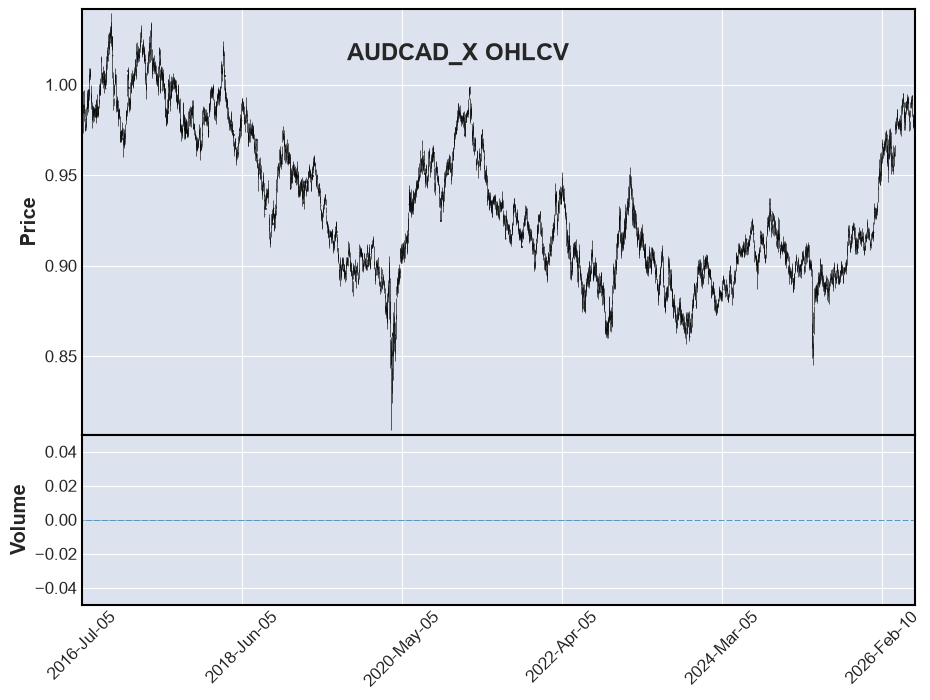}\caption{AUDCAD}\end{subfigure}\hfill
	\begin{subfigure}[b]{0.155\textwidth}\includegraphics[width=\linewidth]{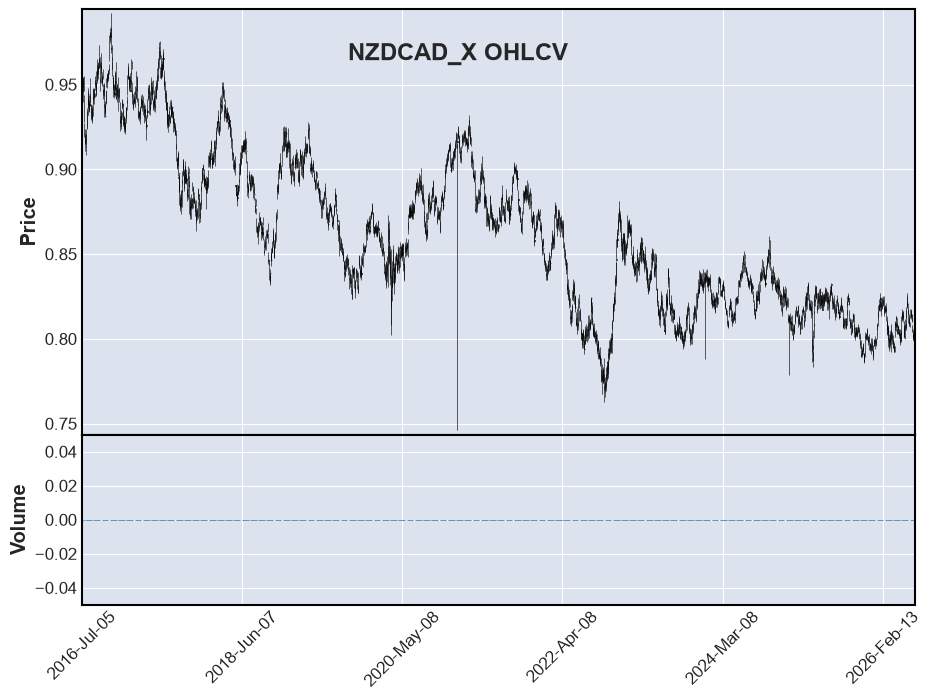}\caption{NZDCAD}\end{subfigure}\hfill
	\begin{subfigure}[b]{0.155\textwidth}\includegraphics[width=\linewidth]{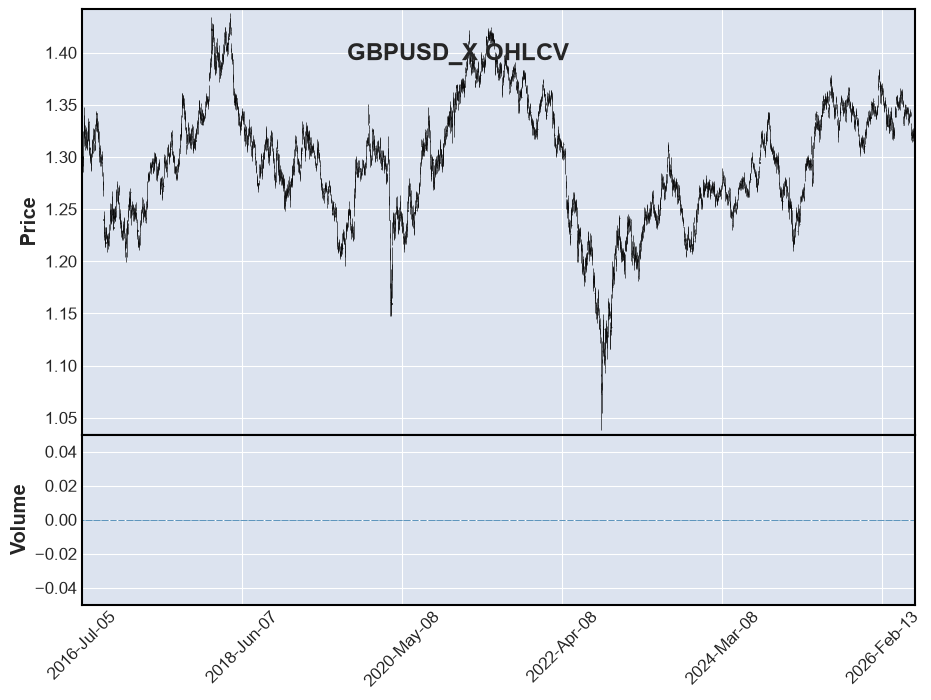}\caption{GBPUSD}\end{subfigure}\hfill
	\begin{subfigure}[b]{0.155\textwidth}\includegraphics[width=\linewidth]{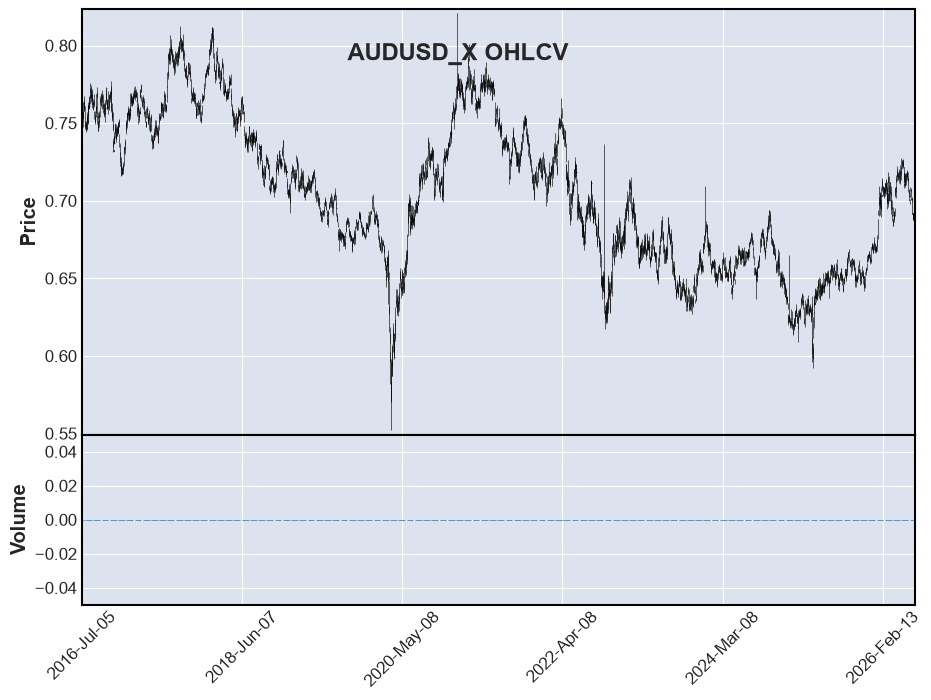}\caption{AUDUSD}\end{subfigure}\hfill
	\begin{subfigure}[b]{0.155\textwidth}\includegraphics[width=\linewidth]{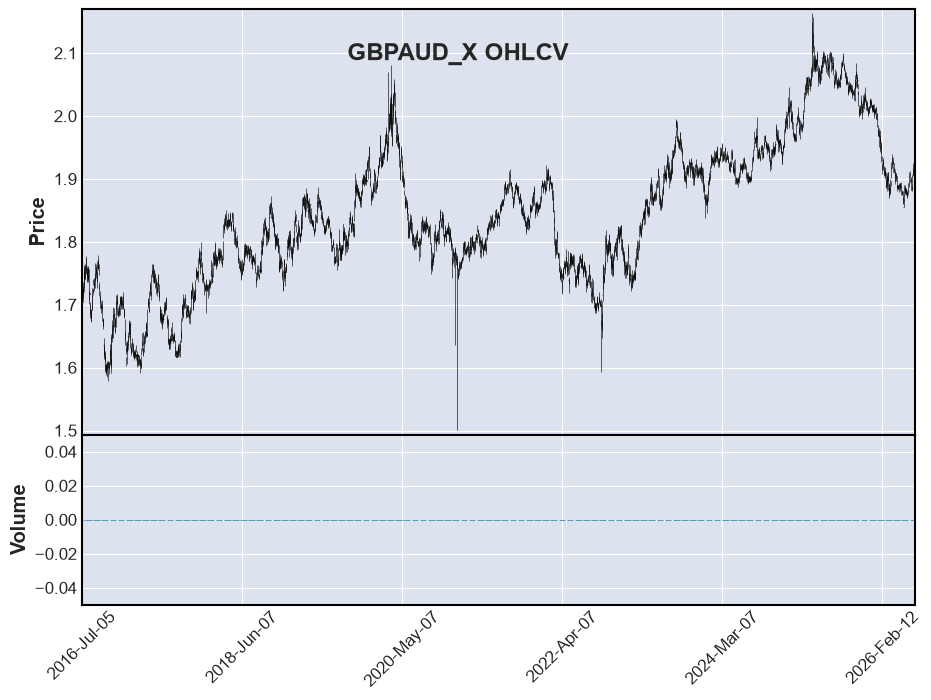}\caption{GBPAUD}\end{subfigure}\hfill
	\begin{subfigure}[b]{0.155\textwidth}\includegraphics[width=\linewidth]{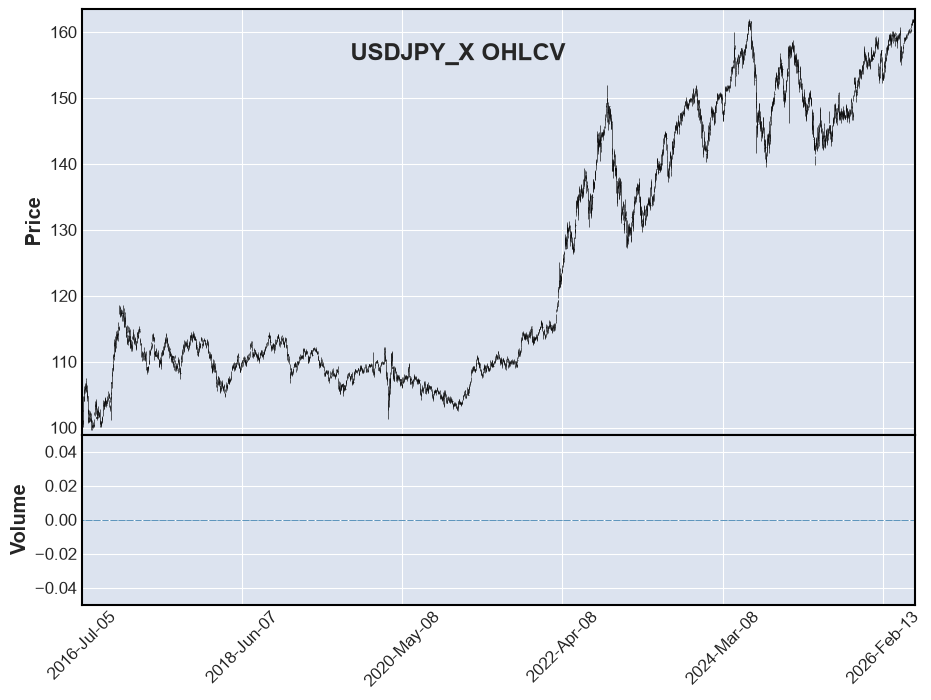}\caption{USDJPY}\end{subfigure}
	
	\begin{subfigure}[b]{0.155\textwidth}\includegraphics[width=\linewidth]{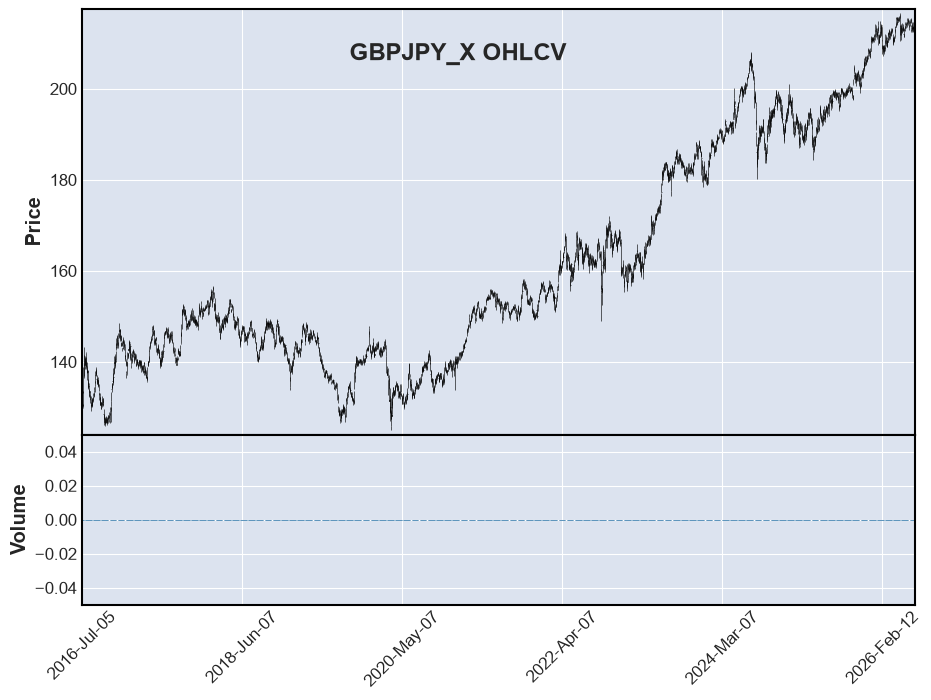}\caption{GBPJPY}\end{subfigure}\hfill
	\begin{subfigure}[b]{0.155\textwidth}\includegraphics[width=\linewidth]{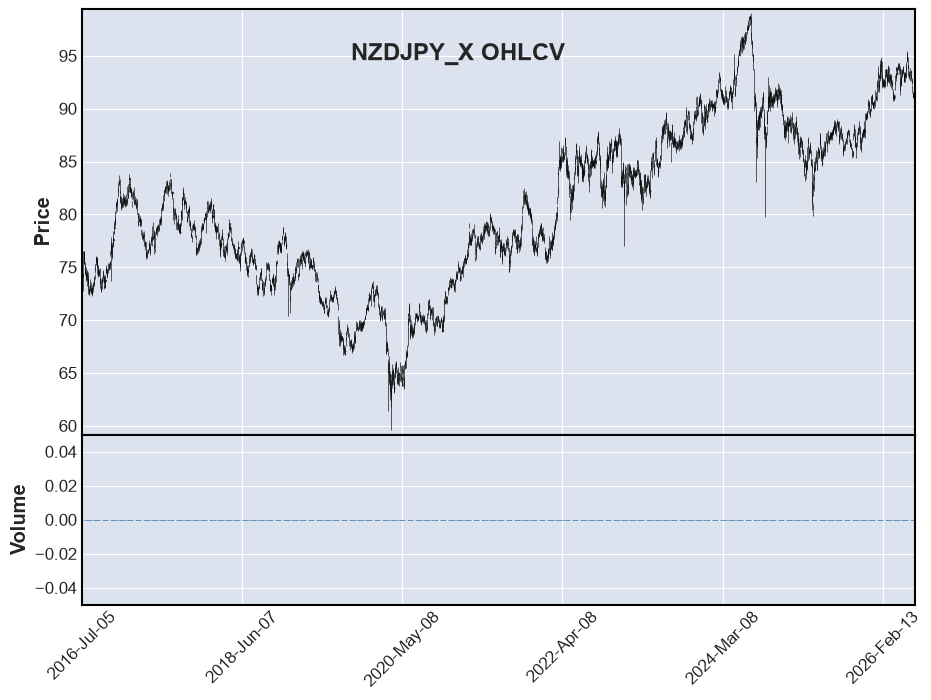}\caption{NZDJPY}\end{subfigure}\hfill
	\begin{subfigure}[b]{0.155\textwidth}\includegraphics[width=\linewidth]{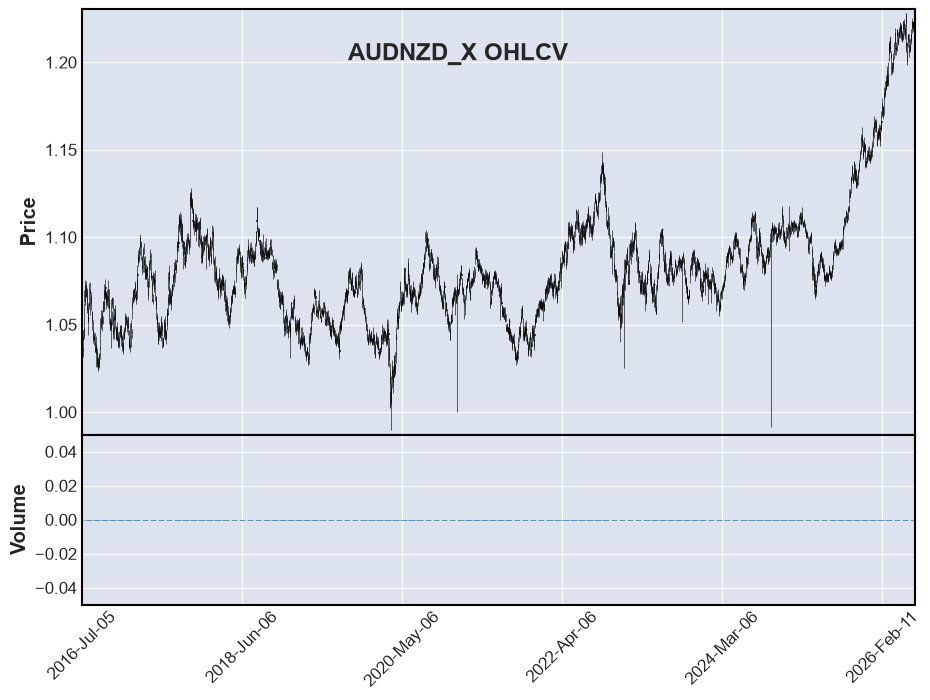}\caption{AUDNZD}\end{subfigure}\hfill
	\begin{subfigure}[b]{0.155\textwidth}\includegraphics[width=\linewidth]{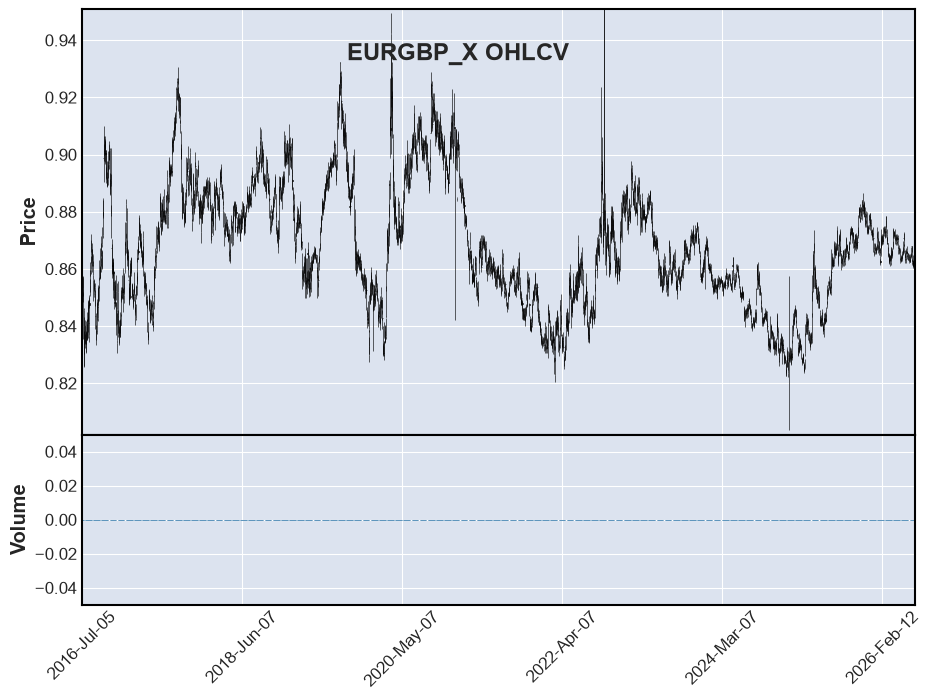}\caption{EURGBP}\end{subfigure}\hfill
	\begin{subfigure}[b]{0.155\textwidth}\includegraphics[width=\linewidth]{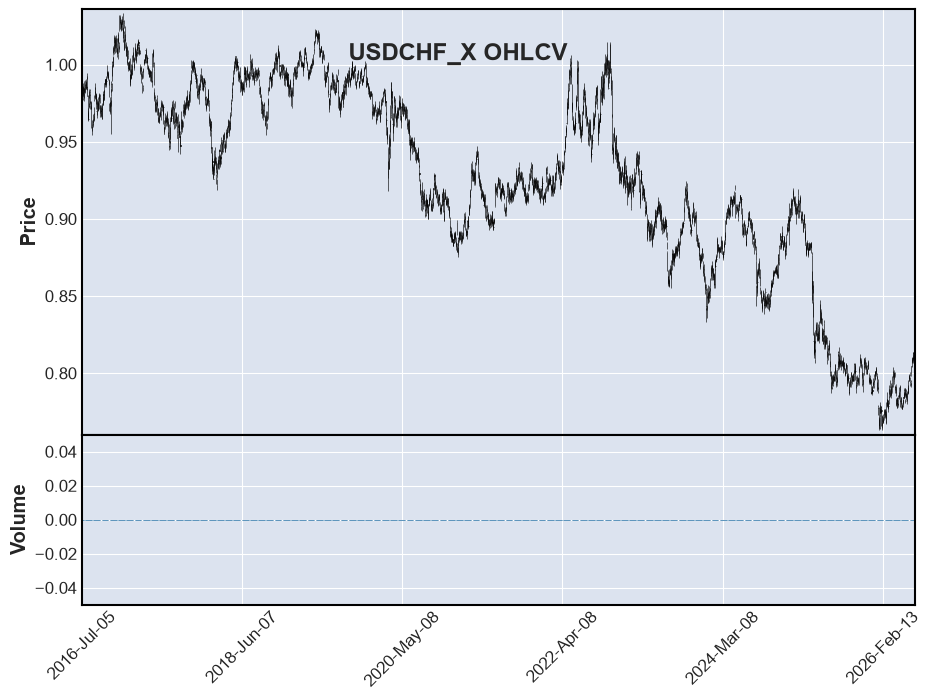}\caption{USDCHF}\end{subfigure}\hfill
	\begin{subfigure}[b]{0.155\textwidth}\includegraphics[width=\linewidth]{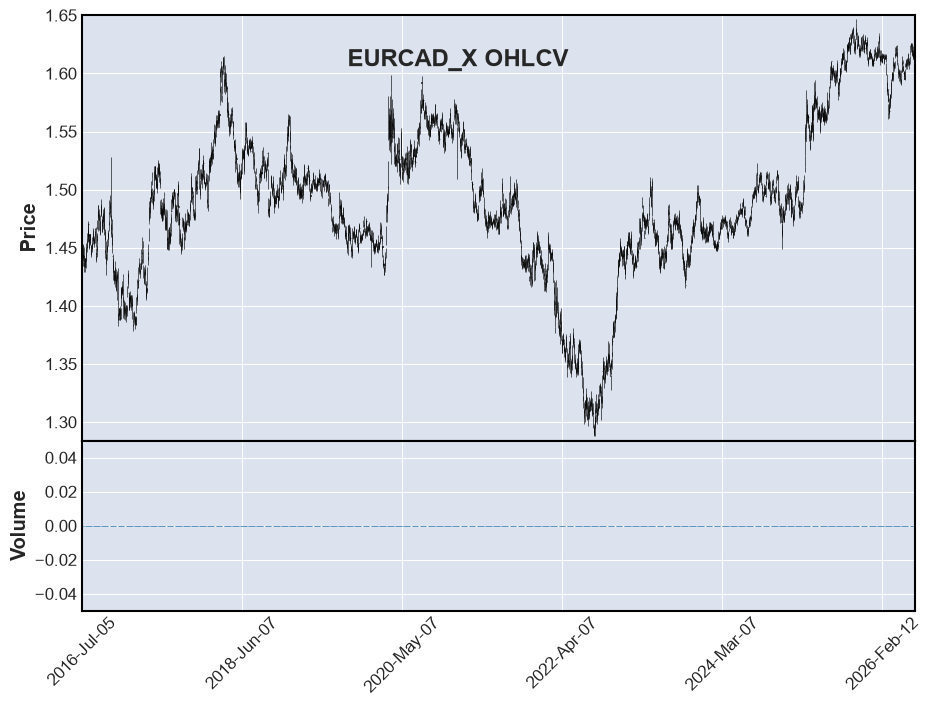}\caption{EURCAD}\end{subfigure}
	
	\begin{subfigure}[b]{0.155\textwidth}\includegraphics[width=\linewidth]{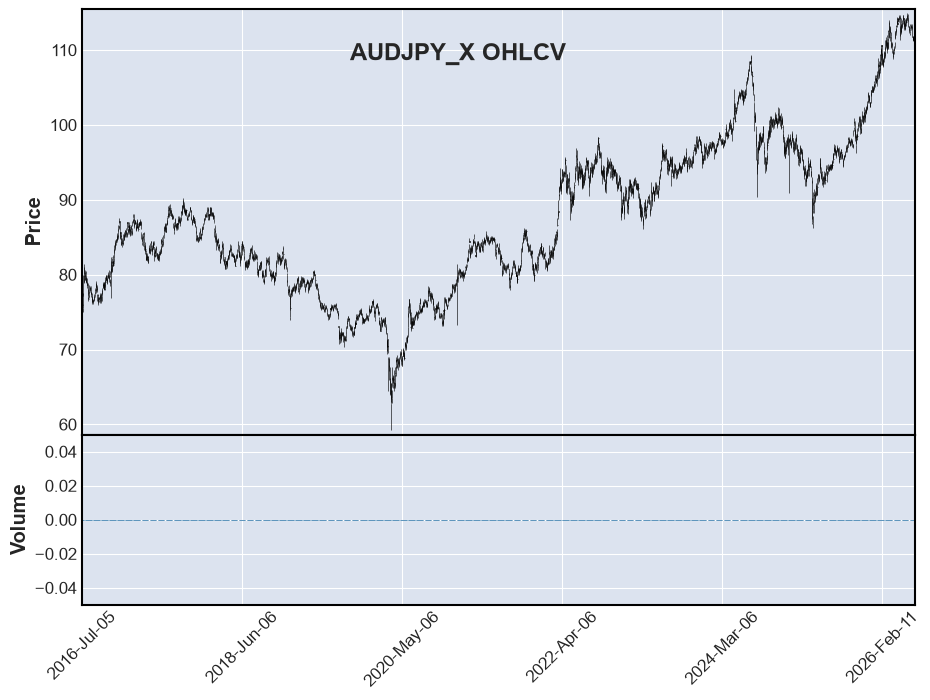}\caption{AUDJPY}\end{subfigure}\hfill
	\begin{subfigure}[b]{0.155\textwidth}\includegraphics[width=\linewidth]{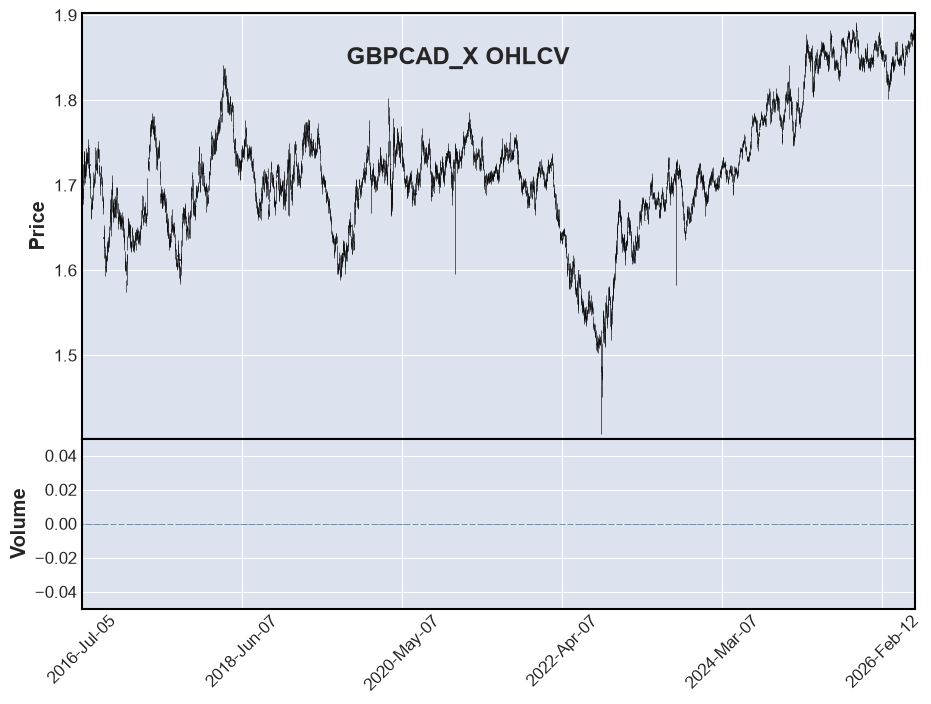}\caption{GBPCAD}\end{subfigure}\hfill
	\begin{subfigure}[b]{0.155\textwidth}\includegraphics[width=\linewidth]{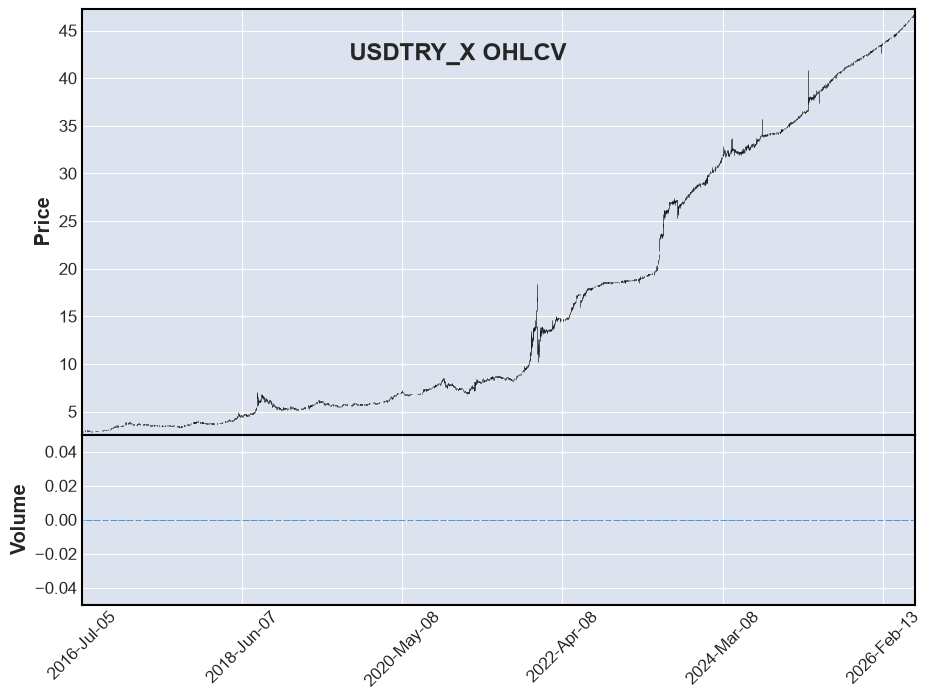}\caption{USDTRY}\end{subfigure}\hfill
	\begin{subfigure}[b]{0.155\textwidth}\includegraphics[width=\linewidth]{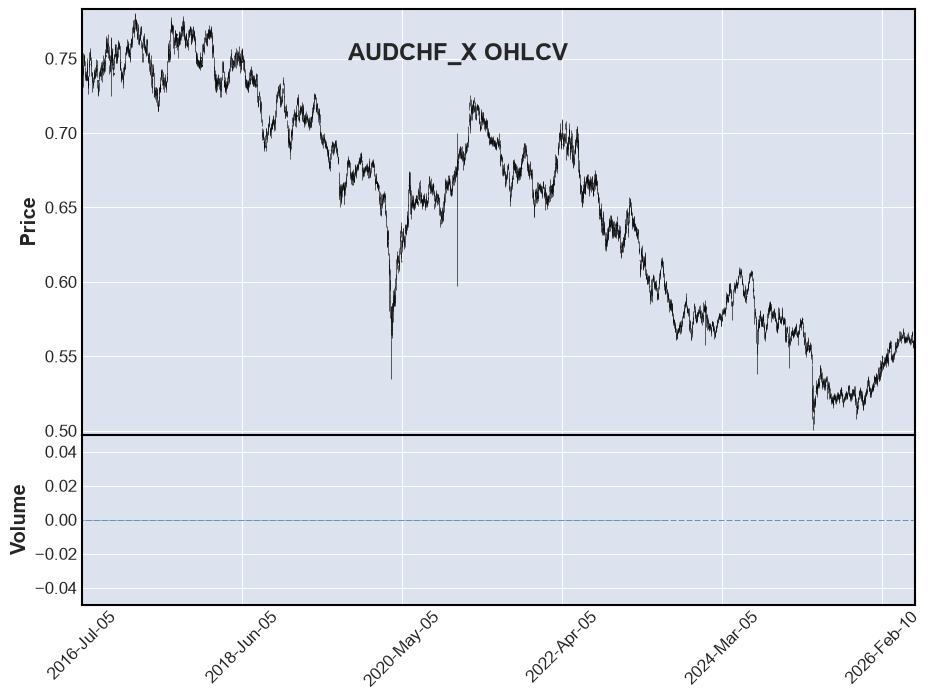}\caption{AUDCHF}\end{subfigure}\hfill
	\begin{subfigure}[b]{0.155\textwidth}\includegraphics[width=\linewidth]{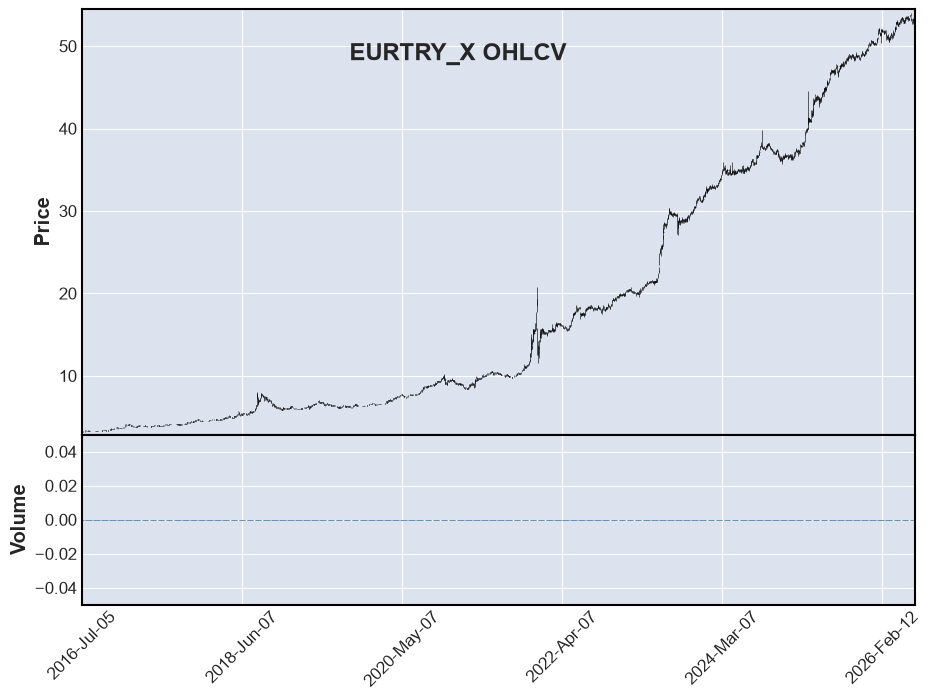}\caption{EURTRY}\end{subfigure}\hfill
	\begin{subfigure}[b]{0.155\textwidth}\includegraphics[width=\linewidth]{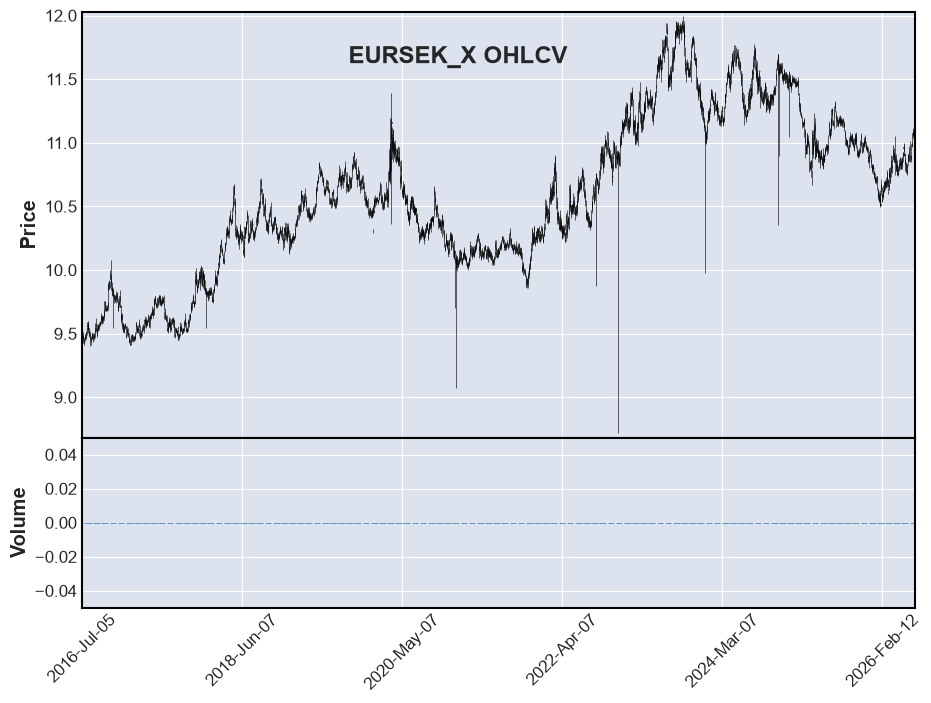}\caption{EURSEK}\end{subfigure}
	
	\begin{subfigure}[b]{0.155\textwidth}\includegraphics[width=\linewidth]{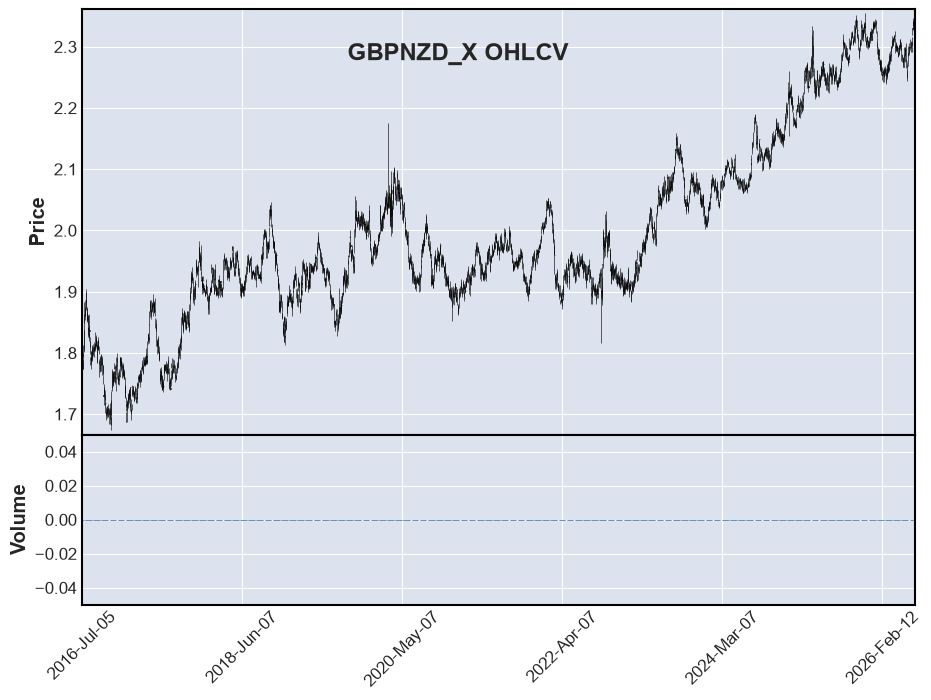}\caption{GBPNZD}\end{subfigure}\hfill
	\begin{subfigure}[b]{0.155\textwidth}\includegraphics[width=\linewidth]{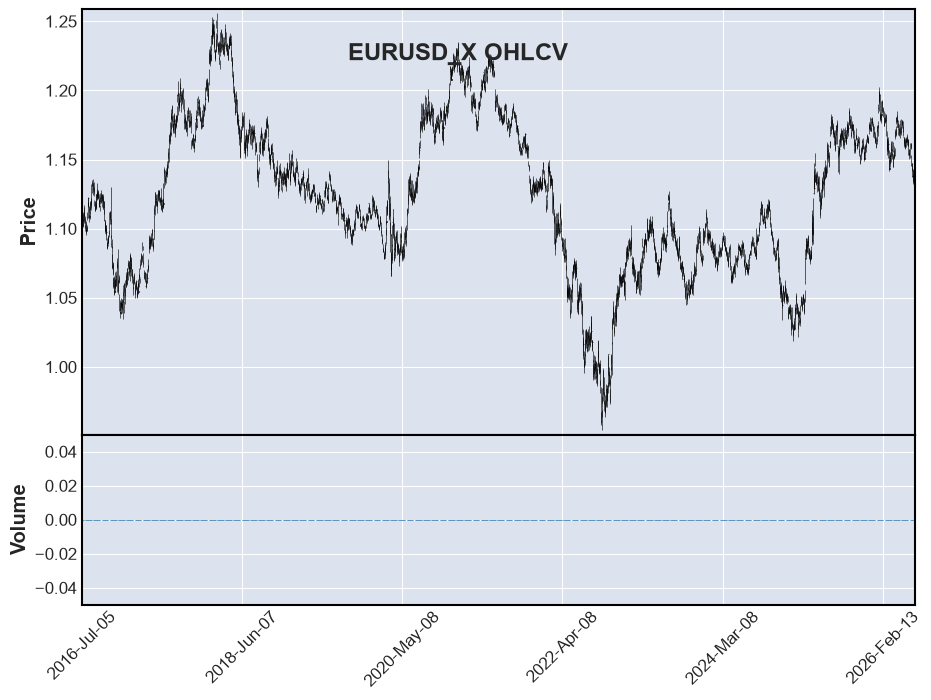}\caption{EURUSD}\end{subfigure}\hfill
	\begin{subfigure}[b]{0.155\textwidth}\includegraphics[width=\linewidth]{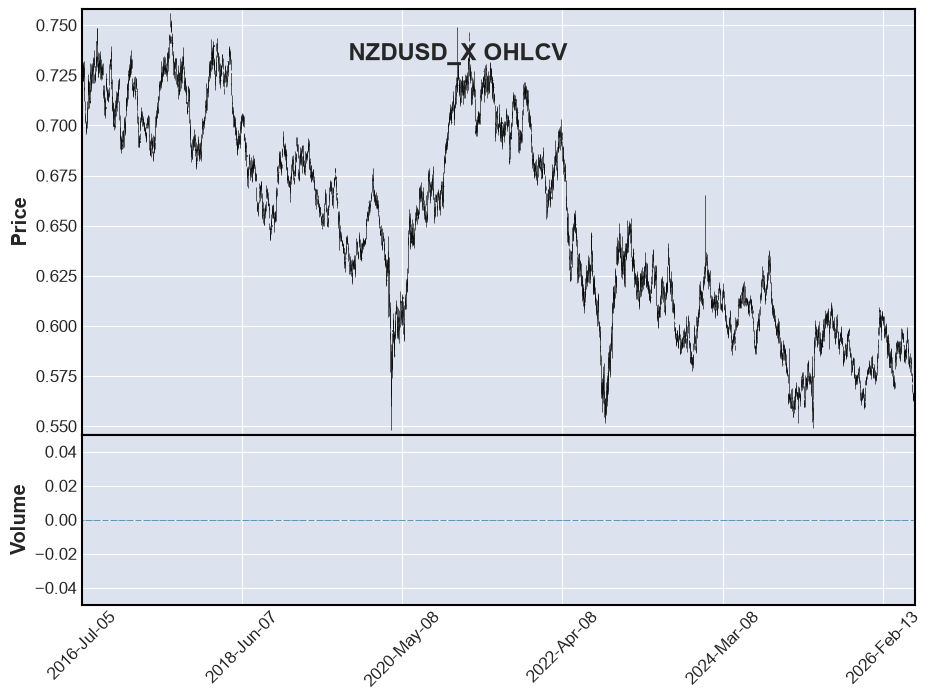}\caption{NZDUSD}\end{subfigure}\hfill
	\begin{subfigure}[b]{0.155\textwidth}\includegraphics[width=\linewidth]{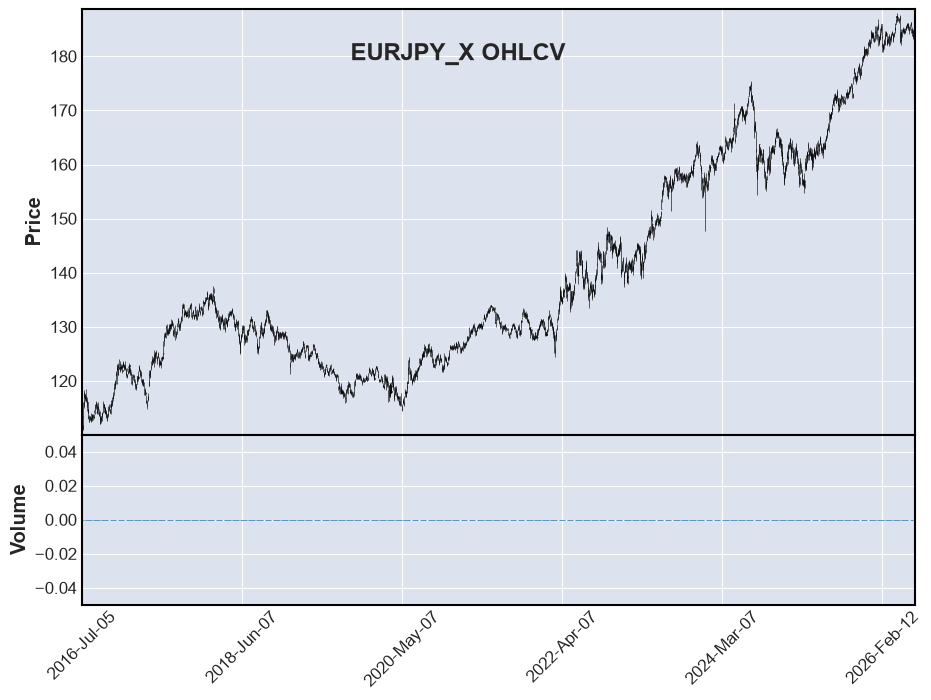}\caption{EURJPY}\end{subfigure}\hfill
	\begin{subfigure}[b]{0.155\textwidth}\includegraphics[width=\linewidth]{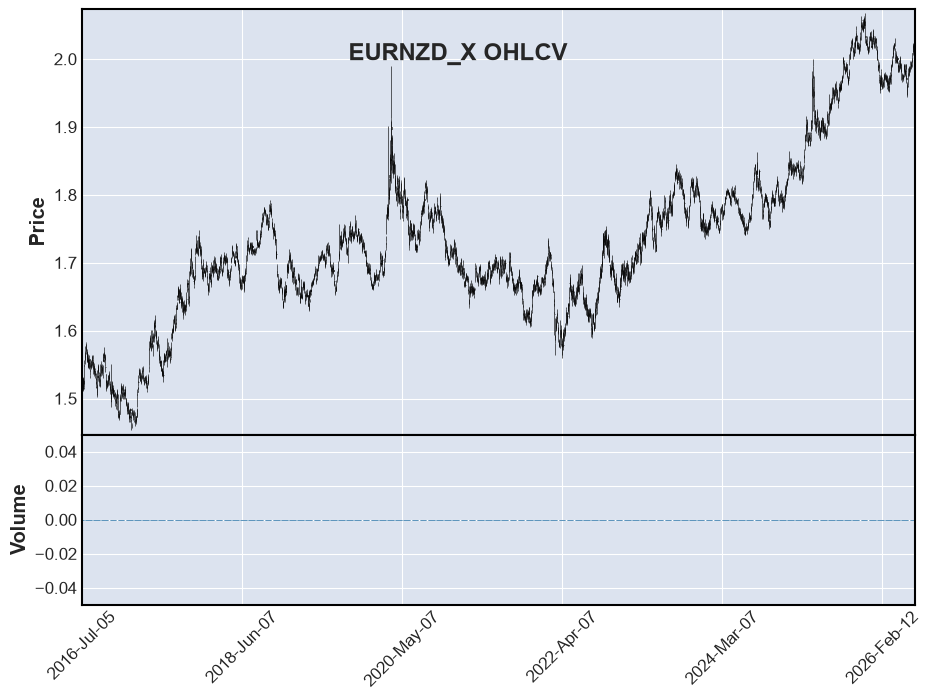}\caption{EURNZD}\end{subfigure}\hfill
	\begin{subfigure}[b]{0.155\textwidth}\includegraphics[width=\linewidth]{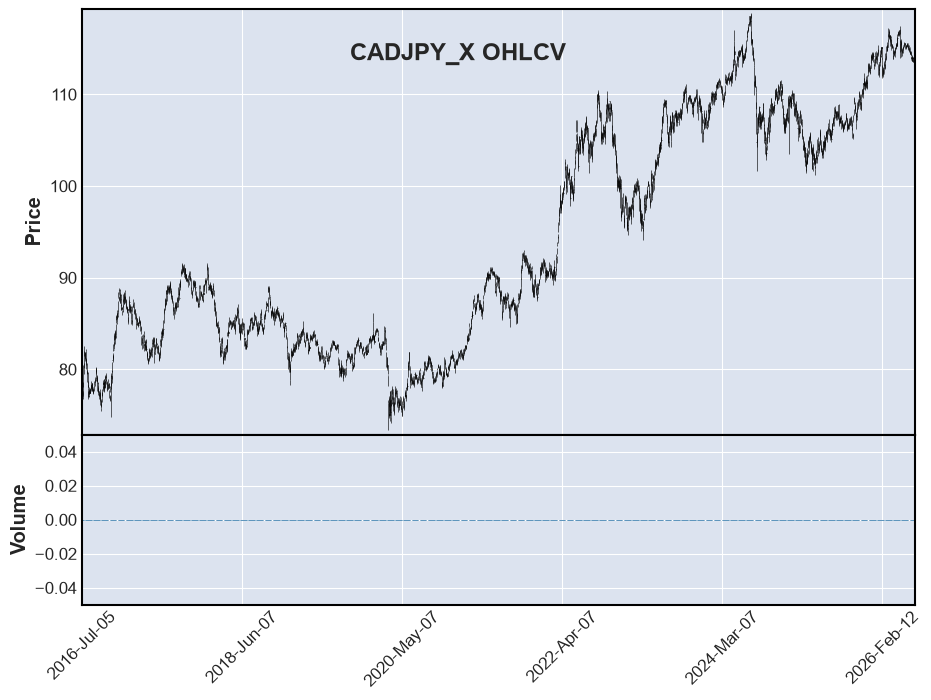}\caption{CADJPY}\end{subfigure}
	\caption{Daily OHLC price histories for the 30 foreign-exchange symbols used in the experiments.}
	\label{fig:ohlc_forex}
\end{figure*}

\paragraph{Market indices}
To provide a market-level reference, we report the market index series associated with each experiment in Figure~\ref{fig:index_crypto} and Figure~\ref{fig:index_stock}. Because the cryptocurrency market has no single standard benchmark, the cryptocurrency panel uses the two synthetic indices defined in the experimental protocol: an equal-weighted (unweighted) cumulative-return index and a volume-weighted cumulative-return index. As shown in Figure~\ref{fig:index_crypto}, the volume-weighted index grows gradually from 2020 onward and traces the characteristic cycle of the market, with a 2021 bull run, a 2022 drawdown, and a strong 2024--2025 rally followed by a partial reversal. The unweighted index instead remains close to its initial level until late 2024, after which it exhibits an extreme surge and a subsequent sharp correction, indicating that equal-weighted aggregation over raw prices is dominated by a small number of extreme constituents and motivating the use of the volume-weighted variant as the primary market-level descriptor. For the stock-market experiments, we report three broad United States equity references, namely the S\&P~500, the NASDAQ Composite, and the Dow Jones Industrial Average. All three indices trend upward over 2016--2026 and share the same major episodes, including the March 2020 pandemic crash, the 2022 bear market, and the post-2023 recovery, while the technology-heavy NASDAQ Composite displays the largest swing amplitude. No comparable composite series is reported for foreign exchange, as no standard broad-market index exists for the set of currency pairs under study; market-wide conditions in that market are instead captured by the aggregate feature stream of the state encoder.

\begin{figure}[htbp]
	\centering
	\begin{subfigure}[b]{0.48\textwidth}\includegraphics[width=\linewidth]{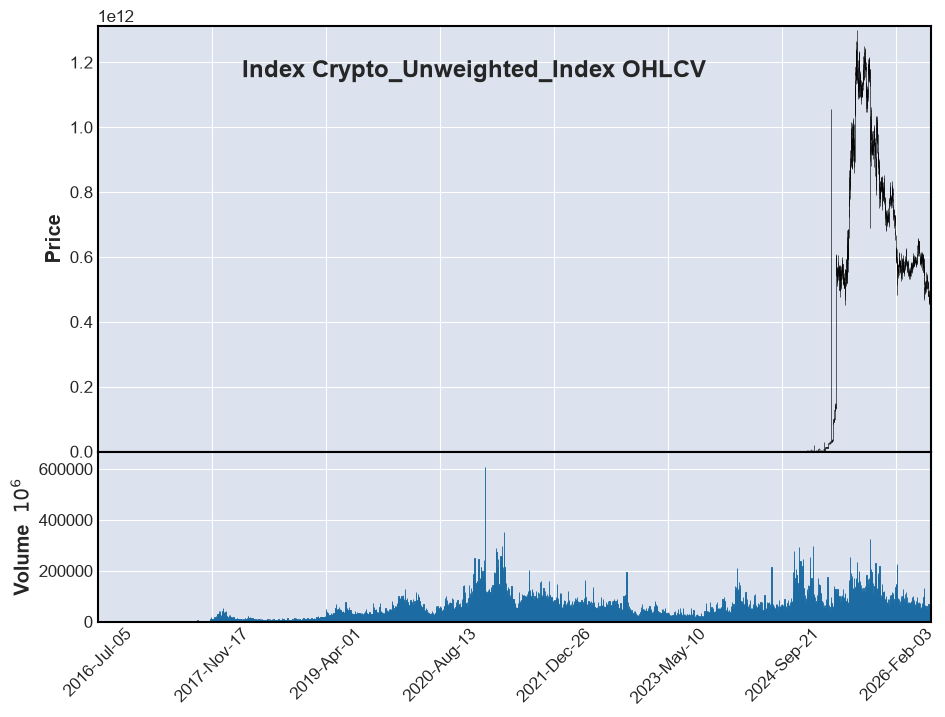}\caption{Equal-weighted (unweighted) index.}\end{subfigure}\hfill
	\begin{subfigure}[b]{0.48\textwidth}\includegraphics[width=\linewidth]{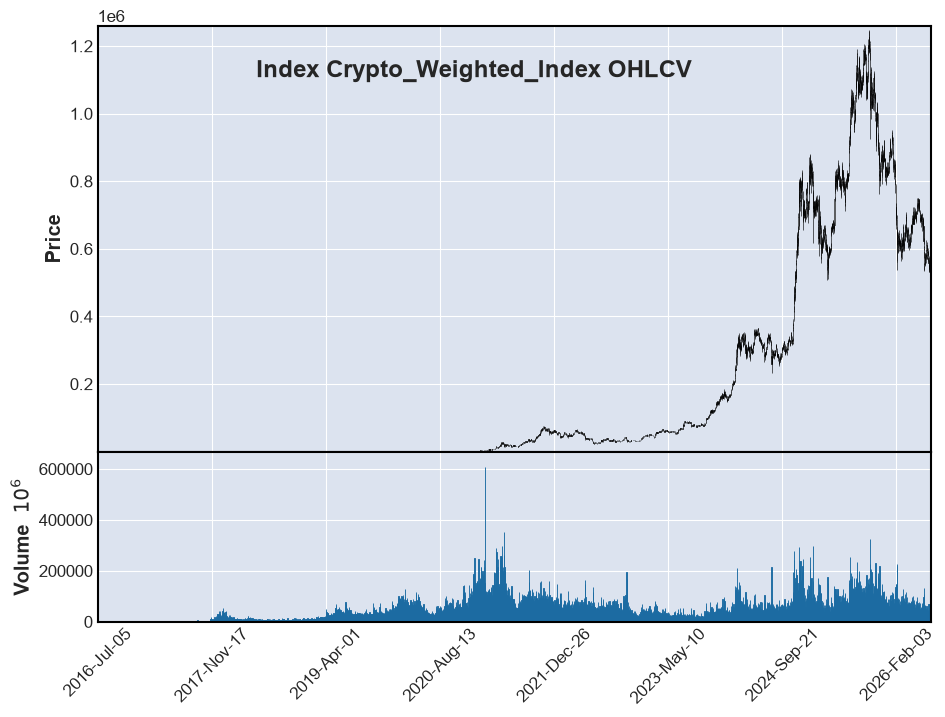}\caption{Volume-weighted index.}\end{subfigure}
	\caption{Market-level cryptocurrency indices.}
	\label{fig:index_crypto}
\end{figure}

\begin{figure}[htbp]
	\centering
	\begin{subfigure}[b]{0.32\textwidth}\includegraphics[width=\linewidth]{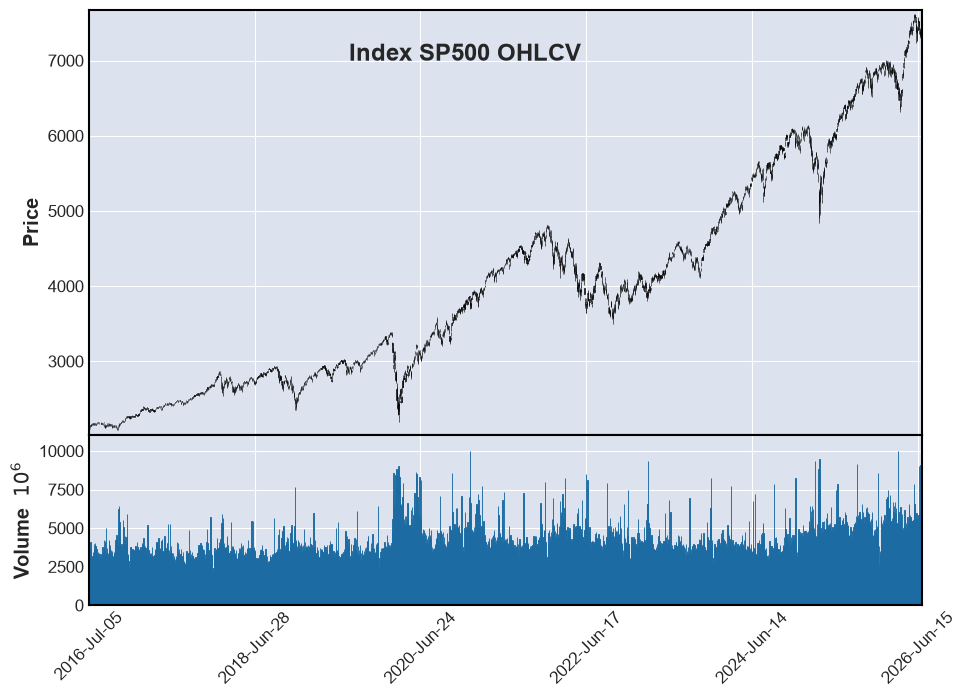}\caption{S\&P~500.}\end{subfigure}\hfill
	\begin{subfigure}[b]{0.32\textwidth}\includegraphics[width=\linewidth]{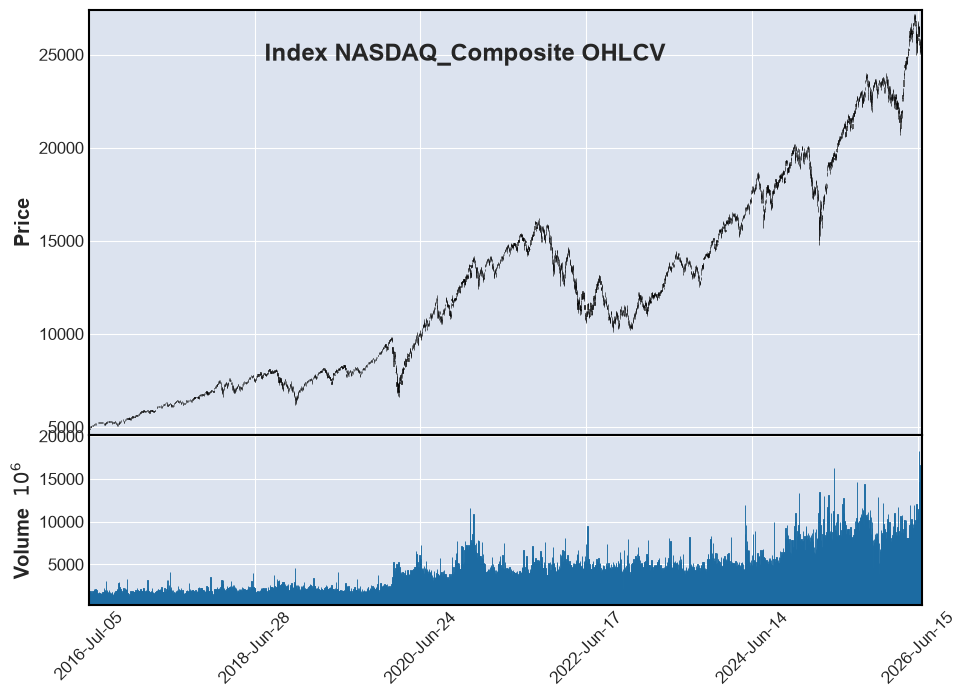}\caption{NASDAQ Composite.}\end{subfigure}\hfill
	\begin{subfigure}[b]{0.32\textwidth}\includegraphics[width=\linewidth]{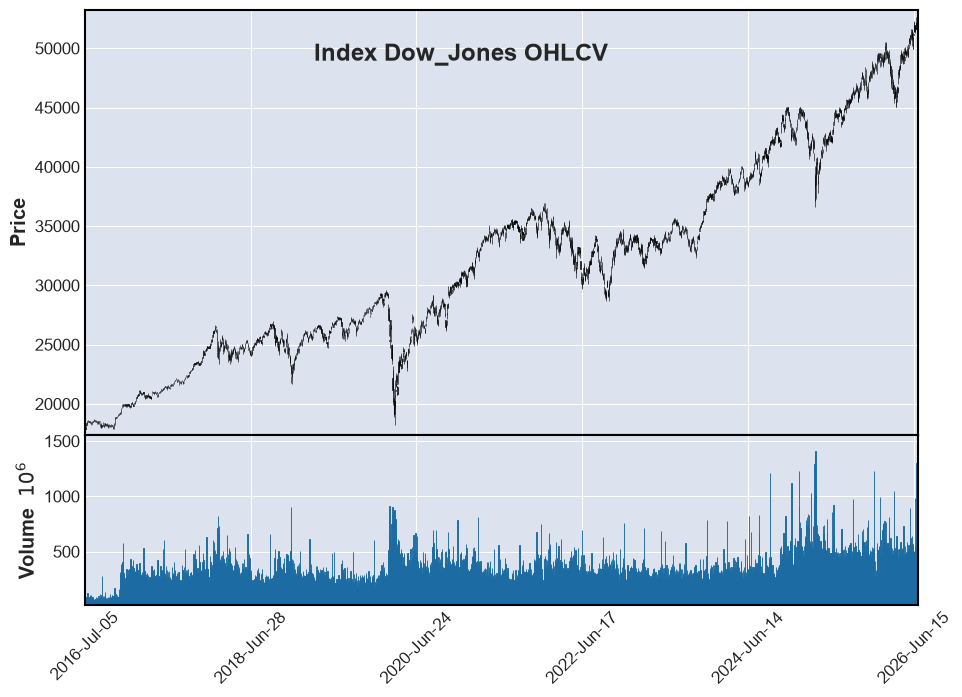}\caption{Dow Jones Average.}\end{subfigure}
	\caption{Market-level United States equity reference indices for the stock-market experiments.}
	\label{fig:index_stock}
\end{figure}

\paragraph{Expert pool}
We construct a pool of five heterogeneous experts. The purpose of the pool is to expose the switching mechanism to strategies with different inductive biases rather than to create a collection of near-identical predictors. The pool contains policy-gradient, actor-critic, distributional, and off-policy reinforcement-learning strategies, providing complementary behaviors for the adaptive selector.

\begin{enumerate}
	\item \textbf{A2C (Advantage Actor-Critic).}
	A synchronous actor-critic policy-gradient method that provides a relatively simple policy-learning baseline.
	\item \textbf{DDPG (Deep Deterministic Policy Gradient).}
	An off-policy actor-critic method designed for continuous action spaces.
	\item \textbf{PPO (Proximal Policy Optimization).}
	A policy-gradient method that constrains policy updates to improve training stability.
	\item \textbf{TQC (Truncated Quantile Critics).}
	A distributional reinforcement-learning method that models the reward distribution through quantile critics and provides a risk-sensitive candidate policy.
	\item \textbf{CrossQ.}
	An off-policy reinforcement-learning expert included to increase behavioral diversity within the candidate pool.
\end{enumerate}

\paragraph{Metrics}
We report cumulative return, annualized return, annualized volatility, Sharpe ratio, Sortino ratio, and maximum drawdown (MDD). Sharpe ratio measures return relative to total variability, whereas Sortino ratio focuses on downside variability and is therefore particularly informative when downside risk is of primary interest. MDD measures the largest peak-to-trough decline and is used to assess robustness under adverse market regimes. Because the three markets exhibit different statistical properties and volatility scales, the discussion emphasizes risk-adjusted measures together with cumulative return rather than interpreting any single metric in isolation.

\paragraph{Implementation details}
The retrieval mechanism uses $K=3$ nearest neighbors based on cosine similarity between VAE embeddings of dimension $d=64$. The LLM component uses the model specified in the corresponding LLM-sensitivity experiment, with temperature $0.1$ and at most one LLM call per decision step. The retrieval database is constructed chronologically: for each training timestamp $t$, the embedding $z_t$, realized return $R_t$, and a summary of the subsequent $H=5$ days are stored. At test time, retrieval is restricted to observations $\tau<t$, preventing future information from entering the decision process. All experiments are run on a single NVIDIA H100 GPU.

\subsection{Baseline Expert Performance}
\label{subsec:expert_perf}

We first evaluate the standalone experts independently in each market. This experiment establishes whether the adaptive selector is operating in an environment with genuinely heterogeneous expert behavior. A useful expert pool should contain strategies whose relative strengths vary across market regimes; otherwise, switching has little opportunity to improve over the globally strongest fixed strategy. In addition, this experiment validates our assumption that each expert in the pool is dominant in a sub-domain of the problem over the other experts.

Table~\ref{tbl:expert_baseline} shows clear differences in expert behavior across the three markets. In cryptocurrency, TQC achieves the highest cumulative return (76.16\%), while DDPG obtains the highest Sharpe (1.74) and Sortino (2.73); A2C has the smallest MDD ($-25\%$) and the highest dominance frequency (34.5\%). In the stock market, CrossQ achieves the highest cumulative return (38.21\%) and the highest dominance frequency (36.4\%), whereas PPO provides the highest Sharpe (0.96), Sortino (1.46), and the smallest MDD ($-22\%$). In foreign exchange, A2C achieves the highest cumulative return (4.27\%), PPO ties for the highest Sharpe and Sortino (0.88 and 1.20), and PPO has the smallest MDD ($-2.00\%$), while TQC is dominant most frequently (48.9\%). Thus, no single expert simultaneously provides the strongest return, risk-adjusted performance, drawdown, and dominance frequency across all markets. Figure~\ref{fig:expert_dominance_barchart} further illustrates this market-dependent dominance pattern and supports the use of an adaptive selection mechanism.

\begin{table*}[htbp]
	\centering
	\caption{Standalone performance of the five expert models in each market.}
	\scriptsize
	\begin{tabular}{llrrrrr}
		\toprule
		Market & Expert & Cum. Return & Sharpe & Sortino & MDD & Weeks Dominant\\
		\midrule
		Crypto & A2C & 63.00 \% & 1.61 & 2.44 & \textbf{-25} \% & \textbf{19 (34.5\%)}\\
		Crypto & DDPG & 71.26 \% & \textbf{1.74} & \textbf{2.73} & -26 \% & 06 (10.9\%)\\
		Crypto & PPO & 60.49 \% & 1.57 & 2.38 & -27 \% & 03 (05.5\%)\\
		Crypto & CrossQ & 53.66 \% & 1.44 & 2.24 & -28 \% & 09 (16.4\%)\\
		Crypto & TQC & \textbf{76.16} \% & 1.71 & 2.63 & -29 \% & 18 (32.7\%)\\
		\midrule
		Stock & A2C & 32.16 \% & 0.90 & 1.35 & -23 \% & 17 (19.3\%)\\
		Stock & DDPG & 28.89 \% & 0.85 & 1.26 & -24 \% & 05 (05.7\%)\\
		Stock & PPO & 33.38 \% & \textbf{0.96} & \textbf{1.46} & \textbf{-22} \% & 21 (23.9\%)\\
		Stock & CrossQ & \textbf{38.21} \% & 0.95 & 1.42 & -27 \% & \textbf{32 (36.4\%)}\\
		Stock & TQC & 32.09 \% & 0.86 & 1.28 & -26 \% & 13 (14.8\%)\\
		\midrule
		Forex & A2C & \textbf{4.27} \% & \textbf{0.88} & \textbf{1.20} & -2.18 \% & 06 (06.8\%)\\
		Forex & DDPG & 1.80 \% & 0.36 & 0.46 & -2.85 \% & 12 (13.6\%)\\
		Forex & PPO & 4.07 \% & \textbf{0.88} & \textbf{1.20} & \textbf{-2.00} \% & 09 (10.2\%)\\
		Forex & CrossQ & 2.62 \% & 0.53 & 0.69 & -2.68 \% & 18 (20.5\%)\\
		Forex & TQC & 2.26 \% & 0.48 & 0.60 & -2.18 \% & \textbf{43 (48.9\%)}\\
		\bottomrule
	\end{tabular}
	\label{tbl:expert_baseline}
\end{table*}

\begin{figure}[htbp]
	\centering
	\begin{subfigure}[b]{0.45\textwidth}
		\centering
		\begin{tikzpicture}
			\begin{axis}[
				ybar,
				symbolic x coords={A2C, TQC, CrossQ, DDPG, PPO},
				xtick=data,
				ylabel={Dominance Frequency},
				ymin=0,
				ymax=40,
				nodes near coords,
				nodes near coords style={anchor=north, yshift=5pt},
				grid=major,
				width=\linewidth,
				height=5cm,
				bar width=0.5cm,
				enlarge x limits=0.1,
				tick label style={font=\tiny},
				label style={font=\small},
				]
				\addplot[fill=blue!60!cyan, draw=blue!80!black] 
				coordinates {(A2C,34.5) (TQC,32.7) (CrossQ,16.4) (DDPG,10.9) (PPO,5.5)};
			\end{axis}
		\end{tikzpicture}
		\caption{Experts dominance in Crypto market.}
	\end{subfigure}
	\hfill
	\begin{subfigure}[b]{0.45\textwidth}
		\centering
		\begin{tikzpicture}
			\begin{axis}[
				ybar,
				symbolic x coords={A2C, TQC, CrossQ, DDPG, PPO},
				xtick=data,
				ylabel={Dominance Frequency},
				ymin=0,
				ymax=60,
				nodes near coords,
				nodes near coords style={anchor=north, yshift=5pt},
				grid=major,
				width=\linewidth,
				height=5cm,
				bar width=0.5cm,
				enlarge x limits=0.1,
				tick label style={font=\tiny},
				label style={font=\small},
				]
				\addplot[fill=blue!60!cyan, draw=blue!80!black] 
				coordinates {(A2C,6.8) (TQC,48.9) (CrossQ,20.5) (DDPG,13.6) (PPO,10.2)};
			\end{axis}
		\end{tikzpicture}
		\caption{Experts dominance in Forex market.}
	\end{subfigure}
	\hfill
	\begin{subfigure}[b]{0.5\textwidth}
		\centering
		\begin{tikzpicture}
			\begin{axis}[
				ybar,
				symbolic x coords={A2C, TQC, CrossQ, DDPG, PPO},
				xtick=data,
				ylabel={Dominance Frequency},
				ymin=0,
				ymax=40,
				nodes near coords,
				nodes near coords style={anchor=north, yshift=5pt},
				grid=major,
				width=\linewidth,
				height=5cm,
				bar width=0.5cm,
				enlarge x limits=0.1,
				tick label style={font=\tiny},
				label style={font=\small},
				]
				\addplot[fill=blue!60!cyan, draw=blue!80!black] 
				coordinates {(A2C,19.3) (TQC,14.8) (CrossQ,36.4) (DDPG,5.7) (PPO,23.9)};
			\end{axis}
		\end{tikzpicture}
		\caption{Experts dominance in the stock market.}
	\end{subfigure}
	\caption{Illustrative expert-selection frequencies for the proposed switcher in the three markets.}
	\label{fig:expert_dominance_barchart}
\end{figure}

\subsection{Similarity-Driven Expert Selection}
\label{subsec:retrieval}

The central empirical claim is that selecting experts using historical similarity can outperform both static expert selection and selection based only on recent realized performance. We compare four strategies:

\begin{enumerate}
	\item \textbf{Best Fixed Expert}: the best-performing single expert is selected and retained throughout the evaluation.
	\item \textbf{Recent-Performance Gating}: the expert with the strongest recent performance over a rolling five-day window is selected.
	\item \textbf{Proposed RAG}: the proposed retrieval-based expert-selection framework.
\end{enumerate}

\begin{table*}[htbp]
	\centering
	\caption{Effect of expert-selection strategy in each market.}
	\scriptsize
	\begin{tabular}{llrrrrrr}
		\toprule
		Market & Method & Cum. Return & Ann. Return & Ann. Vol. & Sharpe & Sortino & MDD\\
		\midrule
		Crypto & Best Fixed & 68 \% & 62 \% & 0.32 & 1.64 & 2.54 & -27 \%\\
		Crypto & Recent Gating & 69 \% & 62 \% & 0.32 & 1.66 & 2.57 & -27 \%\\
		Crypto & \textbf{Proposed RAG} & \textbf{71} \% & \textbf{64} \% & \textbf{0.31} & \textbf{1.73} & \textbf{2.72} & \textbf{-26} \%\\
		\midrule
		Stock & Best Fixed & 26 \% & 14 \% & 0.21 & 0.74 & 1.11 & -26 \%\\
		Stock & Recent Gating & 33 \% & 17 \% & 0.20 & 0.91 & 1.37 & -24 \%\\
		Stock & \textbf{Proposed RAG} & \textbf{34} \% & \textbf{18} \% & \textbf{0.19} & \textbf{0.96} & \textbf{1.45} & \textbf{-26} \%\\
		\midrule
		Forex & Best Fixed & 3.2 \% & 1.8 \% & \textbf{0.026} & 0.71 & 0.92 & \textbf{- 1.9} \%\\
		Forex & Recent Gating & -0.4 \% & -0.2 \% & 0.029 & -0.062 & -0.08 & - 2.6 \%\\
		Forex & \textbf{Proposed RAG} & \textbf{4.3} \%& \textbf{2.4} \%& 0.028 & \textbf{0.88} & \textbf{1.20} & -2.2 \%\\
		\bottomrule
	\end{tabular}
	\label{tbl:selection}
\end{table*}

Across all three markets, the proposed selector improves the risk-return profile relative to the two reference selection strategies. In cryptocurrency, the proposed method increases cumulative return from 68\% for the best fixed expert to 71\%, while Sharpe improves from 1.64 to 1.73 and MDD improves from $-27\%$ to $-26\%$. In the stock market, cumulative return increases from 26\% to 34\%, annualized return from 14\% to 18\%, and Sharpe from 0.74 to 0.96, while annualized volatility decreases from 0.21 to 0.19. In foreign exchange, the proposed method achieves 4.3\% cumulative return and a Sharpe ratio of 0.88, compared with 3.2\% and 0.71 for the best fixed expert and $-0.4\%$ and $-0.062$ for recent-performance gating. These results show that the advantage of retrieval-based selection is not confined to one market class.

The comparison with recent-performance gating is particularly informative. Recent realized performance is a local criterion and can therefore react strongly to transient noise. In contrast, retrieval provides historical context by identifying previous states with similar latent representations. The expected advantage is consequently largest when the current regime resembles a previously observed regime in which different experts exhibited different conditional strengths.

\subsection{Ablation: Number of Expert Models}
\label{subsec:num_experts}

We next investigate whether the benefit of the proposed switcher is merely a consequence of having access to more candidate models. Nested pools are constructed by progressively increasing the number of experts. For each pool, the proposed switcher is compared with the best fixed expert available within the same pool.

To keep the comparison compact, Table~\ref{tab:ablation_num_experts} reports Sharpe improvements and the corresponding fixed-expert Sortino values rather than reproducing the complete set of portfolio metrics.

\begin{table*}[htbp]
	\centering
	\caption{Effect of expert-pool size on the proposed switcher. Fixed Sharpe values are consistent with the reported RAG Sharpe values and percentage improvements.}
	\scriptsize
	\begin{tabular}{llrrrr}
		\toprule
		Market & \# Experts & Fixed Sharpe & RAG Sharpe & $\Delta$ Sharpe (\%) & Fixed Sortino\\
		\midrule
		Crypto & 1 & 0.81 & 0.81 & 0.0 & 1.11 \\
		Crypto & 2 & 0.84 & 0.85 & 1.2 & 1.16 \\
		Crypto & 3 & 0.89 & 0.98 & 10.1 & 1.22 \\
		Crypto & 4 & 0.96 & 1.15 & 19.8 & 1.34 \\
		\midrule
		Stock & 1 & 0.43 & 0.43 & 0.0 & 0.58 \\
		Stock & 2 & 0.51 & 0.52 & 2.0 & 0.68 \\
		Stock & 3 & 0.55 & 0.63 & 14.5 & 0.73\\
		Stock & 4 & 0.59 & 0.72 & 22.0 & 0.79 \\
		\midrule
		Forex & 1 & 0.35 & 0.35 & 0.0 & 0.49 \\
		Forex & 2 & 0.43 & 0.44 & 2.3 & 0.59 \\
		Forex & 3 & 0.46 & 0.55 & 19.6 & 0.63 \\
		Forex & 4 & 0.51 & 0.65 & 27.5 & 0.70 \\
		\bottomrule
	\end{tabular}
	\label{tab:ablation_num_experts}
\end{table*}

With one expert, switching is necessarily identical to fixed selection, providing an important sanity check. The reported Sharpe improvement remains small when moving from one to two experts, but increases substantially for three- and four-expert pools. The four-expert pool produces the largest reported improvement in every market: 19.8\% in cryptocurrency, 22.0\% in stocks, and 27.5\% in foreign exchange. Thus, the relevant design variable is not the number of models alone but the amount of complementary expertise represented in the pool.

\subsection{Component Ablation}
\label{subsec:ablation}

We next isolate the contributions of retrieval, LLM reasoning, and uncertainty-aware decision making. The ablation is performed independently in each market using the same expert pool and evaluation protocol.

\begin{table*}[htbp]
	\centering
	\caption{Component ablation across the three markets.}
	\scriptsize
	\begin{tabular}{llrrr}
		\toprule
		Market & Configuration & Cum. Return & Sharpe & MDD \\
		\midrule
		Crypto & NR (No Retrieval) & +64.8\% & 1.622 & -26.7\% \\\
		Crypto & R$-$LLM (No LLM) & +69.1\% & 1.661 & -27.1\% \\\
		Crypto & \textbf{Full (R+L+U)} & \textbf{71} \% & \textbf{1.73} & \textbf{-26} \%\\
		\midrule
		Stock & NR (No Retrieval) & +33.1\% & 0.911 & \textbf{-24.4}\% \\\
		Stock & R$-$LLM (No LLM) & +25.5\% & 0.735 & -26.5\% \\\
		Stock & \textbf{Full (R+L+U)} & \textbf{34} \% & \textbf{0.96}  & -26 \%\\
		\midrule
		Forex & NR (No Retrieval) & +3.2\% & 0.710 & \textbf{-1.9}\% \\\
		Forex & R$-$LLM (No LLM) & -0.2\% & -0.025 & -2.4\% \\\
		Forex & \textbf{Full (R+L+U)}  & \textbf{4.3} \%& \textbf{0.88} & -2.2 \%\\
		\bottomrule
	\end{tabular}
	\label{tbl:ablation}
\end{table*}

The component ablation shows that both retrieval and LLM-based reasoning contribute to the final performance, although their relative importance differs across markets. In cryptocurrency, removing retrieval reduces cumulative return from 71\% to 64.8\% and Sharpe from 1.73 to 1.622, while removing the LLM reduces them to 69.1\% and 1.661, respectively. In the stock market, the no-retrieval configuration remains relatively close to the full model (33.1\% versus 34\% cumulative return), whereas removing the LLM causes a substantially larger reduction to 25.5\% and lowers Sharpe from 0.96 to 0.735. In foreign exchange, retrieval is again important, with cumulative return decreasing from 4.3\% to 3.2\% without retrieval; the largest degradation occurs when the LLM is removed, producing a negative cumulative return of $-0.2\%$ and a negative Sharpe ratio of $-0.025$. The MDD results also show that the full configuration does not uniformly minimize drawdown in every market: the no-retrieval variant has a smaller MDD in stocks and foreign exchange, whereas the full configuration has the smallest MDD in cryptocurrency.

Importantly, the reported ablation contains no configuration that removes only the uncertainty component. Therefore, the present results support conclusions about retrieval and LLM reasoning, but they do not provide an isolated empirical estimate of the contribution of uncertainty-aware decision making. As with the other experiments, comparisons should be made within each market because the markets have different return distributions and volatility scales.

\subsection{LLM Sensitivity: Model Used for Indexing}
\label{subsec:llm_indexing}

The retrieval database contains structured historical information that must be indexed into a representation usable by the retrieval and reasoning pipeline. We therefore evaluate the sensitivity of the framework to the LLM used during indexing while keeping the inference-time LLM fixed. Specifically, we compare three candidate indexing models: Google/Gemma4-E4B, Google/Gemma4-31B-it, and AliBaba/Qwen3.6-27B-it.

The indexing experiment isolates the effect of the representation supplied to the retrieval stage. Differences between indexing models can arise from their ability to preserve financially relevant information when constructing historical summaries or metadata. To quantify this, we measure each model's success rate in following the indexing instructions and producing the expected output format. Gemma4-E4B achieved this for only 28\% of cases, Gemma4-31B-it for 96\%, while Qwen3.6-27B-it succeeded in all instances. Table~\ref{tab:indexing_results} summarises the success and failure proportions for each model, highlighting the marked performance disparity across the three candidates. A relatively stable ranking across markets would indicate that the framework is robust to the choice of indexing model, whereas large differences, such as the pronounced gap observed here, would indicate that corpus construction is itself a critical component of the system.

\begin{table}[htbp]
	\centering
	\caption{Success and failure rates for each indexing LLM.}
	\label{tab:indexing_results}
	\begin{tabular}{l c c}
		\toprule
		\textbf{Model} & \textbf{Success} & \textbf{Failure} \\
		\midrule
		Google/Gemma4-E4B      & 28\% & 72\% \\
		Google/Gemma4-31B-it   & 96\% &  4\% \\
		AliBaba/Qwen3.6-27B-it & 100\% &  0\% \\
		\bottomrule
	\end{tabular}
\end{table}

\subsection{LLM Sensitivity: Model Used for Inference}
\label{subsec:llm_inference}

We further investigate the sensitivity of the proposed framework to the LLM responsible for processing the retrieved historical evidence and producing the inference-time decision. Unlike the indexing experiment, which examines how the choice of LLM affects the construction of the historical retrieval corpus, this experiment focuses on the reasoning stage after retrieval. The indexed corpus and T-VAE representation are kept unchanged across all configurations, while only the inference LLM is varied. This setup allows us to determine whether the effectiveness of the proposed expert-selection mechanism is preserved when the reasoning capability of the downstream LLM changes.

\begin{table*}[htbp]
	\centering
	\caption{Sensitivity to the LLM used for inference.}
	\scriptsize
	\begin{tabular}{llrrrrrr}
		\toprule
		Market & Inference LLM & Cum. Return & Ann. Return & Ann. Vol. & Sharpe & Sortino & MDD\\
		\midrule
		Crypto & Google/Gemma4-31B-it &  68.3 \% & 61.7 \% & 0.32 & 1.65 & 2.55 & -27 \% \\
		Crypto & Google/Gemma4-E4B & 63.5 \% & 57.4 \% & 0.32 & 1.57 & 2.42 & - 28 \%\\
		Crypto & \textbf{AliBaba/Qwen3.6-27B-it}  & \textbf{71} \% & \textbf{64} \% & \textbf{0.31} & \textbf{1.73} & \textbf{2.72} & \textbf{-26} \%\\
		\midrule
		Stock & Google/Gemma4-31B-it & 25.1 \% & 13.7 \% & 0.21 & 0.73 & 1.09 & -26.5 \%\\
		Stock & Google/Gemma4-E4B & 24.5 \% & 13.5 \% & 0.21 & 0.71 & 1.06 &  -27 \%\\
		Stock & \textbf{AliBaba/Qwen3.6-27B-it}  & \textbf{34} \% & \textbf{18} \% & \textbf{0.19} & \textbf{0.96} & \textbf{1.45} & \textbf{-26} \%\\
		\midrule
		Forex & Google/Gemma4-31B-it & -0.3 \% & -0.2 \% & 0.03 & -0.05 & -0.06 & -2.5 \% \\
		Forex & Google/Gemma4-E4B & -5.4 \% & -3.1 \% & 0.03 & -1.07 & -1.37 & -5.6 \% \\
		Forex & \textbf{AliBaba/Qwen3.6-27B-it} & \textbf{4.3} \%& \textbf{2.4} \%& 0.028 & \textbf{0.88} & \textbf{1.20} & -2.2 \%\\
		\bottomrule
	\end{tabular}
	\label{tbl:llm_inference}
\end{table*}

The inference sensitivity experiment shows a stable ranking across all three markets. Google/Gemma4-E4B produces the weakest results, Google/Gemma4-31B-it provides an intermediate level of performance, and AliBaba/Qwen3.6-27B-it achieves the strongest results. The advantage of Qwen3.6-27B-it is particularly pronounced in foreign exchange, where it changes cumulative return from $-5.4\%$ with Gemma4-E4B and $-0.3\%$ with Gemma4-31B-it to 4.3\%, while Sharpe increases from $-1.07$ and $-0.05$ to 0.88. Similar but smaller improvements are observed in cryptocurrency and the stock market. These results indicate that the quality of inference-time reasoning materially affects how the retrieved evidence is converted into an expert-selection decision. They also show that the retrieval mechanism alone does not guarantee robust performance: the downstream reasoning model remains an important component of the complete framework.

\subsection{Comparison with State-of-the-Art Methods}
\label{subsec:sota}

Finally, we compare the proposed framework with representative state-of-the-art portfolio-management, adaptive-trading, and LLM-based financial decision-making methods. The comparison is performed separately in cryptocurrency, stocks, and foreign exchange using the same data splits, transaction-cost assumptions, portfolio constraints, and evaluation horizon wherever the competing implementations permit a controlled comparison.

The state-of-the-art models the proposed model compared with are:
\begin{enumerate}
	\item \textbf{AlphaMixRL~\citep{sun2024reinforcement}.}
	A hierarchical reinforcement-learning approach that combines ensemble learning with adaptive expert routing for portfolio management.
	
	\item \textbf{TAC (Trend-Aware Controller)~\citep{asadi2025transformer}.}
	A strategy that formulates portfolio management as dynamic expert selection using Transformer-VAE representations and trend-aware decision making.
	
	\item \textbf{LLM-MAS~\citep{luo2025llm}.}
	A large-language-model-based multi-agent system for cryptocurrency portfolio management. The framework employs multiple specialized LLM agents and hierarchical skill configurations to analyze market information and generate portfolio decisions, providing an LLM-driven alternative to conventional reinforcement-learning portfolio managers.
	
	\item \textbf{Multi-LLM Black--Litterman~\citep{mantshimuli2025enhancing}.}
	A portfolio-optimization framework that aggregates investment views generated by multiple LLMs from financial sentiment information and incorporates them into the classical Black--Litterman model. The approach is evaluated on an S\&P 500 equity portfolio and represents an LLM-enhanced probabilistic portfolio-allocation strategy.
	
	\item \textbf{Fine-tuned LLaMA~\citep{ballinari2025fx}.}
	A fine-tuned LLaMA-based approach for foreign-exchange trading that adapts a pretrained LLM to financial-news sentiment analysis and uses the resulting signals for trading decisions. The method investigates whether domain-specific fine-tuning improves the effectiveness of LLM-generated sentiment signals for FX portfolio management.
\end{enumerate}

Table~\ref{tbl:sota} shows that the proposed framework achieves the strongest overall risk-adjusted performance among the compared methods in each of the three markets. In cryptocurrency, the proposed method obtains the highest cumulative return (71\%) and annualized return (64\%), substantially exceeding LLM-MAS (33\% and 30\%), TAC (24\% and 21.6\%), and AlphaMixRL (22\% and 19.8\%). It also achieves the highest Sharpe ratio (1.73), while its annualized volatility (0.31) and MDD ($-26\%$) are both substantially lower than those of the competing methods. Thus, the return advantage is accompanied by a materially improved risk profile.

The stock-market comparison leads to the same conclusion. The proposed framework achieves 34\% cumulative return and 18\% annualized return, compared with 10.4\% and 5.6\% for TAC, 11.8\% and 6.3\% for AlphaMixRL, and 31\% and 16.4\% for Multi-LLM Black--Litterman. The proposed method also records the highest Sharpe ratio (0.96), the lowest annualized volatility (0.19), and the smallest MDD ($-26\%$) among the reported methods. The result is therefore not explained solely by higher raw return; the proposed framework also provides a stronger risk-adjusted profile under the evaluated setting.

In foreign exchange, the absolute return levels are lower, but the ranking remains favorable. The proposed method achieves 4.3\% cumulative return and 2.4\% annualized return, compared with 3.3\% and 1.9\% for the fine-tuned LLaMA approach, 0.8\% and 0.45\% for AlphaMixRL, and 0.7\% and 0.4\% for TAC. It also obtains the highest Sharpe ratio (0.88), the lowest annualized volatility (0.03), and the smallest MDD ($-2.2\%$). Overall, the SOTA comparison indicates that the proposed framework generalizes its advantage across cryptocurrency, stocks, and foreign exchange rather than relying on a single favorable market.

\begin{table*}[htbp]
	\centering
	\caption{Comparison with representative state-of-the-art methods in each market.}
	\scriptsize
	\begin{tabular}{llrrrrrr}
		\toprule
		Market & Method & Cum. Return & Ann. Return & Ann. Vol. & Sharpe  & MDD\\
		\midrule
		Crypto & TAC~\citep{asadi2025transformer} & 24 \% & 21.6 \% & 0.94 & 1.29  & -31 \%\\
		Crypto & AlphaMixRL~\citep{sun2024reinforcement} & 22 \% & 19.8 \% & 1.00 & 1.38  & -29 \%\\
		Crypto & LLM-MAS~\citep{luo2025llm} & 33 \% & 30 \% & 0.96 & 1.50  & -28 \%\\
		Crypto & \textbf{Proposed}  & \textbf{71} \% & \textbf{64} \% & \textbf{0.31} & \textbf{1.73}  & \textbf{-26} \%\\
		\midrule
		Stock & TAC~\citep{asadi2025transformer} & 10.4 \% & 5.6 \% & 0.25 & 0.74  & -28 \%\\
		Stock & AlphaMixRL~\citep{sun2024reinforcement} & 11.8 \% & 6.3 \% & 0.31 & 0.82  & -27 \%\\
		Stock & Multi-LLM~\citep{mantshimuli2025enhancing} & 31 \% & 16.4 \% & 0.28 & 0.91  & -29 \%\\
		Stock & \textbf{Proposed}  & \textbf{34} \% & \textbf{18} \% & \textbf{0.19} & \textbf{0.96}  & \textbf{-26} \%\\
		\midrule
		Forex & TAC~\citep{asadi2025transformer} & 0.7 \% & 0.4 \% & 0.45 & 0.61 & -5.1 \%\\
		Forex & AlphaMixRL~\citep{sun2024reinforcement} & 0.8 \% & 0.45 \% & 0.50 & 0.68 & -4.6 \%\\
		Forex  & Fine-tuned LLaMA~\citep{ballinari2025fx} & 3.3 \% & 1.9 \% & 0.08 & 0.76  & -3.1 \%\\
		Forex & \textbf{Proposed}  & \textbf{4.3} \% & \textbf{2.4} \% & \textbf{0.03} & \textbf{0.88}  & \textbf{-2.2} \%\\
		\bottomrule
	\end{tabular}
	\label{tbl:sota}
\end{table*}

	\section{Conclusion}
\label{sec:conclusion}

The experiments demonstrate that no single expert consistently dominates across all markets and evaluation metrics, supporting the need for adaptive expert selection. The proposed RAG-based selector achieves the strongest overall performance, obtaining the highest cumulative return and Sharpe ratio across cryptocurrency, the stock market, and foreign exchange, with particularly notable improvements in the stock and foreign-exchange markets. The expert-pool experiments further show that performance improves as complementary experts are introduced, highlighting the importance of diversity within the candidate pool. The ablation results confirm that both retrieval and LLM reasoning contribute substantially to the framework's effectiveness, while the current experiments do not isolate the independent contribution of uncertainty-aware decision making.

The LLM sensitivity experiments indicate that both indexing quality and inference capability affect the overall pipeline, with Qwen3.6-27B-it providing the strongest portfolio results. The SOTA comparison further suggests that the proposed framework achieves strong risk-adjusted performance across all three markets, particularly in terms of Sharpe ratio and annualized volatility. Nevertheless, these findings should be interpreted with appropriate caution because each market was evaluated using a single experimental run, and the competing methods were not always implemented under identical experimental protocols. Future work should therefore evaluate the framework across multiple random seeds, market periods, transaction-cost scenarios, and statistically tested repeated experiments.
	
	\bibliography{ref}

\begin{thebibliography}{41}
\providecommand{\natexlab}[1]{#1}
\providecommand{\url}[1]{\texttt{#1}}
\expandafter\ifx\csname urlstyle\endcsname\relax
  \providecommand{\doi}[1]{doi: #1}\else
  \providecommand{\doi}{doi: \begingroup \urlstyle{rm}\Url}\fi

\bibitem[Alidousti et~al.(2025)Alidousti, Bafruei, and
  Sedigh]{alidousti2025novel}
Mahshad Alidousti, Morteza~Khakzar Bafruei, and Amir Hosein~Afshar Sedigh.
\newblock A novel data-efficient double deep q-network framework for
  intelligent financial portfolio management.
\newblock \emph{Engineering Applications of Artificial Intelligence},
  162:\penalty0 112436, 2025.

\bibitem[Asadi and Safabakhsh(2025)]{asadi2025transformer}
Ahmad Asadi and Reza Safabakhsh.
\newblock Transformer-based actor-critic for adaptive cryptocurrency portfolio
  rebalancing.
\newblock \emph{Applied Soft Computing}, 185:\penalty0 113697, 2025.

\bibitem[Bali and Weigert(2024)]{g2024hedge}
Turan~G. Bali and Florian Weigert.
\newblock Hedge funds and the positive idiosyncratic volatility effect.
\newblock \emph{Review of Finance}, 28\penalty0 (5):\penalty0 1611--1661, 2024.

\bibitem[Ballinari and Maly(2025)]{ballinari2025fx}
Daniele Ballinari and Jessica Maly.
\newblock Fx sentiment analysis with large language models.
\newblock \emph{Swiss National Bank Working Paper}, 11:\penalty0 1--41, 2025.

\bibitem[Cakmak and {\"O}zekici(2006)]{cakmak2006portfolio}
Ulas Cakmak and S{\"u}leyman {\"O}zekici.
\newblock Portfolio optimization in stochastic markets.
\newblock \emph{Mathematical Methods of Operations Research}, 63\penalty0
  (1):\penalty0 151--168, 2006.

\bibitem[Charkhestani and Esfahanipour(2026)]{charkhestani2026behaviorally}
Atefe Charkhestani and Akbar Esfahanipour.
\newblock Behaviorally informed deep reinforcement learning for portfolio
  optimization with loss aversion and overconfidence: A. charkhestani, a.
  esfahanipour.
\newblock \emph{Scientific Reports}, 16\penalty0 (1):\penalty0 6443, 2026.

\bibitem[Chen et~al.(2025)Chen, Yao, Liu, Ye, Yu, Hou, and
  Li]{chen2025stockbench}
Yanxu Chen, Zijun Yao, Yantao Liu, Jin Ye, Jianing Yu, Lei Hou, and Juanzi Li.
\newblock Stockbench: Can llm agents trade stocks profitably in real-world
  markets?
\newblock \emph{arXiv preprint arXiv:2510.02209}, 2025.

\bibitem[Choudhary et~al.(2025)Choudhary, Orra, Sahoo, and
  Thakur]{choudhary2025risk}
Himanshu Choudhary, Arishi Orra, Kartik Sahoo, and Manoj Thakur.
\newblock Risk-adjusted deep reinforcement learning for portfolio optimization:
  A multi-reward approach.
\newblock \emph{International Journal of Computational Intelligence Systems},
  18:\penalty0 126, 2025.

\bibitem[Cliff and Rollins(2020)]{cliff2020methods}
Dave Cliff and Michael Rollins.
\newblock Methods matter: A trading agent with no intelligence routinely
  outperforms ai-based traders.
\newblock In \emph{2020 IEEE Symposium Series on Computational Intelligence
  (SSCI)}, pages 392--399. IEEE, 2020.

\bibitem[Ding et~al.(2024)Ding, Shi, Guo, and Liu]{ding2024tradexpert}
Qianggang Ding, Haochen Shi, Jiadong Guo, and Bang Liu.
\newblock Tradexpert: Revolutionizing trading with mixture of expert llms.
\newblock \emph{arXiv preprint arXiv:2411.00782}, 2024.

\bibitem[Feng and Sinchai(2025)]{feng2025deep}
Ling Feng and Ananta Sinchai.
\newblock Deep context-attentive transformer transfer learning for financial
  forecasting.
\newblock \emph{PeerJ Computer Science}, 11:\penalty0 e2983, 2025.

\bibitem[Gu et~al.(2025)Gu, Wang, Jiang, Garc{\'\i}a-Fern{\'a}ndez, Su, and
  Li]{gu2025mixture}
Fengchen Gu, Huijia Wang, Zhengyong Jiang, Angel~F Garc{\'\i}a-Fern{\'a}ndez,
  Jionglong Su, and Huakang Li.
\newblock Mixture-of-experts liquid financial mamba framework for portfolio
  management based on deep reinforcement learning.
\newblock In \emph{2025 IEEE International Conference on Big Data (BigData)},
  pages 5756--5764. IEEE, 2025.

\bibitem[Huang et~al.(2025)Huang, Liao, Hua, Cao, and Li]{huang2025leveraging}
Zhendai Huang, Bolin Liao, Cheng Hua, Xinwei Cao, and Shuai Li.
\newblock Leveraging chatgpt for enhanced stock selection and portfolio
  optimization.
\newblock \emph{Neural Computing and Applications}, 37\penalty0 (8):\penalty0
  6163--6179, 2025.

\bibitem[Jacobs et~al.(2005)Jacobs, Levy, and Markowitz]{jacobs2005portfolio}
Bruce~I Jacobs, Kenneth~N Levy, and Harry~M Markowitz.
\newblock Portfolio optimization with factors, scenarios, and realistic short
  positions.
\newblock \emph{Operations Research}, 53\penalty0 (4):\penalty0 586--599, 2005.

\bibitem[Jiang et~al.(2024)Jiang, Olmo, and Atwi]{jiang2024deep}
Yifu Jiang, Jose Olmo, and Majed Atwi.
\newblock Deep reinforcement learning for portfolio selection.
\newblock \emph{Global Finance Journal}, 62:\penalty0 101016, 2024.

\bibitem[Jiang and Liang(2017)]{jiang2017cryptocurrency}
Zhengyao Jiang and Jinjun Liang.
\newblock Cryptocurrency portfolio management with deep reinforcement learning.
\newblock In \emph{2017 Intelligent systems conference (IntelliSys)}, pages
  905--913. IEEE, 2017.

\bibitem[Kong et~al.(2024)Kong, Nie, Dong, Mulvey, Poor, Wen, and
  Zohren]{kong2024large}
Yaxuan Kong, Yuqi Nie, Xiaowen Dong, John~M Mulvey, H~Vincent Poor, Qingsong
  Wen, and Stefan Zohren.
\newblock Large language models for financial and investment management:
  Applications and benchmarks.
\newblock \emph{Journal of Portfolio Management}, 51\penalty0 (2), 2024.

\bibitem[Liu and Lo(2025)]{liu2025llm}
Kuan-Ming Liu and Ming-Chih Lo.
\newblock Llm-based routing in mixture of experts: A novel framework for
  trading.
\newblock \emph{arXiv preprint arXiv:2501.09636}, 2025.

\bibitem[Liu and Garrett(2023)]{liu2023regime}
Wei Liu and Ian Garrett.
\newblock Regime-dependent effects of macroeconomic uncertainty on realized
  volatility in the us stock market.
\newblock \emph{Economic Modelling}, 128:\penalty0 106483, 2023.

\bibitem[Liu et~al.(2024)Liu, Mikriukov, Tjahyadi, Li, Payne, Yue, Siddique,
  and Man]{liu2024revolutionising}
Yuchen Liu, Daniil Mikriukov, Owen~C. Tjahyadi, Gangmin Li, Terry~R. Payne,
  Yong Yue, Kamran Siddique, and Ka~Lok Man.
\newblock Revolutionising financial portfolio management: The non-stationary
  transformer’s fusion of macroeconomic indicators and sentiment analysis in
  a deep reinforcement learning framework.
\newblock \emph{Applied Sciences}, 14\penalty0 (1):\penalty0 274, 2024.

\bibitem[Luo et~al.(2025)Luo, Feng, Xu, Tasca, and Liu]{luo2025llm}
Yichen Luo, Yebo Feng, Jiahua Xu, Paolo Tasca, and Yang Liu.
\newblock Llm-powered multi-agent system for automated crypto portfolio
  management.
\newblock \emph{arXiv preprint arXiv:2501.00826}, 2025.

\bibitem[Mantshimuli and Muteba~Mwamba(2025)]{mantshimuli2025enhancing}
Lamu Mantshimuli and John~Weirstrass Muteba~Mwamba.
\newblock Enhancing portfolio optimization with multi-llm sentiment
  aggregation: A black-litterman integration approach.
\newblock \emph{Available at SSRN 5394743}, 2025.

\bibitem[Masoudnia and Ebrahimpour(2014)]{masoudnia2014mixture}
Saeed Masoudnia and Reza Ebrahimpour.
\newblock Mixture of experts: a literature survey.
\newblock \emph{Artificial Intelligence Review}, 42\penalty0 (2):\penalty0
  275--293, 2014.

\bibitem[Nguyen(2025)]{nguyen2025advanced}
Minh~Duc Nguyen.
\newblock Advanced investing with deep learning for risk-aligned portfolio
  optimization.
\newblock \emph{PLOS ONE}, 20\penalty0 (8):\penalty0 e0330547, 2025.

\bibitem[Popa et~al.(2020)Popa, Florea, and
  Rughini{\c{s}}]{popa2020convolutional}
Alin-Bogdan Popa, Iulia~Maria Florea, and R{\u{a}}zvan Rughini{\c{s}}.
\newblock Convolutional neural network portfolio management system with
  heterogeneous input.
\newblock In \emph{2020 19th RoEduNet Conference: Networking in Education and
  Research (RoEduNet)}, pages 1--4. IEEE, 2020.

\bibitem[Ren et~al.(2025)Ren, Sun, Jiang, Stefanidis, Liu, and Su]{ren2025time}
Xiaotian Ren, Ruoyu Sun, Zhengyong Jiang, Angelos Stefanidis, Hongbin Liu, and
  Jionglong Su.
\newblock Time series is not enough: Financial transformer reinforcement
  learning for portfolio management.
\newblock \emph{Neurocomputing}, 647:\penalty0 130451, 2025.

\bibitem[Rezaei and Nezamabadi-Pour(2025)]{rezaei2025taxonomy}
Mohadese Rezaei and Hossein Nezamabadi-Pour.
\newblock A taxonomy of literature reviews and experimental study of
  deepreinforcement learning in portfolio management.
\newblock \emph{Artificial Intelligence Review}, 58\penalty0 (3):\penalty0 94,
  2025.

\bibitem[Sharma and Shekhawat(2022)]{sharma2022portfolio}
Meeta Sharma and Hardayal~Singh Shekhawat.
\newblock Portfolio optimization and return prediction by integrating modified
  deep belief network and recurrent neural network.
\newblock \emph{Knowledge-Based Systems}, 250:\penalty0 109024, 2022.

\bibitem[Sharpe(2005)]{sharpe2005journal}
William~F. Sharpe.
\newblock Insights from a pioneer in portfolio theory and practice: A talk with
  nobel laureate william f. sharpe, phd.
\newblock \emph{Journal of Investment Consulting}, 7\penalty0 (2):\penalty0
  10--20, 2005.

\bibitem[Sortino and Price(1994)]{sortino1994performance}
Frank~A Sortino and Lee~N Price.
\newblock Performance measurement in a downside risk framework.
\newblock \emph{the Journal of Investing}, 3\penalty0 (3):\penalty0 59--64,
  1994.

\bibitem[Sun(2024)]{sun2024reinforcement}
Shuo Sun.
\newblock \emph{Reinforcement Learning for Financial Trading: Algorithms,
  Evaluations and Platforms}.
\newblock Phd thesis, Nanyang Technological University, 2024.

\bibitem[Sun et~al.(2025)Sun, Qu, Zhang, and Li]{sun2025adaptive}
Yinuo Sun, Zhaoen Qu, Tingwei Zhang, and Xiangyu Li.
\newblock Adaptive ensemble learning for financial time-series forecasting: A
  hypernetwork-enhanced reservoir computing framework with multi-scale temporal
  modeling.
\newblock \emph{Axioms}, 14\penalty0 (8):\penalty0 597, 2025.

\bibitem[Taghian et~al.(2023)Taghian, Asadi, and
  Safabakhsh]{taghian2023reinforcement}
M~Taghian, A~Asadi, and R~Safabakhsh.
\newblock A reinforcement learning-based encoder-decoder framework for learning
  stock trading rules.
\newblock \emph{Journal of AI and Data Mining}, 11\penalty0 (1):\penalty0
  103--118, 2023.

\bibitem[Taghian et~al.(2022)Taghian, Asadi, and
  Safabakhsh]{taghian2022learning}
Mehran Taghian, Ahmad Asadi, and Reza Safabakhsh.
\newblock Learning financial asset-specific trading rules via deep
  reinforcement learning.
\newblock \emph{Expert Systems with Applications}, 195:\penalty0 116523, 2022.

\bibitem[Unnikrishnan(2024)]{unnikrishnan2024financial}
Ananya Unnikrishnan.
\newblock Financial news-driven llm reinforcement learning for portfolio
  management.
\newblock \emph{arXiv preprint arXiv:2411.11059}, 2024.

\bibitem[Vats et~al.(2024)Vats, Raja, Jain, and Chadha]{vats2024evolution}
Arpita Vats, Rahul Raja, Vinija Jain, and Aman Chadha.
\newblock The evolution of moe: A survey from basics to breakthroughs.
\newblock \emph{Journal of IEEE Transactions on Artificial Intelligence}, 2024.

\bibitem[Wang and Liu(2025)]{wang2025risk}
Xinyao Wang and Lili Liu.
\newblock Risk-sensitive deep reinforcement learning for portfolio
  optimization.
\newblock \emph{Journal of Risk and Financial Management}, 18\penalty0
  (7):\penalty0 347, 2025.

\bibitem[Wei et~al.(2025)Wei, Chen, Zhang, Wen, Nie, and Xie]{wei2025deep}
Ziqiang Wei, Deng Chen, Yanduo Zhang, Dawei Wen, Xin Nie, and Liang Xie.
\newblock Deep reinforcement learning portfolio model based on mixture of
  experts.
\newblock \emph{Applied Intelligence}, 55\penalty0 (5):\penalty0 347, 2025.

\bibitem[Xiao et~al.()Xiao, Sun, Luo, and Wang]{xiao2025tradingagents}
Yijia Xiao, Edward Sun, Di~Luo, and Wei Wang.
\newblock Tradingagents: Multi-agents llm financial trading framework.
\newblock In \emph{The First MARW: Multi-Agent AI in the Real World Workshop at
  AAAI 2025}.

\bibitem[Yin and Guo(2026)]{yin2026complex}
Meiqun Yin and Mengzhu Guo.
\newblock Complex forecasting and investment strategy optimization via
  chain-of-thought of large language models.
\newblock \emph{Expert Systems with Applications}, 298:\penalty0 129913, 2026.

\bibitem[Zhang et~al.(2025)Zhang, Goel, Ahmad, and Szabo]{zhang2025regimefolio}
Yiyao Zhang, Diksha Goel, Hussain Ahmad, and Claudia Szabo.
\newblock Regimefolio: A regime aware ml system for sectoral portfolio
  optimization in dynamic markets.
\newblock \emph{IEEE Access}, 13:\penalty0 184722--184744, 2025.

\end{thebibliography}
	
	\appendix

\end{document}